\theoremstyle{plain}
\newtheorem{lemma}{Lemma}
\newtheorem{theorem}{Theorem}
\newtheorem{definition}{Definition}
\newtheorem{assumption}{Assumption}
\newtheorem{corollary}{Corollary}
\DeclareMathOperator{\HPD}{HPD}
\DeclareMathOperator*{\argmin}{arg~min}
\DeclareMathOperator*{\argmax}{arg~max}
\DeclareMathOperator*{\minimise}{minimise}
\DeclareMathOperator{\subjectto}{subject~to~}
\title{Multiple scan data association by \\ convex variational inference (extended version)}
\title{Multiple scan data association by \\ convex variational inference}
\author{Jason~L.~Williams,~\IEEEmembership{Senior Member,~IEEE} 
and Roslyn~A.~Lau,~\IEEEmembership{Student Member,~IEEE} 
\thanks{J.~L.~Williams (e-mail: Jason.Williams@dst.defence.gov.au) is with the National Security, Intelligence, Surveillance and Reconnaissance Division, Defence Science and Technology Group, Australia, and the School of Electrical Engineering and Computer Science, Queensland University of Technology, Australia. R.~A.~Lau (e-mail: Roslyn.Lau@dst.defence.gov.au) is with the Maritime Division, Defence Science and Technology Group, Australia and the Research School of Computer Science, Australian National University, Australia.} %
\vspace{-1pt}
}
\begin{document}

\maketitle

\begin{abstract}
Data association, the reasoning over correspondence between targets and measurements, is a problem of fundamental importance in target tracking. Recently, belief propagation (BP) has emerged as a promising method for estimating the marginal probabilities of measurement to target association, providing fast, accurate estimates. The excellent performance of BP in the particular formulation used may be attributed to the convexity of the underlying free energy which it implicitly optimises. This paper studies multiple scan data association problems, i.e., problems that reason over correspondence between targets and several sets of measurements, which may correspond to different sensors or different time steps. We find that the multiple scan extension of the single scan BP formulation is non-convex and demonstrate the undesirable behaviour that can result. A convex free energy is constructed using the recently proposed fractional free energy (FFE). A convergent, BP-like algorithm is provided for the single scan FFE, and employed in optimising the multiple scan free energy using primal-dual coordinate ascent. Finally, based on a variational interpretation of joint probabilistic data association (JPDA), we develop a sequential variant of the algorithm that is similar to JPDA, but retains consistency constraints from prior scans. The performance of the proposed methods is demonstrated on a bearings only target localisation problem.
\end{abstract}

\section{Introduction}
\label{sec:Introduction}

Multiple target tracking is complicated by data association, the unknown correspondence between measurements and targets. The classical problem arises under the assumption that measurements are received in scans (i.e., a collection of measurements made at a single time), and that within each scan, each target corresponds to at most one measurement, and each measurement corresponds to at most one target.

Techniques for addressing data association may be classified as either single scan (considering a single scan of data at a time) or multiple scan (simultaneously considering multiple scans), and as either maximum a posteriori (MAP) (finding the most likely correspondence), or marginal-based (calculating the full marginal distribution for each target). Common methods include:
\begin{enumerate}
\item Global nearest neighbour (GNN), e.g., \cite{BlaPop99}, is a single scan MAP method, which finds the MAP correspondence in the latest scan, and proceeds to the next scan assuming that correspondence was correct
\item Multiple hypothesis tracking (MHT) \cite{Rei79,Kur90,Bla04} is a multiple scan MAP method, which in each scan seeks to find the MAP correspondence over a recent history of scans
\item Joint probabilistic data association (JPDA) \cite{ForBar83} is a single scan marginal-based method, which calculates the marginal distribution of each target, and proceeds by approximating the joint distribution as the product of its marginals
\end{enumerate}
Classical JPDA additionally approximates the distribution of each target as a moment-matched Gaussian distribution; in this paper, we use the term JPDA more generally to refer to the approach that retains the full marginal distribution of each target in a manner similar to \cite{Pao94,VerMas05,HorMas06,MusEva09,Wil12}.

Compared to GNN, MHT and JPDA, multiple scan variants of JPDA, e.g., \cite{Roe95}, have received less attention. One may posit that this is due to their formidable computational complexity: While there exist fast approximations to the multiple scan MAP problem such as Lagrangian relaxation \cite{PatPop00,PooGad06}, no such equivalents have existed for either single scan or multiple scan JPDA.

Variational inference (e.g., \cite{WaiJor08,KolFri09}) describes the collection of methods that use optimisation (or calculus of variations) to approximate difficult inference problems in probabilistic graphical models (PGMs). Methods within this framework include belief propagation (BP) \cite{Pea88,YedFre03}, mean field (MF) \cite{Jaa00},\footnote{Mean field is also referred to as \textit{variational Bayes}; following \cite{WaiJor08}, we use the term \textit{variational inference} more generally, to refer to the entire family of optimisation-based methods.} hybrid BP/MF approaches \cite{KirMan10,RieKir13}, tree-reweighted sum product (TRSP) \cite{WaiJaa05} and norm-product BP (NPBP) \cite{HazSha10}. Excellent performance has been demonstrated in a variety of problems, typified by the recognition that turbo coding is an instance of BP \cite{McEMac98}.

Variational inference was first applied to data association in \cite{CheWai03,CheCet05,CheWai06}, addressing the problem of distributed tracking using wireless sensor networks. The problem was formulated with vertices corresponding to targets and sensors, where sensor nodes represent the joint association of all sensor measurements in the scan to targets. A related sensor network application was studied in \cite{GniMih09}.

In contrast to these methods, which hypothesise the joint association of all sensor measurements via a single variable, the approach in \cite{CheKro08,HuaJeb09,WilLau10,CheKro10,WilLau12} formulates the single scan problem in terms of a bipartite graph, where vertices hypothesise the measurement associated with a particular target, or the target associated with a particular measurement. Empirically, it was found that BP converges reliably and produces excellent estimates of both the marginal association probabilities \cite{WilLau10,WilLau12} and the partition function \cite{CheKro10}. Convergence of BP in the two related formulations was proven in \cite{Von10,WilLau10a}. 

In \cite{Von13} it was shown that, when correctly parameterised, the variational inference problem that underlies the bipartite formulation is convex. This may be understood to be the source of the empirically observed robustness of the approach when applied to the bipartite model. For example, it was shown in \cite{MurWei99} that the approximations produced by BP tend to be either very good, or very bad. This may be understood through the intuition that BP converges to a ``good'' approximation if the objective function that it implicitly optimises is locally convex in the area between the starting point and the optimal solution, and a ``bad'' approximation if it is non-convex. Accordingly, if the problem is globally convex, a major source of degenerate cases is eliminated.

Various applications merging the BP formulation in \cite{WilLau12} with MF (as proposed in \cite{RieKir13}) and expectation maximisation (EM) were examined in \cite{LauWil13,TurBot14,LauWil16,RahSel16,LanPan16}. PGM methods provide a path for extending the bipartite model to multiple scan problems; this has been studied in \cite{MeyBra15,MeyBra17}, which extends the single scan BP formulation of \cite{WilLau12} to multiple scans using a restricted message passing schedule, rather than optimising the variational problem to convergence. An alternative PGM formulation of the same problem was utilised in \cite{FraSmy14} for the purpose of parameter identification. However, these approaches lose the convexity property of the single scan bipartite formulation, which is understood to be the source of the robustness of this special case.

\subsection{Contributions}
\label{ss:Contributions}
This paper addresses the multiple scan data association problem using a convexification of the multiple scan model. We consider methods that optimise the variational problem to convergence, rather than using restricted message schedules as proposed in \cite{MeyBra15,MeyBra17}. A preliminary look at the convergence and performance of BP in multiple scan problems was also included in \cite{WilLau10}. In section \ref{sec:Experiments}, we perform a thorough evaluation of these methods on a bearings only localisation problem, and find that each of these methods can, in a challenging environment, give vanishingly small likelihood to the true solution in a significant portion of cases.

While our preliminary study \cite{Wil14a} applying an extension of the fractional free energy (FFE) (introduced in \cite{CheYed13}) to tracking problems showed promise, the absence of a rapidly converging, BP-like algorithm for solving it has limited its practical use. In this paper, we provide such an algorithm, and prove its convergence.

Subsequently, the FFE is used as a building block in a multiple scan formulation, which is shown in section \ref{sec:Experiments} to address the undesirable behaviour of the previous methods. The proposed method results in a convex variational problem for the multiple scan model, and a convergent algorithm for solving the problem is developed. Importantly, association consistency constraints from previous scans are retained. The sequential version of the algorithm is motivated by a new variational interpretation of JPDA.

\section{Background and Model}
\label{sec:Background}

We use the abbreviated notation $p(x)$, $p(z|x)$, etc\xspace, to represent the probability density function (PDF) or point mass function (PMF) of the random variable corresponding to the value $x$, $z$ conditioned on $x$, etc\xspace. 

\subsection{Probabilistic graphical models and variational inference}
\label{ss:GraphicalModels}
{\noindent}PGMs\cite{Lau96,WaiJor08,KolFri09} aim to represent and manipulate the joint probability distributions of many variables efficiently by exploiting factorisation. The Kalman filter \cite{Kal60} and the hidden Markov model (HMM) \cite{Rab89} are two examples of algorithms that exploit sparsity of a particular kind (i.e.\xspace, a Markov chain) to efficiently conduct inference on systems involving many random variables. Inference methods based on the PGM framework generalise these algorithms to a wider variety of state spaces and dependency structures. 

PGMs have been developed for undirected graphical models (Markov random fields), directed graphical models (Bayes nets) and factor graphs. In this work we consider a subclass of pairwise undirected models, involving vertices (i.e.\xspace, random variables) $v\in\mathcal{V}$, and edges (i.e.\xspace, dependencies) $e\in\mathcal{E}\subset\mathcal{V}\times\mathcal{V}$, and where the joint distribution can be written as:\footnote{In the general setting, the joint distribution is a product of maximal cliques \cite[p9]{WaiJor08}. Since the graph is undirected, we assume that $\mathcal{E}$ is symmetric, i.e.\xspace, if $(i,j)\in\mathcal{E}$ then $(j,i)\in\mathcal{E}$. We need only incorporate one of these two factors in the distribution.}
\[
p(x_{\mathcal{V}}) \propto \prod_{v\in\mathcal{V}}\psi_v(x_v)\prod_{(i,j)\in\mathcal{E}}\psi_{i,j}(x_i,x_j).
\]
As an example, a Markov chain involving variables $(x_1,\dots,x_T)$ may be formulated by setting $\psi_1(x_1) = p(x_1)$, $\psi_t(x_t)=1$ for $t>1$, and edges $\psi_{t-1,t}(x_{t-1},x_t) = p(x_t|x_{t-1})$, $t\in\{2,\dots,T\}$ representing the Markov transition kernels, although other formulations are possible. 

Exact inference can be conducted on tree-structured graphs using belief propagation (BP), which operates by passing messages between neighbouring vertices. We denote by $\mu_{i\rightarrow j}(x_j)$ the message sent from vertex $i\in\mathcal{V}$ to vertex $j\in\mathcal{V}$ where $(i,j)\in\mathcal{E}$. The iterative update equations are then:
\begin{equation}
\mu_{i\rightarrow j}(x_j) \propto \sum_{x_i}\psi_{i,j}(x_i,x_j)\psi_i(x_i) \prod_{(j',i)\in\mathcal{E}, j'\neq j}\mu_{j'\rightarrow i}(x_i).
\label{eq:BPMessage}
\end{equation}
This is also known as the \emph{sum-product} algorithm. If the summations are replaced with maximisations, then we arrive at max-product BP, which generalises the Viterbi algorithm \cite{Vit67}, providing the MAP joint state of all variables in the tree-structured graph. At convergence of sum-product BP, the marginal distribution at a vertex $v$ can be calculated as:
\begin{equation}
p(x_v) \propto \psi_v(x_v)\prod_{(v,i)\in\mathcal{E}} \mu_{i\rightarrow v}(x_v). \label{eq:BPMarginal}
\end{equation}
In the case of a Markov chain, if all vertices are jointly Gaussian, BP is equivalent to a Kalman smoother. Similarly, if all vertices are discrete, BP is equivalent to inference on an HMM using the forward-backward algorithm. BP unifies these algorithms and extends them from chains to trees. 

Inference in cyclic graphs (graphs that have cycles, i.e.\xspace, that are not tree-structured) is far more challenging. Conceptually, one can always convert an arbitrary cyclic graph to a tree by merging vertices (e.g.\xspace, so-called \emph{junction tree} representations) \cite{Lau96,KolFri09}, but in practical problems, the dimensionality of the agglomerated variables tends to be prohibitive. BP may be applied to cyclic graphs; practically, this simply involves repeated application of (\ref{eq:BPMessage}) until convergence occurs (i.e.\xspace, until the maximum change between subsequent messages is less than a pre-set threshold). Unfortunately, this is neither guaranteed to converge to the right answer, nor to converge at all. Nevertheless, it has exhibited excellent empirical performance in many practical problems \cite{MurWei99}. For example, the popular iterative turbo decoding algorithm has been shown to be an instance of BP applied to a cyclic graph \cite{McEMac98}. 

The current understanding of BP in cyclic graphs stems from \cite{YedFre03}. It has been shown (e.g.\xspace, \cite[Theorem 3.4]{WaiJor08}) that one can recover exact marginal probabilities from an optimisation of a convex function known as the Gibbs free energy. In the single-vertex case (or if all variables are merged into a single vertex), this can be written as described in lemma \ref{lem:EntOpt}.
\begin{lemma}\label{lem:EntOpt}
The Gibbs free energy variational problem for a single random variable $x$ can be written as:
\begin{align}
\minimise_{q(x)} & - H(x) -\mathbb{E}[\log \psi(x) ]  \label{eq:SimpleGibbsObjective} \\
\subjectto & q(x) \geq 0, \quad \sum_x q(x) = 1,
\end{align}
where $\mathbb{E}[\log \psi(x)] \triangleq \sum_x q(x) \log \psi(x)$, and $H(x)=-\mathbb{E}[\log q(x)]$ is the entropy of the distribution $q(x)$ (all expectations and entropies are under the distribution $q$). The solution of the optimisation is 
\(
q(x) = \frac{\psi(x)}{\sum_{x'}\psi(x')} \propto \psi(x)
\).
\end{lemma}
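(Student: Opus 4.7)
The plan is to recognise the objective as a Kullback--Leibler divergence, up to an additive constant, between the decision variable $q$ and the normalised version of $\psi$. Writing out the entropy and expectation terms explicitly gives
\[
-H(x) - \mathbb{E}[\log\psi(x)] = \sum_x q(x)\log\frac{q(x)}{\psi(x)}.
\]
Letting $Z = \sum_{x'}\psi(x')$ and multiplying and dividing by $Z$ inside the logarithm, this becomes $\mathrm{KL}(q\,\|\,\psi/Z) - \log Z$. Since $\log Z$ is constant in $q$, the variational problem reduces to minimising a KL divergence. Gibbs' inequality then gives $\mathrm{KL}(q\,\|\,\psi/Z)\ge 0$ with equality if and only if $q(x)=\psi(x)/Z$, which immediately yields the claimed minimiser.

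For completeness, I would also include the Lagrangian derivation, since subsequent sections rely on the primal--dual picture. Introducing a multiplier $\lambda$ for the normalisation constraint and (tentatively) ignoring the inequality constraint, the stationarity condition $\partial\mathcal{L}/\partial q(x)=0$ gives $\log q(x) + 1 - \log\psi(x) + \lambda = 0$, hence $q(x)\propto\psi(x)$, with $\lambda$ determined by $\sum_x q(x)=1$. Non-negativity is automatic because $\psi(x)\ge 0$ (it arises as a factor in a joint distribution), so the inequality constraints are indeed inactive and no KKT complications appear.

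Finally, I would verify that this stationary point is the unique global minimum by noting that the negative entropy $\sum_x q(x)\log q(x)$ is strictly convex on the probability simplex, while $-\mathbb{E}[\log\psi(x)]$ is linear in $q$; the sum is therefore strictly convex on a convex feasible set, so any stationary point is the unique global minimiser.

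The main obstacle is essentially bookkeeping rather than conceptual: one must handle the possibility that $\psi(x)=0$ for some $x$, using the convention $0\log 0 = 0$ and observing that the corresponding $q(x)$ is forced to zero, and one must assume $\psi$ is not identically zero so that $Z>0$ and the division by $Z$ is well defined. With these mild caveats, both the KL-divergence argument and the Lagrangian argument go through without further work.
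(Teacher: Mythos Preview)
Your proposal is correct and takes essentially the same approach as the paper: the paper states the lemma without a formal proof, noting only that the objective in (\ref{eq:SimpleGibbsObjective}) ``can be recognised as the Kullback--Leibler (KL) divergence between $q(x)$ and the (unnormalised) distribution $\psi(x)$,'' which is precisely your first argument. Your additional Lagrangian and strict-convexity remarks go beyond what the paper provides but are entirely consistent with it.
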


Similar expressions apply for continuous random variables, replacing sums with integrals. The objective in (\ref{eq:SimpleGibbsObjective}) can be recognised as the Kullback-Leibler (KL) divergence between $q(x)$ and the (unnormalised) distribution $\psi(x)$. If the graph is a tree, the entropy can be decomposed as:
\begin{align}
H(x) &= \sum_{v\in\mathcal{V}} H(x_v) - \sum_{(i,j)\in\mathcal{E}} I(x_i;x_j),  \label{eq:TreeEntropy} \\
I(x_i;x_j) &= H(x_i) + H(x_j) - H(x_i,x_j). \label{eq:MI}
\end{align}
$I(x_i;x_j)$ is the mutual information between $x_i$ and $x_j$ \cite{CovTho91}. Accordingly, the variational problem can be written as:
\begin{align}
\minimise_{q(x_v),q(x_i,x_j)} & -\sum_{v\in\mathcal{V}} \left\{ H(x_v) + \mathbb{E}[\log\psi_v(x_v)] \right\} \notag\\
- \sum_{(i,j)\in\mathcal{E}}& \left\{ -I(x_i;x_j) + \mathbb{E}[\log\psi_{i,j}(x_i,x_j)] \right\} \label{eq:TreeGibbsObjective} \\
\subjectto & q(x_i,x_j) \geq 0 \;\forall\; (i,j)\in\mathcal{E}, \;\forall\; x_i,x_j \label{eq:TreePolytope1} \\
& \sum_{x_v} q(x_v) = 1 \;\forall\; v\in\mathcal{V} \label{eq:TreePolytope2} \\
& \sum_{x_j} q(x_i,x_j) = q(x_i) \;\forall\; (i,j)\in\mathcal{E} \label{eq:TreePolytope3} \\
& \sum_{x_i} q(x_i,x_j) = q(x_j)\;\forall\; (i,j)\in\mathcal{E}. \label{eq:TreePolytope4}
\end{align}
For tree-structured graphs, BP can be shown to converge to the optimal value of this convex, variational optimisation problem. It has further been shown that the feasible set described by (\ref{eq:TreePolytope1})-(\ref{eq:TreePolytope4}) is exact, i.e., any feasible solution can be obtained by a valid joint distribution, and any valid joint distribution maps to a feasible solution.

If a graph contains a leaf vertex $x_i$ that is connected only to vertex $x_j$ via factor $\psi_{i,j}(x_i,x_j)$,\footnote{Without loss of generality, assume that neither vertex has a vertex factor, as this can be incorporated into the edge factor.} then inference can be performed equivalently by eliminating vertex $x_i$, and replacing the vertex factor for $x_j$ with \cite[ch 9]{KolFri09}
\begin{equation}\label{eq:VariableElimination}
\tilde{\psi}_j(x_j) =  \sum_{x_i} \psi_{i,j}(x_i,x_j).
\end{equation}
Given the resulting marginal distribution for $p(x_j)$ (or an approximation thereof), the pairwise joint distribution (or belief) of $(x_i,x_j)$ can be reconstructed as:
\begin{equation} \label{eq:VariableEliminationRecovery}
p(x_i,x_j) = p(x_j) \frac{\psi_{i,j}(x_i,x_j)}{\tilde{\psi}_j(x_j)}.
\end{equation}
Lemma \ref{lem:CondEntOpt} provides a variational viewpoint of vertex elimination, interpreting it as a partial minimisation of the pairwise joint of the neighbour and leaf, conditioned on the marginal distribution of the neighbour of the leaf. The theorem uses the conditional entropy, defined as: \cite{CovTho91}
\begin{align}
H(x_i|x_j) &= -\sum_{x_j}q(x_j)\sum_{x_i}q(x_i|x_j)\log q(x_i|x_j) \\
&= -\sum_{x_i}\sum_{x_j}q(x_i,x_j)\log \frac{q(x_i,x_j)}{\sum_{x_i'}q(x_i',x_j)}.
\end{align}
Conditional entropy was shown to be a concave function of the joint in \cite{GloJaa07b}. Note that, due to marginalisation constraints,
\begin{equation}
H(x_i|x_j) = H(x_i,x_j) - H(x_j) = H(x_i) - I(x_i;x_j).
\end{equation}

\begin{lemma}\label{lem:CondEntOpt}
Let $J[q(x_j)]$ be the solution of the following optimisation problem:
\begin{align}
J[q(x_j)] = \minimise_{q(x_i,x_j)\geq 0} \; & -H(x_i|x_j) - \mathbb{E}[\log\psi_{i,j}(x_i,x_j)] \label{eq:CondEntOptObj}\\
\subjectto \; & \sum_{x_i} q(x_i,x_j) = q(x_j) \;\forall\; x_j.
\end{align}
Then
\begin{equation}
J[q(x_j)] = -\sum_{x_j} q(x_j)\log\sum_{x_i}\psi_{i,j}(x_i,x_j),
\end{equation}
and the minimum of the optimisation of (\ref{eq:CondEntOptObj}) is attained at $q(x_i,x_j) = q(x_j)q(x_i|x_j)$, where
\begin{equation}
q(x_i|x_j) = \frac{\psi_{i,j}(x_i,x_j)}{\sum_{x_i'}\psi_{i,j}(x_i',x_j)}.
\end{equation}
\end{lemma}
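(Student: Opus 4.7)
The strategy is to reduce the problem to a family of independent instances of Lemma \ref{lem:EntOpt}, parameterised by $x_j$. The only nontrivial coupling between different values of $x_j$ in the objective is through the marginalisation constraint, and this turns out to decouple cleanly once one writes the joint in the form $q(x_i,x_j)=q(x_j)q(x_i|x_j)$.

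First I would expand the two terms of the objective \eqref{eq:CondEntOptObj} using the defining expressions for $H(x_i|x_j)$ and $\mathbb{E}[\log\psi_{i,j}]$, and combine them into a single double sum:
\begin{equation}
-H(x_i|x_j)-\mathbb{E}[\log\psi_{i,j}(x_i,x_j)] = \sum_{x_j} q(x_j)\sum_{x_i} q(x_i|x_j)\log\frac{q(x_i|x_j)}{\psi_{i,j}(x_i,x_j)}.
\end{equation}
The marginalisation constraint $\sum_{x_i}q(x_i,x_j)=q(x_j)$ forces $\sum_{x_i}q(x_i|x_j)=1$ for every $x_j$ with $q(x_j)>0$ (and for $q(x_j)=0$ that term contributes nothing, so any choice is allowed). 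Hence, with $q(x_j)$ treated as a fixed weight, the optimisation decomposes into a separate inner problem for each $x_j$, namely the minimisation of $\sum_{x_i} q(x_i|x_j)\log[q(x_i|x_j)/\psi_{i,j}(x_i,x_j)]$ over a probability distribution in $x_i$.

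Each inner problem is precisely the form handled by Lemma \ref{lem:EntOpt}, with $\psi_{i,j}(\cdot,x_j)$ playing the role of $\psi$. Applying that lemma gives the optimiser
\begin{equation}
q^\star(x_i|x_j) = \frac{\psi_{i,j}(x_i,x_j)}{\sum_{x_i'}\psi_{i,j}(x_i',x_j)},
\end{equation}
and the minimum value of the inner problem equals $-\log\sum_{x_i}\psi_{i,j}(x_i,x_j)$. Multiplying by the weight $q(x_j)$ and summing over $x_j$ yields the stated $J[q(x_j)] = -\sum_{x_j}q(x_j)\log\sum_{x_i}\psi_{i,j}(x_i,x_j)$, and the joint optimiser is $q(x_i,x_j)=q(x_j)q^\star(x_i|x_j)$ as claimed.

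The main obstacle is really only a bookkeeping one: ensuring that the chain-rule substitution $q(x_i,x_j)=q(x_j)q(x_i|x_j)$ is valid even at values of $x_j$ where $q(x_j)=0$ (so that $q(x_i|x_j)$ is not uniquely defined), and confirming that the non-negativity constraint $q(x_i,x_j)\geq 0$ is automatically respected by the optimiser. Both are handled by the standard convention that terms with $q(x_j)=0$ drop out of the objective, leaving $q(x_i|x_j)$ free on a null set that does not affect $J[q(x_j)]$.
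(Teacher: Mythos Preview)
Your argument is correct. The paper does not supply its own proof of this lemma; it simply defers to \cite[App C]{HazSha10}. Your reduction---writing the objective as $\sum_{x_j}q(x_j)\sum_{x_i}q(x_i|x_j)\log[q(x_i|x_j)/\psi_{i,j}(x_i,x_j)]$, observing that the constraint decouples across $x_j$, and then applying Lemma~\ref{lem:EntOpt} to each inner problem---is the natural and fully elementary route, and your handling of the $q(x_j)=0$ edge case is adequate.
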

Proof of this result can be found in \cite[App C]{HazSha10}. On tree-structured graphs, BP may be viewed as successive applications of variable elimination, followed by reconstruction.

If the graph has cycles (i.e., is not a tree), then the entropy does not decompose into the form in (\ref{eq:TreeEntropy})-(\ref{eq:TreeGibbsObjective}). Furthermore, the feasible set in (\ref{eq:TreePolytope1})-(\ref{eq:TreePolytope4}) is an outer bound to the true feasible set, i.e., there are feasible combinations of marginal distributions that do not correspond to a valid joint distribution. Nevertheless, as an approximation, one may solve the optimisation in (\ref{eq:TreeGibbsObjective})-(\ref{eq:TreePolytope4}). The objective in (\ref{eq:TreeGibbsObjective}) is referred to as the Bethe free energy (BFE) after \cite{Bet35}, a connection identified in \cite{YedFre03}. For a cyclic graph, it differs from the Gibbs free energy, but is a commonly utilised approximation.

It was shown in \cite{YedFre03} that, if it converges, the solution obtained by BP is a local minimum of the BFE. The BP message iterates in (\ref{eq:BPMessage}) can be viewed as a general iterative method for solving a series of fixed point equations derived from the optimality conditions of the BFE variational problem (see \cite[4.1.3]{WaiJor08}). The marginal probability estimates obtained using BP, denoted in this paper by the symbol $q$, are referred to as \emph{beliefs}. 

TRSP \cite{WaiJaa05,WaiJor08} provides a convex alternative to the BFE, by applying weights $\gamma_{i,j}\in[0,1]$ to the mutual information terms $I(x_i;x_j)$. If the weights correspond to a convex combination of embedded trees (i.e., a convex combination of weighted, tree-structured sub-graphs, where in a given graph $\gamma_{i,j}=1$ if the edge is included, and $\gamma_{i,j}=0$ otherwise), then the resulting free energy is convex. A rigorous method for minimising energy functions of this form was provided in \cite{HazSha10}.

Finally, MF \cite{Jaa00} approaches the problem by approximating the entropy and expectation in (\ref{eq:SimpleGibbsObjective}) assuming that the joint distribution is in a tractable form, e.g., the product of the marginal distributions. Consequently, the expectations (the second term of (\ref{eq:SimpleGibbsObjective})) are non-convex, and resulting methods tend to underestimate the support of the true distribution, e.g., finding a single mode of a multi-modal distribution. It is also possible to use a hybrid of MF and BP, as proposed in \cite{KirMan10,RieKir13}.

\vspace{-6pt}

\subsection{Multiple scan data association}
\label{ss:MSDA}

The problem we consider is that of data association across multiple scans, involving many targets, the state of which is to be estimated through point measurements. We assume that many targets are present, each target gives rise to at most one measurement (excluding so-called extended target problems, where targets may produce multiple measurements), and each measurement is related to at most one target (excluding so-called merged measurement problems, e.g., where multiple targets fall within a resolution cell). We assume that false alarms occur according to a Poisson point process (PPP). 

For clarity of presentation, we assume that all measurements related to the $i$-th target are independent conditioned on the target state $\boldsymbol{x}^i$. If $\boldsymbol{x}^i$ is the target state vector at a given time, this effectively restricts the problem such that the multiple scans correspond to different sensors at the same time instant, or the target state is static. Problems involving multiple time steps can be addressed by replacing $\boldsymbol{x}^i$ with the joint state over a time window, e.g.,  $\boldsymbol{x}^i=(\boldsymbol{x}_1^i,\dots,\boldsymbol{x}_t^i)$. An alternative approach involving association history hypotheses is discussed in \cite[app A]{WilLau16}\xspace.

We assume that the number of targets $n$ is known, though the method can easily be extended to an unknown, time-varying number of targets using the ideas in \cite{MusEva04,Wil12}. The joint state of all targets is  $\boldsymbol{X}=(\boldsymbol{x}^1,\dots,\boldsymbol{x}^n)$. We denote the set of measurements received in scan $s\in\mathcal{S}=\{1,\dots,S\}$ by $Z_s=\{\boldsymbol{z}_s^1,\dots,\boldsymbol{z}_s^{m_s}\}$. We consider a batch-processing algorithm, where all scans $s\in\mathcal{S}$ are processed at once, and a sequential method, where scans are introduced incrementally, but some reprocessing is performed on each scan in the window after a new scan is revealed. Our goal is to avoid the need to explicitly enumerate or reason over global association hypotheses, i.e., hypotheses in the joint state space of all targets. Instead, marginal distributions of each target are stored, and dependencies between targets are accounted for using variational methods. 

We assume that the prior information for each target is independent, such that the prior distribution of $\boldsymbol{X}$ is
\begin{equation}
p(\boldsymbol{X}) \propto \prod_{i=1}^n \psi^i(\boldsymbol{x}^i),
\end{equation}
where the factors $\psi^i(\boldsymbol{x}^i)$ collectively represent the joint.

We use the symbol $i\in\{1,\dots,n\}$ to refer to a target index, $j\in\{1,\dots,m_s\}$ to refer to a measurement index, and $s\in\mathcal{S}$ to refer to a measurement scan index. Each target $i$ is detected in each scan $s$ with probability $P^{\mathrm{d}}_s(\boldsymbol{x}^i)$, target-related measurements follow the model $p_s(\boldsymbol{z}_s|\boldsymbol{x}^i)$, and false alarms occur according to a PPP with intensity $\lambda^\mathrm{fa}_s(\boldsymbol{z}_s)$. 

The relationship between targets and measurements is described via a set of latent association variables, comprising:
\begin{enumerate}
\item For each target $i \in \{1,\dots,n\}$, an association variable $a_s^i \in \{0,1,\dots,m_s\}$, the value of which is an index to the measurement with which the target is hypothesised to be associated in scan $s$ (zero if the target is hypothesised to have not been detected)
\item For each measurement $j \in \{1,\dots,m_s\}$, an association variable $b_s^j \in \{0,1,\dots,n\}$, the value of which is an index to the target with which the measurement is hypothesised to be associated (zero if the measurement is hypothesised to be a false alarm)
\end{enumerate}
This redundant representation implicitly ensures that each measurement corresponds to at most one target, and each target corresponds to at most one measurement. It was shown in \cite{WilLau12} that, for the single scan case, this choice of formulation results in an approximate algorithm with guaranteed convergence and remarkable accuracy.

Denoting $\boldsymbol{a}_s=(a_s^1,\dots,a_s^n)$ and $\boldsymbol{b}_s=(b_s^1,\dots,b_s^{m_s})$, the joint distribution of the measurements and association variables for scan $s$ can be written as:
\begin{multline}
p(Z_s,\boldsymbol{a}_s,\boldsymbol{b}_s|\boldsymbol{X}) \propto \left\{\prod_{i|a_s^i>0} P^{\mathrm{d}}_s(\boldsymbol{x}^i) p_s(\boldsymbol{z}_s^{a_s^i}|\boldsymbol{x}^i)\right\} \\
 \times
\left\{ \prod_{i|a_s^i = 0} [1-P^{\mathrm{d}}_s(\boldsymbol{x}^i)] \right\} 
 \times
\left\{ \prod_{j|b_s^j = 0} \lambda^\mathrm{fa}_s(\boldsymbol{z}_s^j)  \right\} \\
 \times \psi_s(\boldsymbol{a}_s,\boldsymbol{b}_s),
\label{eq:JointMeasLikelihood}
\end{multline}
where $\psi_s(\boldsymbol{a}_s,\boldsymbol{b}_s)=1$ if $\boldsymbol{a}_s$ and $\boldsymbol{b}_s$ form a consistent association event (i.e.\xspace, if $a_s^i=j>0$ then $b_s^j=i$ and vice versa), and $\psi_s(\boldsymbol{a}_s,\boldsymbol{b}_s)=0$ otherwise. 

The quantity of interest is the posterior distribution
\begin{equation}\label{eq:BackJointPosterior}
p(\boldsymbol{X},\boldsymbol{a}_\mathcal{S},\boldsymbol{b}_\mathcal{S}|Z_\mathcal{S}) \propto p(\boldsymbol{X}) \prod_{s\in\mathcal{S}} p(Z_s,\boldsymbol{a}_s,\boldsymbol{b}_s|\boldsymbol{X}).
\end{equation}
Dividing (\ref{eq:JointMeasLikelihood}) through by $\prod_{j=1}^{m_s}\lambda^\mathrm{fa}_s(\boldsymbol{z}_s^j)$ (since the measurement values are constants in (\ref{eq:BackJointPosterior})), this can be written as
\begin{multline}\label{eq:JointMeasLikelihoodPGM}
p(\boldsymbol{X},\boldsymbol{a}_\mathcal{S},\boldsymbol{b}_\mathcal{S}|Z_\mathcal{S}) \propto \\
\prod_{i=1}^n \left\{ \psi^i(\boldsymbol{x}^i) \prod_{s\in\mathcal{S}} \left[ \psi_s^i(\boldsymbol{x}^i,a_s^i)
\prod_{j=1}^{m_s} \psi_s^{i,j}(a_s^i,b_s^j) \right]
\right\},
\end{multline}
where
\begin{equation}
\psi_s^i(\boldsymbol{x}^i,a_s^i) = \begin{cases}
\frac{P^{\mathrm{d}}_s(\boldsymbol{x}^i)p_s(\boldsymbol{z}_s^j|\boldsymbol{x}^i)}{\lambda^\mathrm{fa}_s(\boldsymbol{z}_s^j)}, & a_s^i = j > 0 \\
1-P^{\mathrm{d}}_s(\boldsymbol{x}^i), & a_s^i = 0
\end{cases}
\end{equation}
and the functions
\begin{equation}\label{eq:ConsistencyConstraint}
\psi_s^{i,j}(a_s^i,b_s^j) = \begin{cases}
0, & a_s^i = j, b_s^j \neq i \mbox{ or } b_s^j = i, a_s^i \neq j \\
1, & \mbox{otherwise}
\end{cases}
\end{equation}
provide a factored form of $\psi_s(\boldsymbol{a}_s,\boldsymbol{b}_s)$, collectively ensuring that the redundant sets of association variables $(a_s^1,\dots,a_s^n)$ and $(b_s^1,\dots,b_s^{m_s})$ are consistent (i.e.\xspace, setting the probability of any event in which the collections are inconsistent to zero). 

A graphical model representation of (\ref{eq:JointMeasLikelihoodPGM}) is illustrated in figure \ref{fig:MultipleScanPGM}.

\begin{figure}
	\centering
  \includegraphics[scale=0.55]{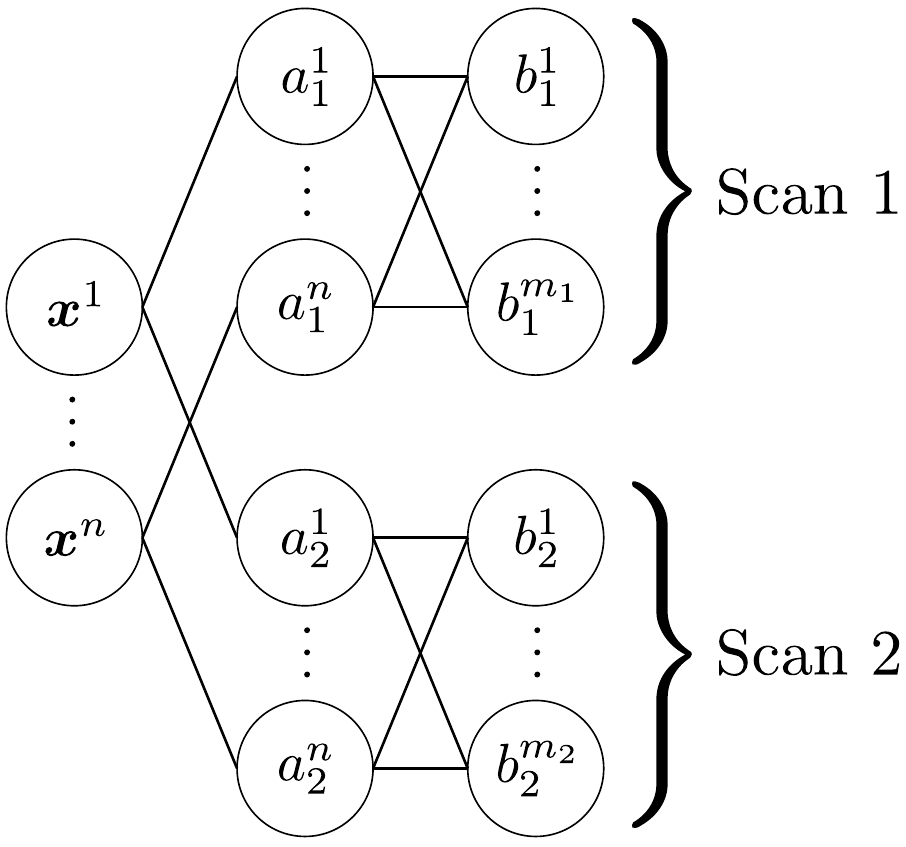}
	\caption{Graphical model formulation of multiple scan problem.}
	\label{fig:MultipleScanPGM}
\end{figure}

Over multiple time steps, JPDA operates by calculating the marginal distribution of each target, $p^i(\boldsymbol{x}^i)$, fitting a Gaussian to this distribution, and proceeding to the next time step approximating the joint prior distribution by the product of these approximated marginal distributions. It may be applied similarly to multiple sensor problems, introducing an arbitrary order to the sensors. 

\vspace{-12pt}

\subsection{Single scan BP data association}
\label{ss:BPDA}
The single scan version of the model in section \ref{ss:MSDA} was studied in \cite{WilLau10,WilLau12}, with similar formulations (excluding false alarms and missed detections) examined in \cite{CheKro10,Von13}; the graphical model for the single scan data association problem is illustrated in figure \ref{fig:SSBPDA}. In \cite{WilLau10a,WilLau12}, simplified BP equations were provided, and convergence was proven; these results do not apply to the multiple scan problem. In the present work, we seek to address the multiple scan problem using ideas in convex optimisation, but first we review the single scan formulation and the underlying variational problem.

Following similar lines to \cite{Von13}, we show in \cite[app B-A]{WilLau16}\xspace that the Bethe variational problem for the single scan formulation can be solved by minimising the objective:
\begin{figure}[t]
	\centering
	\includegraphics[scale=0.55]{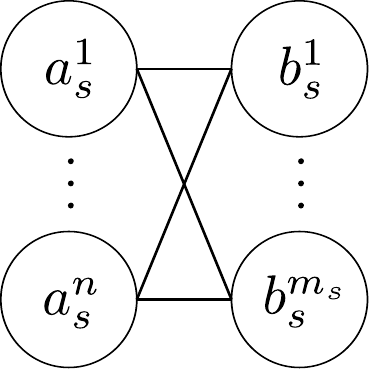}
	\caption{Bipartite formulation of a single scan data association problem.}
	\label{fig:SSBPDA}
\end{figure}%
\begin{align}
\minimise_{q_s^{i,j}} & \sum_{i=1}^n\sum_{j=0}^{m_s} q_s^{i,j}\log\frac{q_s^{i,j}}{w_s^{i,j}} %\notag\\ &
 + \sum_{j=1}^{m_s} q_s^{0,j}\log q_s^{0,j} \notag\\
& - \sum_{i=1}^n\sum_{j=1}^{m_s} (1-q_s^{i,j})\log(1-q_s^{i,j}) \label{eq:SSObjective} \displaybreak[1]\\
\subjectto & \sum_{j=0}^{m_s} q_s^{i,j} = 1 \;\forall\; i\in\{1,\dots,n\} \label{eq:SSFeas1i} \displaybreak[0]\\
& \sum_{i=0}^n q_s^{i,j} = 1 \;\forall\; j\in\{1,\dots,{m_s}\} \label{eq:SSFeas2i} \displaybreak[0] \\
& 0 \leq q_s^{i,j} \leq 1 \label{eq:SSFeas3}
\end{align}
where $q_s^{i,j}=q(a_s^i=j)=q(b_s^j=i)$ is the belief that target $i$ is associated with measurement $j$ (or, if $j=0$, that target $i$ is missed, or if $i=0$, that measurement $j$ is a false alarm\footnote{If $j=0$ then there is no corresponding $q(b_s^j)$, and if $i=0$ then there is no corresponding $q(a_s^i)$.}), and $w_s^{i,j}=\psi^i(a_s^i=j)$ is the node factor, which will be defined subsequently in (\ref{eq:KinematicStateElimination}). The constraints in (\ref{eq:SSFeas1i}) and (\ref{eq:SSFeas2i}) are referred to as \textit{consistency constraints}, as they are a necessary condition for the solution to correspond to a valid joint association event distribution.

It is not obvious that the optimisation in (\ref{eq:SSObjective})-(\ref{eq:SSFeas3}) is convex, but it can be proven using the result in \cite{Von13}, which shows that a closely related objective (excluding terms involving $q_s^{i,0}$ and $q_s^{0,j}$) is convex on the subset in which (\ref{eq:SSFeas3}) and either (\ref{eq:SSFeas1i}) or (\ref{eq:SSFeas2i}) apply. Details can be found in the proof of lemma \ref{lem:h1t_Convexity}.

In \cite{CheYed13} it was shown that, if the correct fractional coefficient $\gamma\in[-1,1]$ is incorporated on the final term in the objective (\ref{eq:SSObjective}) (excluding false alarms and missed detections, and setting $m_s=n$), the value of the modified objective function at the optimum is the same as the Gibbs free energy objective. In the formulation that incorporates false alarms and missed detections, the fractional free energy (FFE) objective is:\footnote{Note that we reverse the sign of $\gamma$ in comparison to \cite{CheYed13}, so that $\gamma=1$ yields the regular BFE.}
\begin{multline}
F_B^{\gamma}([q_s^{i,j}]) = \sum_{i=1}^n\sum_{j=0}^{m_s} q_s^{i,j}\log\frac{q_s^{i,j}}{w_s^{i,j}} + \gamma\sum_{j=1}^{m_s} q_s^{0,j}\log q_s^{0,j}  \\
 - \gamma\sum_{i=1}^n\sum_{j=1}^{m_s} (1-q_s^{i,j})\log(1-q_s^{i,j}). \label{eq:FFEObjective} 
\end{multline}
Despite the fact that the ``right'' value of $\gamma$ is not known for any particular problem, our preliminary investigations in \cite{Wil14a}, and the results in section \ref{sec:Experiments}, show that fractional values $\gamma\in[0,1]$ can yield improved beliefs.\footnote{e.g., in high SNR cases (very high $P^{\mathrm{d}}$, very low $\lambda^\mathrm{fa}$), the BFE objective tends to yield solutions that are almost integral, i.e., are closer to MAP solutions rather than marginal probabilities, as illustrated later in figure \ref{fig:example}(a). The inclusion of the $\gamma$ coefficient on the term involving $q_s^{0,j}$ retains the property of the BFE that the optimisation provides a near-exact result when targets are well-spaced.} Practically, the value could be chosen a priori based on the problem parameters. It is straight-forward to show that the inclusion of the fractional coefficient $\gamma\in[-1,1]$ retains convexity of (\ref{eq:FFEObjective}) (on the appropriate subset); again, details are in the proof of lemma \ref{lem:h1t_Convexity}.

We now consider how these results may be applied to solve the single scan problem incorporating the kinematic states $\boldsymbol{x}^i$, as illustrated in figure \ref{fig:MultipleScanBetheJPDA}(a) (or, equivalently, the single scan version of figure \ref{fig:MultipleScanPGM}). %
Whilst a solution for the belief of $\boldsymbol{x}^i$ can be recovered from a solution of (\ref{eq:SSObjective})-(\ref{eq:SSFeas3}) using lemma \ref{lem:CondEntOpt}, a variational problem formulation incorporating $\boldsymbol{x}^i$ admits extension to multiple scan problems.
As in (\ref{eq:JointMeasLikelihoodPGM}), this model involves factors $\psi^i(\boldsymbol{x}^i)$ and $\psi^i_s(\boldsymbol{x}^i,a_s^i)$, where in (\ref{eq:SSObjective}),
\begin{equation}\label{eq:KinematicStateElimination}
w_s^{i,j} = \psi^i_s(a_s^i=j) = \int \psi^i(\boldsymbol{x}^i) \psi^i_s(\boldsymbol{x}^i,a_s^i=j) \mathrm{d} \boldsymbol{x}^i.
\end{equation}
It is shown in \cite[app B-A]{WilLau16}\xspace that we can arrive at the problem in (\ref{eq:SSObjective})-(\ref{eq:SSFeas3}) by performing a partial minimisation of the following objective over $q^i(\boldsymbol{x}^i)$ and the pairwise joint $q_s^i(\boldsymbol{x}^i,a_s^i)$:
\begin{multline}\label{eq:SSBFE}
F_B([q^i(\boldsymbol{x}^i)],[q_s^i(\boldsymbol{x}^i,a_s^i)],[q_s^{i,j}]) = 
-\sum_{i=1}^{n} \big\{ H(\boldsymbol{x}^i) \\
+ \mathbb{E}[\log\psi^i(\boldsymbol{x}^i)] 
+ H(a_s^i|\boldsymbol{x}^i) + \mathbb{E}[\log\psi_s^i(\boldsymbol{x}^i,a_s^i)] \big\} \\
+ \sum_{j=1}^{m_{s}}q_{s}^{0,j}\log{q_{s}^{0,j}}
- \sum_{i=1}^{n}\sum_{j=1}^{m_{s}} (1-q_{s}^{i,j})\log (1-q_{s}^{i,j}),
\end{multline}
subject to the constraints:
\begin{gather}
q_{s}^i(\boldsymbol{x}^i,a_s^i) \geq 0, \quad q_{s}^{0,j} \geq 0, \label{eq:SSConstrPositivity}  \\
\sum_{a_{s}^i=0}^{m_{s}} q_{s}^i(\boldsymbol{x}^i,a_{s}^i) = q^i(\boldsymbol{x}^i), \label{eq:SSConstr_qAa} \\
q_{s}^{i,j} = \int q_{s}^i(\boldsymbol{x}^i,j)\mathrm{d} \boldsymbol{x}^i, \label{eq:SSConstr_qtij} \displaybreak[1]\\
\sum_{i=0}^{n} q_{s}^{i,j} = 1, \label{eq:SSConstr_qt0j} \\
\sum_{a_{s}^i=0}^{m_{s}} \int q_{s}^i(\boldsymbol{x}^i,a_{s}^i) \mathrm{d} \boldsymbol{x}^i = 1. \label{eq:SSConstrSumToOne} 
\end{gather}

\subsection{Conjugate duality and primal-dual coordinate ascent}
\label{ss:Duality}
The Fenchel-Legendre conjugate dual of a function $f(\boldsymbol{q})$ is defined as: \cite{Roc70}
\begin{equation}\label{eq:ConjDual}
f^*(\boldsymbol{\lambda}) = \sup_{\boldsymbol{q}} \boldsymbol{\lambda}^T \boldsymbol{q} - f(\boldsymbol{q}).
\end{equation}
The dual $f^*(\boldsymbol{\lambda})$ is convex regardless of convexity of $f(\boldsymbol{q})$, since it is constructed as the supremum of a family of linear functions. The key outcome of conjugate duality is that, if $f(\boldsymbol{q})$ is closed and convex, then the conjugate dual of $f^*(\boldsymbol{\lambda})$ is the original function $f(\boldsymbol{q})$, thus $f(\boldsymbol{q})$ and $f^*(\boldsymbol{\lambda})$ are alternate representations of the same object.

Dual functions are useful in constrained convex optimisation since the optimal value of the primal
\begin{equation}\label{eq:CDPrimal}
\min_{\boldsymbol{q}} \; f(\boldsymbol{q}) \quad
\subjectto \; \mathbf{A} \boldsymbol{q} = 0
\end{equation}
is the same as the optimal value of the dual optimisation
\begin{equation}\label{eq:CDDual}
\max_{\boldsymbol{\lambda}} -f^*(\mathbf{A}^T \boldsymbol{\lambda}),
\end{equation}
and if $f(\boldsymbol{q})$ is strictly convex then, given the optimal solution $\boldsymbol{\lambda}^*$ of the dual, the optimal value $\boldsymbol{q}^*$ of the primal can be recovered as the solution of the unconstrained optimisation
\begin{equation}\label{eq:CDRecovery}
\min_{\boldsymbol{q}} f(\boldsymbol{q}) - (\mathbf{A}^T \boldsymbol{\lambda}^*)^T \boldsymbol{q}.
\end{equation}

One additional usefulness of conjugate duality over Lagrangian duality is its ability to tractably address objectives that decompose additively. In this work, we utilise the primal-dual framework developed in \cite{HazSha10}, which addresses problems of the form:
\begin{equation}\label{eq:HSPrimal}
\min_{\boldsymbol{q}} f(\boldsymbol{q}) + \sum_{i=1}^n h_i(\boldsymbol{q}),
\end{equation}
where $f(\boldsymbol{q})$ and $h_i(\boldsymbol{q})$ are proper, closed, convex functions. Constraints are addressed by admitting extended real-valued functions.  It is shown in \cite{HazSha10,Tse93} that the dual of (\ref{eq:HSPrimal}) is
\begin{equation}\label{eq:HSDual}
\max_{\boldsymbol{\lambda}_1,\dots,\boldsymbol{\lambda}_n} -f^*\left(\textstyle{-\sum_{i=1}^n\boldsymbol{\lambda}_i}\right) - \sum_{i=1}^n h_i^*(\boldsymbol{\lambda}_i).
\end{equation}
Thus, assuming smoothness of $f^*$ (or strict convexity of $f$), the dual optimisation can be performed via block coordinate ascent, iteratively performing the following steps for each $i$:
\begin{align}
\boldsymbol{\mu} &:= \sum_{j\neq i} \boldsymbol{\lambda}_j, \label{eq:HSDCAUpdate1} \\
\boldsymbol{\lambda}_i &:= \argmax_{\boldsymbol{\lambda}_i} -f^*(-\boldsymbol{\lambda}_i-\boldsymbol{\mu})-h^*_i(\boldsymbol{\lambda}_i). \label{eq:HSDCADual}
\end{align}

The method in \cite{HazSha10} shows that the block optimisations required in (\ref{eq:HSDCADual}) can be performed via primal minimisations, i.e., the updated value $\boldsymbol{\lambda}_i$ can be obtained through the optimisation
\begin{align}
\boldsymbol{q}^* &:= \argmin_{\boldsymbol{q}} f(\boldsymbol{q}) + h_i(\boldsymbol{q}) + \boldsymbol{\mu}^T \boldsymbol{q}, \label{eq:HSDCAPrimal} \\ 
\boldsymbol{\lambda}_i &:= -\boldsymbol{\mu} - \nabla f(\boldsymbol{q}^*). \label{eq:HSDCAUpdate2}
\end{align}

In \cite{HazSha10}, the authors develop the norm-product belief propagation algorithm for convexifications of general PGM inference problems. While these methods could be applied to the problem of interest, they do not exploit the unique problem structure and resulting convexity discussed in section \ref{ss:BPDA}. Consequently, the necessary convexification procedure would produce an unnecessarily large change to the BFE. Thus we adopt the optimisation framework of \cite{HazSha10}, but the solution does not exactly fit the NPBP algorithm, and so it is necessary to develop it from the basic framework.

\section{Variational multiple scan data association}
\label{sec:VMSDA}

The standard approach to tracking using JPDA and related methods is to calculate the marginal distribution of each target, and proceed to the next scan approximating the posterior as the product of the single-target marginal distributions.\footnote{It can be shown (e.g., \cite[p277]{KolFri09}) that the product of the marginals is the distribution with independent targets that best matches the exact joint distribution.} In many cases, this approach is surprisingly effective. The method proposed in this work is based on a variational interpretation of the JPDA approach, which gives rise to a family of formulations that includes the JPDA-like approach and MSBP. We refer to the JPDA-like approach using the BP approximation of marginal association probabilities (beliefs), as JPDA-BP. In comparison, true JPDA uses exact marginal association probabilities, and approximates the posterior as a Gaussian at each step.

\begin{figure*}
\centering
\includegraphics[scale=0.55]{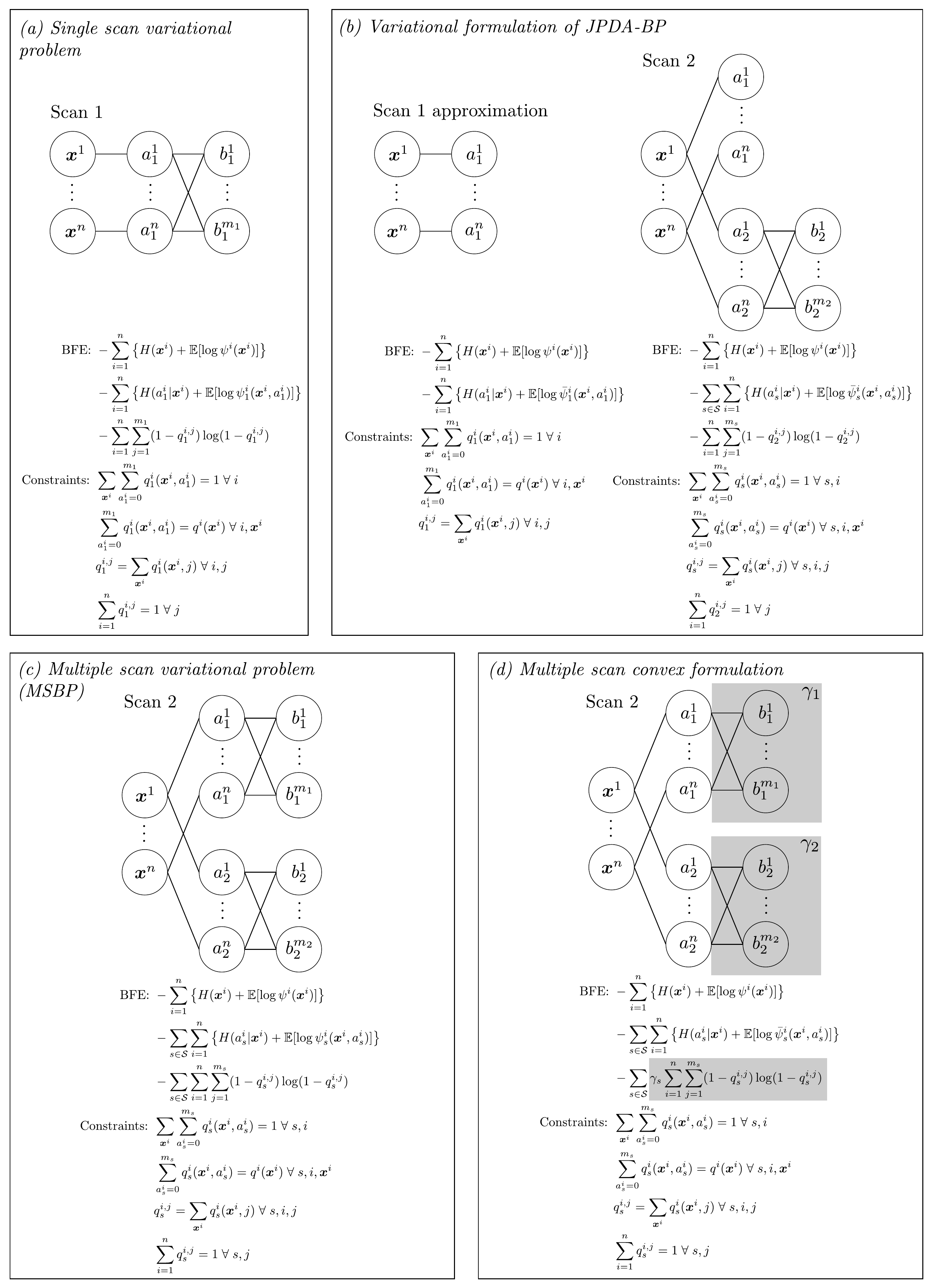}
\caption{Probabilistic graphical model and Bethe free energy (a) single scan formulation, (b) multiple scan formulation using JPDA-BP approximation, (c) standard multiple scan approach using MSBP, and (d) convex multiple scan approach, where shading depicts the weighting of the corresponding entropy and mutual information terms. The formulation in the diagrams excludes false alarm events ($q_s^{0,j}$) for simplicity; these events are modelled in the formulation in the text. For the newest scan ($s=2$) in the right of (b) and (d), $\bar{\psi}_2^i(\boldsymbol{x}^i,a_2^i)\triangleq\psi_2^i(\boldsymbol{x}^i,a_2^i)$.
}
\label{fig:MultipleScanBetheJPDA}
\end{figure*}

In particular, we examine the single scan BP approach (e.g., \cite{Wil12}) which calculates association beliefs, and approximates the joint as the product of the beliefs (although the steps taken are not unique to the BP estimate of marginal probabilities). Consider the example in figure \ref{fig:MultipleScanBetheJPDA}(a), where in scan 1 ($\mathcal{S}=\{1\}$) the beliefs are calculated through the optimisation of (\ref{eq:SSBFE}). Denote the solution as $q^{i}(\boldsymbol{x}^i,a_1^i)$ and $q_1^{i,j}$. JPDA-BP moves forward to the next scan approximating the joint as:
\begin{equation}\label{eq:JPDAProdMarginals}
p(\boldsymbol{X},\boldsymbol{a}_1) \approx \prod_{i=1}^n q^i(\boldsymbol{x}^i,a_1^i).
\end{equation}
This approximated joint distribution can be formulated as a PGM using the graph in the left-hand side of figure \ref{fig:MultipleScanBetheJPDA}(b), where the factor $\bar{\psi}_1^i(\boldsymbol{x}^i,a_1^i)$ is modified such that the simplified formulation results in the same solution as the original problem in figure \ref{fig:MultipleScanBetheJPDA}(a). This does not mean to say that the objective on the left-hand side of figure \ref{fig:MultipleScanBetheJPDA}(b) is equivalent to that in figure \ref{fig:MultipleScanBetheJPDA}(a). Note that we could eliminate the nodes $a_1^i$ from the graph as they will subsequently remain leaf nodes, but we choose to retain them for comparison to the proposed algorithm.

When a second scan of measurements is introduced ($\mathcal{S}=\{1,2\}$), JPDA-BP effectively solves the problem in the right-hand side of figure \ref{fig:MultipleScanBetheJPDA}(b) where, for the newest scan ($s=2$), $\bar{\psi}_2^i(\boldsymbol{x}^i,a_2^i)\triangleq\psi_2^i(\boldsymbol{x}^i,a_2^i)$. Although the data for $q_1^{i,j}$ (i.e., $\bar{\psi}_1^i(\boldsymbol{x}^i,a_1^i)$) is unchanged from the left-hand side of figure \ref{fig:MultipleScanBetheJPDA}(b), introduction of new information in scan 2 will, in general, modify the belief values for the first scan, $q_1^{i,j}$. Consistency constraints (specifically, (\ref{eq:SSConstr_qt0j})) will no longer hold. If BP was applied directly without the approximations made by JPDA (i.e., (\ref{eq:JPDAProdMarginals})), we would arrive at the problem in figure \ref{fig:MultipleScanBetheJPDA}(c). Comparing the variational problems at scan 2 in figures \ref{fig:MultipleScanBetheJPDA}(b) (JPDA-BP) and \ref{fig:MultipleScanBetheJPDA}(c) (MSBP), we note the following differences:
\begin{enumerate}
\item The concave term $-\sum_{i=1}^{n}\sum_{j=1}^{m_{s}} (1-q_{s}^{i,j})\log (1-q_{s}^{i,j})$ appears only for scan $2$ in figure \ref{fig:MultipleScanBetheJPDA}(b), but for both scans in figure \ref{fig:MultipleScanBetheJPDA}(c)
\item The constraint $\sum_{i=0}^{n} q_{s}^{i,j}=1$ appears only for scan $2$ in figure \ref{fig:MultipleScanBetheJPDA}(b), but for both scans in figure \ref{fig:MultipleScanBetheJPDA}(c)
\item The factors $\bar{\psi}^i_1(\boldsymbol{x}^i,a_1^i)$ have been modified in figure \ref{fig:MultipleScanBetheJPDA}(b) but remain at their original values in figure \ref{fig:MultipleScanBetheJPDA}(c)
\item The objective function in figure \ref{fig:MultipleScanBetheJPDA}(b) is convex, whereas the objective in figure \ref{fig:MultipleScanBetheJPDA}(c) is non-convex
\end{enumerate}
It can be shown that the modification of the factors $\bar{\psi}^i_1(\boldsymbol{x}^i,a_1^i)$ effectively incorporates the Lagrange multipliers for the constraints that are being relaxed, and a linearisation of the concave term (the third line in the objective in figure \ref{fig:MultipleScanBetheJPDA}(a)), such that the change in 3) above attempts to nullify changes 1) and 2), i.e., the solution of the modified variational problem is the same as the original one. The linearisation of the concave term is reminiscent of a single iteration of the convex-concave procedure in \cite{Yui02}.

The proposed solution is illustrated in figure \ref{fig:MultipleScanBetheJPDA}(d). Compared to JPDA-BP (figure \ref{fig:MultipleScanBetheJPDA}(b)), it makes the following modifications: 
\begin{enumerate}
\item As in the FFE objective in (\ref{eq:FFEObjective}), we apply fractional weights $\gamma_{s}$ to the concave terms $-\sum_{i=1}^{n}\sum_{j=1}^{m_{s}} (1-q_{s}^{i,j})\log (1-q_{s}^{i,j})$, such that if $\sum_{s}\gamma_{s}<1$ then the objective function will be strictly convex
\item We retain constraints $\sum_{i=0}^{n} q_{s}^{i,j}=1$ for both scans
\item If the coefficient $\gamma_{s}$ differs from the value used for that scan at the previous time step, we modify the factor $\psi^i_s(\boldsymbol{x}^i,a_s^i)$ such that the solution remains unchanged (before the following scan is incorporated)
\end{enumerate}
The reason for the latter step is that it was found to be desirable to use values $\gamma_s$ closer to 1 in the current (most recent) scan, and reducing to zero in earlier scans. In our experiments, we use the selection $\gamma_{s}=0.55$ in the current scan, and $\gamma_{s}=0$ in past scans. In this case, the algorithm approximates the objective function in a similar manner to JPDA-BP, \textit{but the consistency constraints from previous scans are retained}. Thus, unlike JPDA, when later scans are processed, the constraints which ensure that the origin of past measurements is consistently explained remain enforced. In section \ref{sec:Experiments}, we will see that this can result in improved performance.

JPDA and JPDA-BP operate sequentially, at each stage introducing a new set of nodes, as in figure \ref{fig:MultipleScanBetheJPDA}(b). Following approximation, association variables from past scans are leaf nodes, and thus can be eliminated. Like MSBP, the proposed method maintains past association variables, and introduces a new set in each scan. The sequential variant of the algorithm operates by performing one or more forward-backward sweeps over recent scans. Complexity could be mitigated by performing the operations on past scans intermittently, rather than upon receipt of every scan.

In what follows (and in figure \ref{fig:MultipleScanBetheJPDA}), we assume that the target state $\boldsymbol{x}^i$ is discrete. This can be achieved by using a particle representation, such that the prior distribution $\psi^i(\boldsymbol{x}^i)$ contains the prior weights of the particles. An expression such as $\sum_{\boldsymbol{x}^i} \psi^i(\boldsymbol{x}^i)\psi^i_s(\boldsymbol{x}^i,\boldsymbol{a}_s^i)$ should be interpreted as the sum of the function evaluated at the particle locations. The output of the inference procedure developed is used to reweight the particles, e.g., recovering an estimate of the PDF of continuous kinematic state $\boldsymbol{\xi}^i$ as:
\begin{equation}
q^i(\boldsymbol{\xi}^i) \approx \sum_{\boldsymbol{x}^i} q^i(\boldsymbol{x}^i) \delta(\boldsymbol{\xi}^i-\boldsymbol{x}^i).
\end{equation}
\ifthenelse{\boolean{jlwExtendedVersion}}{The association history formulation of appendix \ref{ss:GMSDA} may be applied simply by replacing $\boldsymbol{x}^i$ with the single target association history $\boldsymbol{a}_{\mathcal{S}}^i$. Some simplifications can be made in this case since $H(a_s^i|\boldsymbol{a}_\mathcal{S}^i)=0$.}{The method can equally be applied to the association history formulation in the extended version of this paper, \cite[app A]{WilLau16}. Proofs of theorems and lemmas may also be found in the extended version of the paper.}

\subsection{Bethe free energy function}
\label{ss:EnergyDerivation}
In \cite[app B-B]{WilLau16}\xspace, we show that the Bethe free energy for the multiple scan formulation in section \ref{ss:MSDA} (and figure \ref{fig:MultipleScanBetheJPDA}(c)) can be written as:
\begin{multline}\label{eq:MSDABFE}
F_B([q^i(\boldsymbol{x}^i)],[q_s^i(\boldsymbol{x}^i,a_{s}^i)],[q_{s}^{i,j}]) = \\
-\sum_{i=1}^{n} \left\{ H(\boldsymbol{x}^i) + \mathbb{E}[\log\psi^i(\boldsymbol{x}^i)] \right\} \\
- \sum_{s\in\mathcal{S}}\sum_{i=1}^{n} \left\{ H(a_{s}^i|\boldsymbol{x}^i) + \mathbb{E}[\log\psi_{s}^i(\boldsymbol{x}^i,a_{s}^i)] \right\} \\
+ \sum_{s\in\mathcal{S}}\sum_{j=1}^{m_{s}}q_{s}^{0,j}\log{q_{s}^{0,j}}
- \sum_{s\in\mathcal{S}}\sum_{i=1}^{n}\sum_{j=1}^{m_{s}} (1-q_{s}^{i,j})\log (1-q_{s}^{i,j}),
\end{multline}
subject to the constraints:
\begin{gather}
q_s^i(\boldsymbol{x}^i,a_{s}^i) \geq 0, \quad q^i(\boldsymbol{x}^i) \geq 0  \label{eq:ConstrPositivity}  \\
\sum_{\boldsymbol{x}^i} \sum_{a_{s}^i=0}^{m_{s}} q_{s}^i(\boldsymbol{x}^i,a_{s}^i) = 1 %, \quad \sum_{\xv^i} q^i(\xv^i) = 1 
\label{eq:ConstrSumToOne}  \\
\sum_{a_{s}^i=0}^{m_{s}} q_{s}^i(\boldsymbol{x}^i,a_s^i) = q^i(\boldsymbol{x}^i)\;\forall\;i,\boldsymbol{x}^i \;\forall\;s\in\mathcal{S} \label{eq:Constr_qAa} \\
q_{s}^{i,j} = \sum_{\boldsymbol{x}^i} q_{s}^i(\boldsymbol{x}^i,j)\;\forall\;i,j, \;\forall\;s\in\mathcal{S} \label{eq:Constr_qtij} \\
q_{s}^{i,j} \geq 0 \;\forall\;i,j,\;\forall\;s\in\mathcal{S} \label{eq:Constr_qtij_positivity} \\
\sum_{i=0}^{n} q_{s}^{i,j} = 1 \;\forall\;j, \;\forall\;s\in\mathcal{S}, \label{eq:Constr_qti_SumToOne} 
\end{gather}
where $q^i(\boldsymbol{x}^i)$ is the belief (i.e., approximate probability) of the state of target $i$, $q_{s}^{i,j}$ is the belief that target $i$ is associated with measurement $\boldsymbol{z}_{s}^j$,  $q_{s}^{0,j}$ is the belief that measurement $\boldsymbol{z}_{S}^j$ is not associated with any target, and the set $\mathcal{S}$ indexes the measurement scans under consideration. 

\subsection{Convexification of energy function}
\label{ss:ConvexEnergy}
In this work, we consider convex free energies of the form:
\begin{multline}\label{eq:ConvexBFE}
F_B^{\gamma,\beta}([q^i(\boldsymbol{x}^i)],[q_{s}^i(\boldsymbol{x}^i,a_{s}^i)],[q_{s}^{i,j}]) = \\
-\sum_{i=1}^{n} \left\{ H(\boldsymbol{x}^i) + \mathbb{E}[\log\psi^i(\boldsymbol{x}^i) ] \right\} \\
- \sum_{s\in\mathcal{S}}\sum_{i=1}^{n} \left\{ H(a_{s}^i|\boldsymbol{x}^i) + \mathbb{E}[\log\psi_{s}^i(\boldsymbol{x}^i,a_{s}^i) ]\right\} \displaybreak[1] \\
+ \sum_{s\in\mathcal{S}}\beta_{s} \sum_{j=1}^{m_{s}}q_{s}^{0,j}\log{q_{s}^{0,j}} \\
-\sum_{s\in\mathcal{S}} \gamma_{s} \sum_{i=1}^{n}\sum_{j=1}^{m_{s}} (1-q_{s}^{i,j})\log (1-q_{s}^{i,j}).
\end{multline}
The difference between (\ref{eq:ConvexBFE}) and (\ref{eq:MSDABFE}) is the incorporation of the coefficients $\gamma_{s}\in[0,1)$ and $\beta_{s}\in(0,1]$ in the final two terms. We will show that (\ref{eq:ConvexBFE}) is convex if $\eta \geq 0$ (strictly convex if $\eta>0$), where
\begin{equation}
\eta = 1- \sum_{s\in\mathcal{S}} \gamma_{s}.
\end{equation}
The use of re-weighting to obtain a convex free energy is closely related to the TRSP algorithm \cite{WaiJaa05}.

In the development that follows, we provide a decomposition of the convex free energy that permits application of primal-dual coordinate ascent (PDCA). First we give the basic form, and show that the components are convex. Then we state weights which ensure that the objective is the same as (\ref{eq:ConvexBFE}). Then, in section \ref{ss:ConvexSolution}, we provide algorithms to minimise each block that needs to be solved in PDCA.

In order to optimise (\ref{eq:ConvexBFE}), we consider decompositions of the expression of the form:
\begin{multline}\label{eq:ConvexBFEDecomp}
F_B^{\gamma,\beta}([q^i(\boldsymbol{x}^i)],[q_{s}^i(\boldsymbol{x}^i,a_{s}^i)]) 
= f([q^i(\boldsymbol{x}^i)],[q_{s}^i(\boldsymbol{x}^i,a_{s}^i)]) \\
+ \sum_{s\in\mathcal{S}} \Big\{ h_{s,1}([q^i(\boldsymbol{x}^i)],[q_{s}^i(\boldsymbol{x}^i,a_{s}^i)]) \\
+ h_{s,2}([q^i(\boldsymbol{x}^i)],[q_{s}^i(\boldsymbol{x}^i,a_{s}^i)])
\Big\},
\end{multline}
where
\begin{align}
f([&q^i(\boldsymbol{x}^i)],[q_{s}^i(\boldsymbol{x}^i,a_{s}^i)]) \notag\\
=& -\sum_{i=1}^{n} \left\{ \kappa_{f,x} H(\boldsymbol{x}^i) + \mathbb{E}[\log\psi^i(\boldsymbol{x}^i) ] \right\} \notag\\
&- \sum_{s\in\mathcal{S}}\sum_{i=1}^{n} \left\{ \kappa_{f,s} H(\boldsymbol{x}^i,a_{s}^i) + \mathbb{E}[\log\psi_{s}^i(\boldsymbol{x}^i,a_{s}^i) ]\right\}, \label{eq:Block_f} \displaybreak[1] \\
h_{s,1}([&q^i(\boldsymbol{x}^i)],[q_{s}^i(\boldsymbol{x}^i,a_{s}^i)]) \notag\\
=& -\sum_{i=1}^{n} \left\{ \kappa_{s,1,x} H(\boldsymbol{x}^i) + \kappa_{s,1,s} H(\boldsymbol{x}^i,a_{s}^i) \right\} \notag \\
& + \beta_{s} \sum_{j=1}^{m_{s}}q_{s}^{0,j}\log{q_{s}^{0,j}} \notag \\
& - \gamma_{s} \sum_{i=1}^{n}\sum_{j=1}^{m_{s}} (1-q_{s}^{i,j})\log (1-q_{s}^{i,j}),  \label{eq:Block_ht1} \displaybreak[1]\\
h_{s,2}([&q^i(\boldsymbol{x}^i)],[q_{s}^i(\boldsymbol{x}^i,a_{s}^i)]) \notag\\
=& -\sum_{i=1}^{n} \left\{ \kappa_{s,2,x} H(\boldsymbol{x}^i) + \kappa_{s,2,s} H(\boldsymbol{x}^i,a_{s}^i) \right\}. \label{eq:Block_ht2}
\end{align}
Note that we do not consider (\ref{eq:ConvexBFEDecomp}) or (\ref{eq:Block_ht1}) to depend on $[q_{s}^{i,j}]$ as these are uniquely determined from $[q^i_{s}(\boldsymbol{x}^i,a_{s}^i)]$ by the constraints in (\ref{eq:Constr_qtij})--(\ref{eq:Constr_qti_SumToOne}) (which will be enforced whenever we consider the block $h_{s,1}$, the only block that includes these variables).

Immediate statements that can be made regarding convexity of (\ref{eq:Block_f})--(\ref{eq:Block_ht2}) include:
\begin{enumerate}
\item $f$ is strictly convex if $\kappa_{f,x}>0$ and $\kappa_{f,s}>0$; this is the consequence of the convexity of entropy
\item $h_{s,2}$ is convex if the consistency constraints (\ref{eq:Constr_qAa}) are enforced for scan $s$, $\kappa_{s,2,s}\geq0$, and $\kappa_{s,2,x}+\kappa_{s,2,s}\geq 0$; this is the consequence of the convexity of entropy, and of conditional entropy
\end{enumerate}
Convexity of $h_{s,1}$ is proven in the lemma below. Subsequently, the decomposition in terms of $f$, $h_{s,1}$ and $h_{s,2}$ (for each $s\in\mathcal{S}$) is utilised in the PDCA framework introduced in section \ref{ss:Duality}.

\begin{lemma}\label{lem:h1t_Convexity}
If constraints (\ref{eq:ConstrPositivity})--(\ref{eq:Constr_qti_SumToOne}) are enforced for scan $s$, $\kappa_{s,1,s}\geq 0$, $\beta_{s}\geq 0$ and $\kappa_{s,1,s}+\kappa_{s,1,x}\geq \gamma_{s}$, then $h_{s,1}$ is convex.
\end{lemma}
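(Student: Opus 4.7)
The strategy is to decompose $h_{s,1}$ into a sum of individually convex pieces, leveraging Vontobel's convexity result \cite{Von13} for the bipartite single-scan free energy together with the concavity of entropy and conditional entropy. First I would apply the chain rule $H(\boldsymbol{x}^i,a_s^i) = H(\boldsymbol{x}^i) + H(a_s^i\mid\boldsymbol{x}^i)$ to combine the two entropy-weighted terms, obtaining
\begin{align*}
h_{s,1} = &\,-(\kappa_{s,1,x}+\kappa_{s,1,s})\sum_{i=1}^n H(\boldsymbol{x}^i) - \kappa_{s,1,s}\sum_{i=1}^n H(a_s^i\mid\boldsymbol{x}^i) \\
 &\,+ \beta_s \sum_{j=1}^{m_s} q_s^{0,j}\log q_s^{0,j} - \gamma_s\sum_{i=1}^n\sum_{j=1}^{m_s}(1-q_s^{i,j})\log(1-q_s^{i,j}).
\end{align*}
The concavity of conditional entropy (which underlies lemma~\ref{lem:CondEntOpt}) together with $\kappa_{s,1,s}\geq 0$ makes $-\kappa_{s,1,s}\sum_i H(a_s^i\mid\boldsymbol{x}^i)$ convex, and $\beta_s q_s^{0,j}\log q_s^{0,j}$ is a sum of convex $z\log z$ terms under $\beta_s\geq 0$.

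The next task is to dispose of the concave $-\gamma_s(1-q_s^{i,j})\log(1-q_s^{i,j})$ contribution via Vontobel's theorem. Since $q_s^{i,j}=\sum_{\boldsymbol{x}^i}q_s^i(\boldsymbol{x}^i,j)$ is affine in the joint $q_s^i$, the bipartite Vontobel result lifts to give convexity, on the polytope (\ref{eq:Constr_qtij_positivity})--(\ref{eq:Constr_qti_SumToOne}), of
\[
V := -\sum_{i=1}^n H(a_s^i) + \sum_{j=1}^{m_s} q_s^{0,j}\log q_s^{0,j} - \sum_{i=1}^n\sum_{j=1}^{m_s}(1-q_s^{i,j})\log(1-q_s^{i,j}),
\]
where the inclusion of the false-alarm and missed-detection terms is the extension of \cite{Von13} alluded to in the discussion following (\ref{eq:SSFeas3}). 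Scaling by $\gamma_s\geq 0$ keeps it convex, and an FFE-style convex combination with $(1-\gamma_s)(-\sum_i H(a_s^i))$ and leftover $q\log q$ pieces decouples the $\beta_s$ coefficient on the false-alarm block from the $\gamma_s$ coefficient on the concave block while preserving convexity.

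The decomposition I would use is $h_{s,1} = \gamma_s V + R$, and the proof reduces to showing that the residue $R$ is convex. Writing $\kappa_{s,1,x}+\kappa_{s,1,s} = \gamma_s + \delta$ with $\delta\geq 0$ from the hypothesis, the residue splits into a convex $-\delta\sum_i H(\boldsymbol{x}^i)$ piece, the already-convex conditional-entropy piece, and cross-terms that are handled by the chain-rule identities $H(\boldsymbol{x}^i,a_s^i) = H(\boldsymbol{x}^i) + H(a_s^i\mid\boldsymbol{x}^i) = H(a_s^i) + H(\boldsymbol{x}^i\mid a_s^i)$. The main obstacle is the bookkeeping in this last step: aligning the coefficients so that every residue contribution inherits non-negative weight on a convex quantity, using only $\kappa_{s,1,s}\geq 0$, $\beta_s\geq 0$ and $\kappa_{s,1,s}+\kappa_{s,1,x}\geq \gamma_s$. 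It is precisely the aggregate bound $\kappa_{s,1,s}+\kappa_{s,1,x}\geq\gamma_s$, rather than separate non-negativity of $\kappa_{s,1,x}$, that enables the lifted Vontobel expression $\gamma_s V$ to absorb the concave $-(1-q_s^{i,j})\log(1-q_s^{i,j})$ term and leave behind only convex residue.
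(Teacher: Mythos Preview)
Your overall strategy---use Vontobel's bipartite convexity result to absorb the concave $(1-q_{s}^{i,j})\log(1-q_{s}^{i,j})$ block, then show the residue is convex via entropy chain rules---matches the paper. However, the specific decomposition you chose does not close under the stated hypotheses, and the ``bookkeeping'' you flag as the main obstacle is in fact a genuine gap.

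Two concrete problems. First, including $\sum_j q_{s}^{0,j}\log q_{s}^{0,j}$ inside $V$ forces the residue $R=h_{s,1}-\gamma_s V$ to contain $(\beta_s-\gamma_s)\sum_j q_{s}^{0,j}\log q_{s}^{0,j}$, whose sign is not controlled: the lemma only assumes $\beta_s\geq 0$, not $\beta_s\geq\gamma_s$. A one-target, one-measurement, one-state check with $\beta_s=0$, $\gamma_s=0.5$, $\kappa_{s,1,x}=-0.5$, $\kappa_{s,1,s}=1$ (all hypotheses satisfied) gives $R(p)=-\tfrac{1}{2}p\log p-(1-p)\log(1-p)$, which is strictly concave. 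So $R$ is not convex in general and your decomposition cannot succeed as written. Second, your initial expansion $H(\boldsymbol{x}^i,a_s^i)=H(\boldsymbol{x}^i)+H(a_s^i\mid\boldsymbol{x}^i)$ leaves the residue, after subtracting $\gamma_s V$, carrying $-(\kappa_{s,1,s}-\gamma_s)H(a_s^i\mid\boldsymbol{x}^i)$; this requires $\kappa_{s,1,s}\geq\gamma_s$, which is again not a hypothesis.

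The paper's proof avoids both issues. It does \emph{not} fold the false-alarm term into the Vontobel block (that term is already convex and can be left alone), and it uses a different chain-rule rewriting of the entropy part: in the case $\kappa_{s,1,x}<0$ it writes
\[
\kappa_{s,1,x}H(\boldsymbol{x}^i)+\kappa_{s,1,s}H(\boldsymbol{x}^i,a_s^i)
= -\kappa_{s,1,x}H(a_s^i\mid\boldsymbol{x}^i)+\tilde{\kappa}\,H(\boldsymbol{x}^i\mid a_s^i)+\tilde{\kappa}\,H(a_s^i),
\]
with $\tilde{\kappa}=\kappa_{s,1,x}+\kappa_{s,1,s}$. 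The point of this form is that the coefficient landing on $H(a_s^i)$ is \emph{exactly} $\tilde{\kappa}$, so that after pairing $-\tilde{\kappa}H(a_s^i)$ with the concave block and applying Vontobel per target $i$ (using $\sum_{j=0}^{m_s}q_s^{i,j}=1$), the leftover coefficient is $\tilde{\kappa}-\gamma_s\geq 0$, which is precisely the hypothesis $\kappa_{s,1,s}+\kappa_{s,1,x}\geq\gamma_s$. Your plan has the right ingredients; the missing idea is this alignment of the $H(a_s^i)$ coefficient with $\kappa_{s,1,s}+\kappa_{s,1,x}$ rather than with $\kappa_{s,1,s}$ alone.
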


The proof of lemma \ref{lem:h1t_Convexity} is in \cite[app C]{WilLau16}\xspace. The following lemma provides coefficients which fit (\ref{eq:ConvexBFE}) into the form (\ref{eq:ConvexBFEDecomp}), in order to permit solution using PDCA.

\begin{lemma}\label{lem:CoefficientSequential}
Given the following, (\ref{eq:ConvexBFEDecomp}) is equivalent to (\ref{eq:ConvexBFE}):
\begin{align}
\kappa_{f,x} &= \kappa_{f,s} = \frac{\eta}{S+1}, \\
\kappa_{s,1,x} &= -\frac{\eta}{S+1}, \\
\kappa_{s,1,s} &= \gamma_{s} + \frac{\eta}{S+1}, \\
\kappa_{s,2,x} &= -\left(1-\gamma_{s}-\frac{2\eta}{S+1}\right), \\
\kappa_{s,2,s} &= 1-\gamma_{s}-\frac{2\eta}{S+1},
\end{align}
where $S=|\mathcal{S}|$ is the number of scans in the problem
\end{lemma}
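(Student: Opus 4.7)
The statement is an algebraic identity: sum the blocks $f$, $h_{s,1}$, $h_{s,2}$ with the stated weights and check that one recovers the convex free energy (\ref{eq:ConvexBFE}) term-by-term. The only non-trivial preliminary is that $f$ and the $h_{s,\cdot}$ blocks are written in terms of the joint entropy $H(\boldsymbol{x}^i,a_s^i)$, whereas (\ref{eq:ConvexBFE}) uses the conditional entropy $H(a_s^i|\boldsymbol{x}^i)$. I would begin by invoking
\begin{equation*}
H(a_s^i|\boldsymbol{x}^i) = H(\boldsymbol{x}^i,a_s^i) - H(\boldsymbol{x}^i),
\end{equation*}
which is exact under the marginalisation constraint (\ref{eq:Constr_qAa}) that is in force throughout the decomposition. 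This rewrites the second line of (\ref{eq:ConvexBFE}) so that both sides of (\ref{eq:ConvexBFEDecomp}) are linear combinations of the same four building blocks: $H(\boldsymbol{x}^i)$, $H(\boldsymbol{x}^i,a_s^i)$, $\mathbb{E}[\log\psi^i]$, $\mathbb{E}[\log\psi_s^i]$, plus the two purely-belief terms $q_s^{0,j}\log q_s^{0,j}$ and $(1-q_s^{i,j})\log(1-q_s^{i,j})$.

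\textbf{Matching the easy terms.} The linear expectation $\mathbb{E}[\log\psi^i(\boldsymbol{x}^i)]$ appears only in $f$, with coefficient $-1$, matching (\ref{eq:ConvexBFE}) immediately; likewise for $\mathbb{E}[\log\psi_s^i(\boldsymbol{x}^i,a_s^i)]$. The $q_s^{0,j}\log q_s^{0,j}$ term appears only in $h_{s,1}$ with coefficient $\beta_s$, and the $(1-q_s^{i,j})\log(1-q_s^{i,j})$ term appears only in $h_{s,1}$ with coefficient $-\gamma_s$. Both match (\ref{eq:ConvexBFE}) trivially. So no constraint on the $\kappa$ coefficients comes from these pieces.

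\textbf{Matching the entropy coefficients.} This is the only place there is something to verify. After substituting the conditional-entropy identity, the coefficient of $H(\boldsymbol{x}^i)$ in (\ref{eq:ConvexBFE}) is $-1 + S = S-1$ (one contribution of $-1$ from the outer term, and $+1$ per scan from the decomposition of $H(a_s^i|\boldsymbol{x}^i)$), and the coefficient of $H(\boldsymbol{x}^i,a_s^i)$ is $-1$ for every $s$. In (\ref{eq:ConvexBFEDecomp}) the corresponding coefficients are
\begin{align*}
\text{coeff of } H(\boldsymbol{x}^i) &= -\kappa_{f,x} - \sum_{s\in\mathcal{S}}(\kappa_{s,1,x}+\kappa_{s,2,x}), \\
\text{coeff of } H(\boldsymbol{x}^i,a_s^i) &= -\kappa_{f,s} - \kappa_{s,1,s} - \kappa_{s,2,s}.
\end{align*}
I would then plug in the claimed values and use $\eta = 1 - \sum_{s\in\mathcal{S}}\gamma_s$ to evaluate each sum, obtaining $S-1$ and $-1$ respectively. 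This closes the proof.

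\textbf{Where the subtlety lies.} The computation itself is routine bookkeeping, with no hidden obstacle; the main thing to be careful about is the use of the marginalisation constraint (\ref{eq:Constr_qAa}) to justify the conditional/joint entropy swap, and the careful accounting that $\kappa_{f,x}$ appears only once in the decomposition while $\kappa_{s,1,x}$ and $\kappa_{s,2,x}$ are summed over $s$. The ``real'' content of the lemma, already absorbed in the preceding remarks and in Lemma~\ref{lem:h1t_Convexity}, is that the freedom in choosing the $\kappa$'s has been spent on enforcing the per-block convexity hypotheses of $f$, $h_{s,1}$, $h_{s,2}$ (in particular $\kappa_{f,x},\kappa_{f,s}>0$, $\kappa_{s,1,s}+\kappa_{s,1,x}\ge\gamma_s$, and $\kappa_{s,2,x}+\kappa_{s,2,s}\ge 0$), all of which can be verified as a sanity check on the claimed values using only $\eta\ge 0$ and $\gamma_s\in[0,1)$.
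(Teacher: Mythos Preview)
Your proposal is correct and follows essentially the same route as the paper: substitute the claimed $\kappa$ values into the decomposition and verify that the coefficients of $H(\boldsymbol{x}^i)$ and $H(\boldsymbol{x}^i,a_s^i)$ agree with those of (\ref{eq:ConvexBFE}) after rewriting the conditional entropy as a difference of joint and marginal entropies. The paper phrases the target sums as $\kappa_{f,x}+\sum_s(\kappa_{s,1,x}+\kappa_{s,2,x})=-(S-1)$ and $\kappa_{f,s}+\kappa_{s,1,s}+\kappa_{s,2,s}=1$, which is your condition with the leading minus sign absorbed; otherwise the arguments coincide, and your write-up is in fact more explicit about the trivially matching terms and the role of the constraint (\ref{eq:Constr_qAa}).
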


Lemma \ref{lem:CoefficientSequential} can be shown by substituting the coefficients into (\ref{eq:ConvexBFEDecomp}), and showing that the coefficients of $H(\boldsymbol{x}^i)$ sum to $-(S-1)$, and the coefficients of $H(\boldsymbol{x}^i,a_{s}^i)$ sum to $1$. Examining the values in lemma \ref{lem:CoefficientSequential}, we find (\ref{eq:ConvexBFEDecomp}) is convex (under the constraints (\ref{eq:ConstrPositivity})--(\ref{eq:Constr_qti_SumToOne})). This in turn shows that (\ref{eq:ConvexBFE}) is convex.

\subsection{Solution of convex energy}
\label{ss:ConvexSolution}
The convex free energy in (\ref{eq:ConvexBFEDecomp}) is of the form (\ref{eq:HSPrimal}), so we propose a solution using PDCA. As discussed in section \ref{ss:Duality}, this is achieved by iterating (\ref{eq:HSDCAUpdate1}), (\ref{eq:HSDCAPrimal}) and (\ref{eq:HSDCAUpdate2}), commencing with $\boldsymbol{\lambda}_{s}=0\;\forall\;s\in\mathcal{S}$. To begin, we note  that if the gradient of a block $h_i$ with respect to a subset of variables is zero,\footnote{As discussed in \cite{HazSha10}, constraints for each block can be incorporated into $h_i$, so this condition also implies that no constraints are incorporated for the subset of variables in block $h_i$.} then those updated dual variables in (\ref{eq:HSDCAUpdate2}) will be zero.

The algorithm proceeds by repeatedly cycling through blocks $h_{s,1}$ and $h_{s,2}$ for each $s\in\mathcal{S}$. The following lemmas provide algorithms for solving each block in turn; the proofs can be found in \cite[app C]{WilLau16}\xspace.

\begin{lemma}\label{lem:Blockt1}
Consider the problem to be solved in block $(s,1)$:
\begin{align}
\minimise \; & f([q^i(\boldsymbol{x}^i)],[q_{s}^i(\boldsymbol{x}^i,a_{s}^i)]) \notag \\
& + h_{s,1}([q^i(\boldsymbol{x}^i)],[q_{s}^i(\boldsymbol{x}^i,a_{s}^i)]) \notag\\
& + \sum_{i=1}^n\sum_{\boldsymbol{x}^i} \mu^i(\boldsymbol{x}^i)q^i(\boldsymbol{x}^i) \notag\\
& + \sum_{\tau\in\mathcal{S}}\sum_{i=1}^n\sum_{\boldsymbol{x}^i}\sum_{a_{\tau}^i=0}^{m_{\tau}} \mu^i_{\tau}(\boldsymbol{x}^i,a_{\tau}^i)q^i_{\tau}(\boldsymbol{x}^i,a_{\tau}^i)
\end{align}
under the constraints (\ref{eq:ConstrPositivity})--(\ref{eq:Constr_qti_SumToOne}), where (\ref{eq:Constr_qAa})--(\ref{eq:Constr_qti_SumToOne}) is applied only for scan $s$, and $\kappa_{f,x}+\kappa_{s,1,x}=0$. The solution of this problem is given by:
\begin{equation}\label{eq:Blockt1QAaRecovery}
q_{s}^i(\boldsymbol{x}^i,a_{s}^i) = q^i_{s}(a_{s}^i) \times \frac{
\exp\left\{\frac{ \phi^i(\boldsymbol{x}^i) +
\phi^i_{s}(\boldsymbol{x}^i,a_{s}^i)}{\kappa_{f,s}+\kappa_{s,1,s}}\right\}
}{
\exp\tilde{\phi}^i_{s}(a_{s}^i)
},
\end{equation}
where
\begin{align}
\phi^i(\boldsymbol{x}^i) &= \log\psi^i(\boldsymbol{x}^i) - \mu^i(\boldsymbol{x}^i), \\
\phi^i_{s}(\boldsymbol{x}^i,a_{s}^i) &= \log\psi^i_{s}(\boldsymbol{x}^i,a_{s}^i) - \mu^i_{s}(\boldsymbol{x}^i,a_{s}^i), \\
\tilde{\phi}^i_{s}(a_{s}^i)  &= 
\log\left[ \sum_{\boldsymbol{x}^i}\exp\left\{\frac{\phi^i(\boldsymbol{x}^i) + \phi^i_{s}(\boldsymbol{x}^i,a_{s}^i)}{\kappa_{f,s}+\kappa_{s,1,s}}\right\}\right].
\end{align}
The single scan marginal $q_{s}^i(a_{s}^i)$ is the solution of the following sub-problem:
\begin{align}
\minimise \; & \sum_{i=1}^{n}\sum_{j=0}^{m_{s}} q_{s}^{i,j}\log\frac{q_{s}^{i,j}}{w_{s}^{i,j}}  
 + \tilde{\beta}_{s} \sum_{j=1}^{m_{s}}q_{s}^{0,j}\log{q_{s}^{0,j}} \notag \\
& - \tilde{\gamma}_{s} \sum_{i=1}^{n}\sum_{j=1}^{m_{s}} (1-q_{s}^{i,j})\log (1-q_{s}^{i,j}),  \label{eq:SingleScanSubproblem}
\end{align}
subject to (\ref{eq:Constr_qtij_positivity})--(\ref{eq:Constr_qti_SumToOne}) and the additional constraint $\sum_{j=0}^{m_{s}} q_{s}^{i,j} = 1 \;\forall\;i$, derived from (\ref{eq:ConstrSumToOne}) and (\ref{eq:Constr_qtij}). In (\ref{eq:SingleScanSubproblem}),  $\tilde{\beta}_{s}=\frac{\beta_{s}}{\kappa_{f,s}+\kappa_{s,1,s}}$, $\tilde{\gamma}_{s}=\frac{\gamma_{s}}{\kappa_{f,s}+\kappa_{s,1,s}}$, and $\log w_{s}^{i,j}=\tilde{\phi}^i_{s}(j)$. This sub-problem is studied in theorem \ref{th:SingleScan}. 
The update to $\lambda$ is
\begin{align}
\lambda_{s,1,x}^i(\boldsymbol{x}^i) &\stackrel{c}{=} \phi^i(\boldsymbol{x}^i) - \kappa_{f,x}\log q^i(\boldsymbol{x}^i), \label{eq:Blockt1UpdateLA} \\
\lambda_{s,1,s}^i(\boldsymbol{x}^i,a_{s}^i) &\stackrel{c}{=} \phi^i_{s}(\boldsymbol{x}^i,a_{s}^i) - \kappa_{f,s}\log q_{s}^i(\boldsymbol{x}^i,a_{s}^i),  \label{eq:Blockt1UpdateLAa} \\
q^i(\boldsymbol{x}^i) &= \sum_{a_{s}^i=0}^{m_{s}} q_{s}^i(\boldsymbol{x}^i,a_{s}^i),
\end{align}
where $\stackrel{c}{=}$ denotes equality up to an additive constant. For other scans $\tau\in\mathcal{S}$, $\tau\neq s$, $\lambda_{s,1,\tau}^i(\boldsymbol{x}^i,a_{\tau}^i) = 0$.
\end{lemma}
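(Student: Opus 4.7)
The plan is to reduce the block-$(s,1)$ primal to a single-scan subproblem in the marginals $q_s^i(a_s^i)$, and then read off the dual updates from the primal--dual identity of Section~\ref{ss:Duality}. I would begin by separating variables by scan. Because $h_{s,1}$ depends only on the scan-$s$ beliefs, the variables $q_\tau^i(\boldsymbol{x}^i, a_\tau^i)$ for $\tau \neq s$ appear only through $f$ and the linear $\mu_\tau$ terms, subject only to positivity and sum-to-one. Since $\nabla h_{s,1} = 0$ with respect to those variables, the identity $\lambda_i = \nabla h_i(q^\star)$ --- which follows from (\ref{eq:HSDCAPrimal})--(\ref{eq:HSDCAUpdate2}) via optimality and is noted in the footnote --- immediately gives $\lambda_{s,1,\tau}^i = 0$ for $\tau\neq s$, without actually solving the subproblem for those scans.

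For the scan-$s$ block, I would use the marginalisation constraint (\ref{eq:Constr_qAa}) to write $q^i(\boldsymbol{x}^i) = \sum_{a_s^i} q_s^i(\boldsymbol{x}^i, a_s^i)$, eliminating $q^i$ as an independent variable. The hypothesis $\kappa_{f,x} + \kappa_{s,1,x} = 0$ then cancels the bare $H(\boldsymbol{x}^i)$ contributions of $f$ and $h_{s,1}$, leaving $(\kappa_{f,s}+\kappa_{s,1,s})H(\boldsymbol{x}^i,a_s^i)$ as the only entropy penalty on the pairwise belief. Writing $q_s^i(\boldsymbol{x}^i, a_s^i) = q_s^i(a_s^i)\,q_s^i(\boldsymbol{x}^i\mid a_s^i)$ and applying the chain rule $H(\boldsymbol{x}^i,a_s^i) = H(a_s^i) + H(\boldsymbol{x}^i\mid a_s^i)$ splits the objective into a marginal part in $q_s^i(a_s^i)$ and a conditional part in $q_s^i(\boldsymbol{x}^i\mid a_s^i)$. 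Introducing $\phi^i(\boldsymbol{x}^i) = \log\psi^i - \mu^i$ and $\phi_s^i(\boldsymbol{x}^i,a_s^i)=\log\psi_s^i - \mu_s^i$, the conditional part fits Lemma~\ref{lem:CondEntOpt} exactly, with effective pairwise factor $\exp\{[\phi^i + \phi_s^i]/(\kappa_{f,s}+\kappa_{s,1,s})\}$. Its Gibbs minimiser, reassembled with $q_s^i(a_s^i)$, is precisely (\ref{eq:Blockt1QAaRecovery}), and its minimum contributes $-(\kappa_{f,s}+\kappa_{s,1,s}) \sum_{a_s^i} q_s^i(a_s^i)\, \tilde{\phi}_s^i(a_s^i)$ to the objective. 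Substituting $q_s^{i,j} = q_s^i(a_s^i = j)$ and dividing through by $\kappa_{f,s}+\kappa_{s,1,s}$ converts this into the form (\ref{eq:SingleScanSubproblem}) with $\log w_s^{i,j} = \tilde{\phi}_s^i(j)$, $\tilde{\beta}_s = \beta_s/(\kappa_{f,s}+\kappa_{s,1,s})$ and $\tilde{\gamma}_s = \gamma_s/(\kappa_{f,s}+\kappa_{s,1,s})$, matching the claim.

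Finally, the $\lambda$ updates (\ref{eq:Blockt1UpdateLA})--(\ref{eq:Blockt1UpdateLAa}) follow from $\lambda_{s,1} = -\mu - \nabla f(q^\star)$ of (\ref{eq:HSDCAUpdate2}), by differentiating the entropy and expectation terms of $f$ at the primal optimum, and collecting $\log\psi - \mu$ into the $\phi$ shorthand; the ``$+1$'' arising from the derivative of $q\log q$ is absorbed into the additive constant signified by $\stackrel{c}{=}$. The main obstacle will be bookkeeping: tracking the $\kappa_{f,x}+\kappa_{s,1,x}=0$ cancellation together with the joint-entropy chain rule, and verifying that after the partial minimisation the surviving $H(a_s^i)$ combines cleanly with the $(\beta_s,\gamma_s)$ penalties and the log-partition $\tilde{\phi}_s^i(j)$ to produce the single-scan FFE of (\ref{eq:SingleScanSubproblem}) with the coefficients in the precise form stated, rather than a perturbed variant.
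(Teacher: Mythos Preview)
Your proposal is correct and follows essentially the same route as the paper: separate off the $\tau\neq s$ scans and use $\nabla h_{s,1}=0$ there to obtain $\lambda_{s,1,\tau}^i=0$; exploit $\kappa_{f,x}+\kappa_{s,1,x}=0$ and the chain rule $H(\boldsymbol{x}^i,a_s^i)=H(a_s^i)+H(\boldsymbol{x}^i\mid a_s^i)$ to isolate a conditional-entropy subproblem amenable to Lemma~\ref{lem:CondEntOpt}; substitute back and divide by $\kappa_{f,s}+\kappa_{s,1,s}$ to obtain (\ref{eq:SingleScanSubproblem}); and read off the $\lambda$ updates from (\ref{eq:HSDCAUpdate2}) with the $+1$ absorbed into the additive constant. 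The only cosmetic difference is that the paper writes the combined factor as $\tilde{\phi}_s^i(\boldsymbol{x}^i,a_s^i)=\phi^i+\phi_s^i$ before invoking Lemma~\ref{lem:CondEntOpt}, whereas you carry $\phi^i$ and $\phi_s^i$ separately; the computation is identical.
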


\begin{lemma}\label{lem:Blockt2}
The solution of block $(s,2)$:
\begin{align}
\minimise \; & f([q^i(\boldsymbol{x}^i)],[q_{s}^i(\boldsymbol{x}^i,a_{s}^i)]) \notag \\
& + h_{s,2}([q^i(\boldsymbol{x}^i)],[q_{s}^i(\boldsymbol{x}^i,a_{s}^i)]) \notag\\
& + \sum_{i=1}^n\sum_{\boldsymbol{x}^i} \mu^i(\boldsymbol{x}^i)q^i(\boldsymbol{x}^i) \notag\\
& + \sum_{\tau\in\mathcal{S}}\sum_{i=1}^n\sum_{\boldsymbol{x}^i}\sum_{a_{\tau}^i=0}^{m_{\tau}} \mu^i_{\tau}(\boldsymbol{x}^i,a_{\tau}^i)q^i_{\tau}(\boldsymbol{x}^i,a_{\tau}^i),
\end{align}
under the constraints (\ref{eq:ConstrPositivity})--(\ref{eq:Constr_qAa}) (including (\ref{eq:Constr_qAa}) only for scan $s$) is:
\begin{align}
q^i(\boldsymbol{x}^i) &\propto \exp\left\{
\frac{
\phi^i(\boldsymbol{x}^i) + \tilde{\phi}^i_{s}(\boldsymbol{x}^i)  
}{
\kappa_{f,x}+\kappa_{f,s}+\kappa_{s,2,x}+\kappa_{s,2,s}
} \right\}, \label{eq:Blockt2qA} \\
q_{s}^i(\boldsymbol{x}^i,a_{s}^i) &= q^i(\boldsymbol{x}^i) \times \frac{
\exp\left\{\frac{\phi^i_{s}(\boldsymbol{x}^i,a_{s}^i)}{\kappa_{f,s}+\kappa_{s,2,s}}\right\}
}{
\exp\left\{\frac{\tilde{\phi}^i_{s}(\boldsymbol{x}^i)}{\kappa_{f,s}+\kappa_{s,2,s}}\right\}
}, \label{eq:Blockt2qAa}
\end{align}
where
\begin{align}
\phi^i(\boldsymbol{x}^i) &= \log\psi^i(\boldsymbol{x}^i) - \mu^i(\boldsymbol{x}^i), \\
\phi^i_{s}(\boldsymbol{x}^i,a_{s}^i) &= \log\psi^i_{s}(\boldsymbol{x}^i,a_{s}^i) - \mu^i_{s}(\boldsymbol{x}^i,a_{s}^i),  \\
\tilde{\phi}^i_{s}(\boldsymbol{x}^i)  &= (\kappa_{f,s}+\kappa_{s,2,s}) \notag\\
&\quad\times \log\left[\sum_{a_{s}^i=0}^{m_{s}}\exp\left\{\frac{\phi^i_{s}(\boldsymbol{x}^i,a_{s}^i)}{\kappa_{f,s}+\kappa_{s,2,s}}\right\}\right].
\end{align}
The update to $\lambda$ is
\begin{align}
\lambda_{s,2,x}^i(\boldsymbol{x}^i) &\stackrel{c}{=} \phi^i(\boldsymbol{x}^i) - \kappa_{f,x}\log q^i(\boldsymbol{x}^i),  \\
\lambda_{s,2,s}^i(\boldsymbol{x}^i,a_{s}^i) &\stackrel{c}{=} \phi^i_{s}(\boldsymbol{x}^i,a_{s}^i) - \kappa_{f,s}\log q_{s}^i(\boldsymbol{x}^i,a_{s}^i).
\end{align}
For other scans $\tau\in\mathcal{S}$, $\tau\neq s$, $\lambda_{s,2,\tau}^i(\boldsymbol{x}^i,a_{\tau}^i) = 0$.
\end{lemma}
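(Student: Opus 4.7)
The plan is to solve the block-$(s,2)$ subproblem by direct KKT analysis, exploiting the fact that $f + h_{s,2}$ together with the linear $\mu$-penalties is additively separable across targets $i$. Fixing a target $i$, the per-target objective combines $-(\kappa_{f,x}+\kappa_{s,2,x})H(\boldsymbol{x}^i) - \mathbb{E}[\log\psi^i(\boldsymbol{x}^i)] - (\kappa_{f,s}+\kappa_{s,2,s})H(\boldsymbol{x}^i,a_s^i) - \mathbb{E}[\log\psi_s^i(\boldsymbol{x}^i,a_s^i)]$ with the relevant $\mu$-linear terms, subject to positivity (\ref{eq:ConstrPositivity}), scalar normalisation (\ref{eq:ConstrSumToOne}), and the marginalisation (\ref{eq:Constr_qAa}) for scan $s$ only. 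Strict convexity of $f$ combined with convexity of $h_{s,2}$ (under the coefficient conditions already established) guarantees a unique minimiser, so producing a primal--dual KKT point suffices.

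Introduce a Lagrange multiplier $\nu^i(\boldsymbol{x}^i)$ for (\ref{eq:Constr_qAa}) and $\alpha^i$ for the scalar normalisation. Stationarity in $q_s^i(\boldsymbol{x}^i,a_s^i)$ yields $(\kappa_{f,s}+\kappa_{s,2,s})(\log q_s^i + 1) = \log\psi_s^i - \mu_s^i - \nu^i(\boldsymbol{x}^i) - \alpha^i$, so $q_s^i(\boldsymbol{x}^i,a_s^i) \propto \exp[\phi_s^i(\boldsymbol{x}^i,a_s^i)/(\kappa_{f,s}+\kappa_{s,2,s})]$ with a prefactor depending on $\boldsymbol{x}^i$ only through $\nu^i$. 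Summing over $a_s^i$ and enforcing (\ref{eq:Constr_qAa}) pins this prefactor down and directly yields (\ref{eq:Blockt2qAa}); it simultaneously expresses $\nu^i(\boldsymbol{x}^i)$, up to an irrelevant constant, as $-(\kappa_{f,s}+\kappa_{s,2,s})\log q^i(\boldsymbol{x}^i) + \tilde{\phi}^i_s(\boldsymbol{x}^i)$. Stationarity in $q^i(\boldsymbol{x}^i)$ reads $(\kappa_{f,x}+\kappa_{s,2,x})(\log q^i + 1) = \log\psi^i - \mu^i + \nu^i(\boldsymbol{x}^i)$, where the sign on $\nu^i$ flips because $q^i$ enters (\ref{eq:Constr_qAa}) with the opposite sign from $q_s^i$. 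Substituting the derived $\nu^i$ and collecting the $\log q^i$ terms produces the combined coefficient $\kappa_{f,x}+\kappa_{f,s}+\kappa_{s,2,x}+\kappa_{s,2,s}$ in the denominator, establishing (\ref{eq:Blockt2qA}).

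For the dual update I apply (\ref{eq:HSDCAUpdate2}), $\lambda_{s,2} = -\mu - \nabla f(q^*)$. Differentiating $f$ gives $\partial f/\partial q^i(\boldsymbol{x}^i) = \kappa_{f,x}(\log q^i + 1) - \log\psi^i(\boldsymbol{x}^i)$, so up to an additive constant $\lambda_{s,2,x}^i(\boldsymbol{x}^i) = \phi^i(\boldsymbol{x}^i) - \kappa_{f,x}\log q^i(\boldsymbol{x}^i)$, and the analogous computation for $q_s^i$ gives the second update. For the inactive scans $\tau \neq s$, the block $h_{s,2}$ contains no term in $q_\tau^i$, so by Fenchel conjugacy the corresponding dual coordinates vanish --- equivalently, primal stationarity in $q_\tau^i$ forces $\nabla f = -\mu_\tau^i$ up to the normalisation constant, so (\ref{eq:HSDCAUpdate2}) returns zero. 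The main obstacle is the bookkeeping around $\nu^i(\boldsymbol{x}^i)$: it must be eliminated consistently between the two stationarity conditions, and the log-sum-exp grouping that defines $\tilde{\phi}^i_s(\boldsymbol{x}^i)$ only falls out cleanly once the prefactor $\kappa_{f,s}+\kappa_{s,2,s}$ is tracked carefully.
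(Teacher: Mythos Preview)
Your argument is correct. The difference from the paper is one of packaging rather than substance. The paper first regroups the entropy terms so that the coefficient on $H(\boldsymbol{x}^i)$ becomes $\tilde{\kappa}=\kappa_{f,x}+\kappa_{f,s}+\kappa_{s,2,x}+\kappa_{s,2,s}$ and the coefficient on $H(a_s^i\mid\boldsymbol{x}^i)$ becomes $\kappa_{f,s}+\kappa_{s,2,s}$; it then invokes Lemma~\ref{lem:CondEntOpt} to perform the inner partial minimisation over $q_s^i(\boldsymbol{x}^i,a_s^i)$ with $q^i(\boldsymbol{x}^i)$ held fixed (yielding (\ref{eq:Blockt2qAa}) and the log-sum-exp $\tilde{\phi}^i_s$), and finally applies Lemma~\ref{lem:EntOpt} to the residual one-variable problem to get (\ref{eq:Blockt2qA}). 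Your direct KKT route with the multiplier $\nu^i(\boldsymbol{x}^i)$ for the marginalisation constraint is exactly what those two lemmas encapsulate: eliminating $\nu^i$ between the two stationarity equations is the same algebra as the two-stage partial minimisation. The paper's version is more modular (it reuses machinery already set up), while yours is self-contained and makes the role of the coupling constraint explicit; neither has any real advantage in rigor or generality here.
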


\begin{figure}[t]
\centering
\includegraphics[scale=0.92]{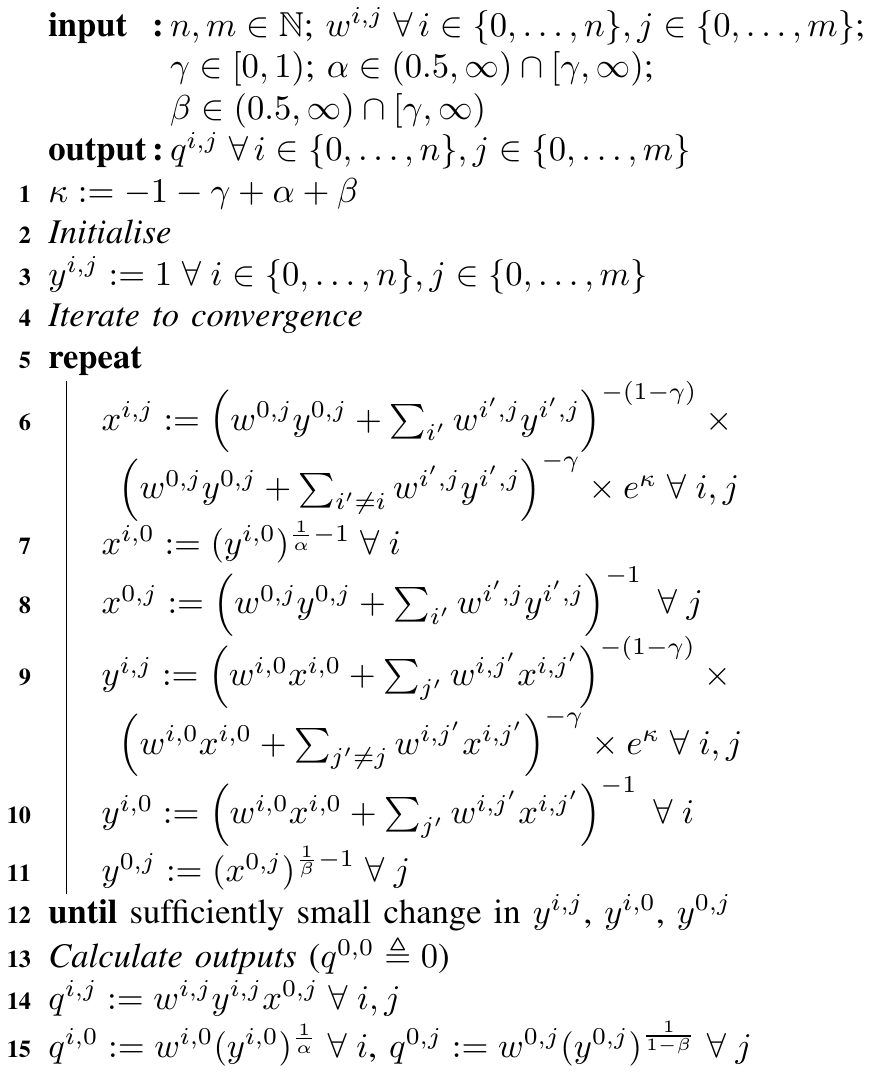}
\caption{Fractional BP algorithm for optimising single scan fractional free energy (\ref{eq:SingleScanSubproblem}), where $\alpha$ is the coefficient of $q^{i,0}\log\frac{q^{i,0}}{w^{i,0}}$ (i.e., $\alpha=1$ in this instance). Calculations marked $\forall i$ or $\sum_{i}$ are over the range $i\in\{1,\dots,n\}$ (excluding $i=0$), while those marked $\forall j$ or $\sum_{j}$ are over the range $j\in\{1,\dots,m\}$ (excluding $j=0$).\label{alg:SingleScanFFE}} 
\end{figure}

\begin{figure}[t]
\centering
\includegraphics[scale=0.92]{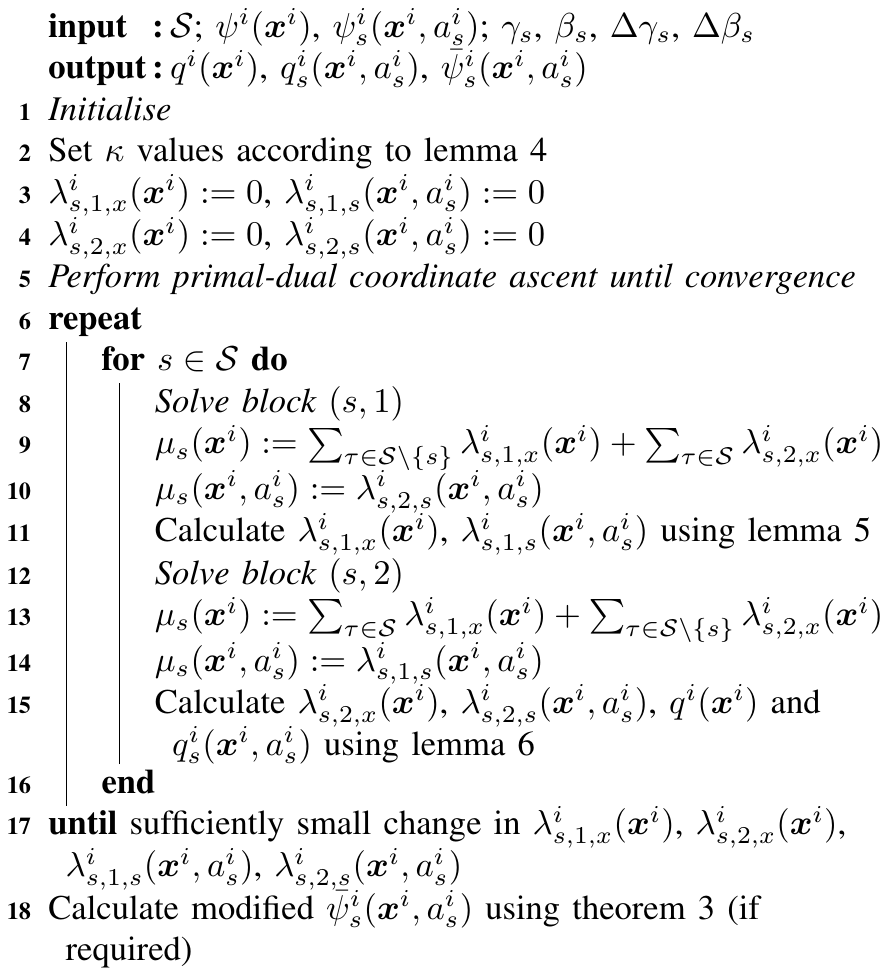}\vspace{-6pt}
\caption{PDCA algorithm for minimising convex free energy based on decomposition described in lemma \ref{lem:CoefficientSequential}.\label{alg:SequentialConvex}} \vspace{-6pt}
\end{figure}

\begin{theorem}\label{th:SingleScan}
The iterative procedure in figure \ref{alg:SingleScanFFE} converges to the minimum of the problem in (\ref{eq:SingleScanSubproblem}), provided that $\tilde{\beta}_{s}>0.5$, $\tilde{\gamma}_{s}\in[0,1)$, and a feasible interior solution exists.
\end{theorem}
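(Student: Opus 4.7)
The plan is to establish convergence by interpreting the iteration in figure \ref{alg:SingleScanFFE} as a block coordinate ascent on the dual of the strictly convex problem in (\ref{eq:SingleScanSubproblem}), which is the natural generalisation of the convergence argument used in \cite{WilLau10a,WilLau12} for the standard ($\tilde{\gamma}_s=1$, no fractional coefficient) case.

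First I would verify that, under the stated conditions $\tilde{\beta}_s>0.5$ and $\tilde{\gamma}_s\in[0,1)$, the objective in (\ref{eq:SingleScanSubproblem}) is strictly convex on the feasible polytope. The term $\sum_{i,j}q_s^{i,j}\log(q_s^{i,j}/w_s^{i,j})$ contributes strict positive curvature in each variable; the $-\tilde{\gamma}_s(1-q_s^{i,j})\log(1-q_s^{i,j})$ term is concave but bounded in curvature by a factor strictly less than one; the $\tilde{\beta}_s q_s^{0,j}\log q_s^{0,j}$ term supplies additional positive curvature along each column. The $0.5$ threshold should arise by the same type of pairing argument used in the proof of lemma \ref{lem:h1t_Convexity}: splitting the $q\log q$ mass between a ``column'' component that must dominate the false-alarm concavity after substitution via the column-sum constraint, and a ``row'' component that must dominate the $-\tilde{\gamma}_s$ concave term after substitution via the row-sum constraint.

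Next I would form the partial Lagrangian by dualising the consistency constraints (\ref{eq:Constr_qti_SumToOne}) and the row-sum constraints $\sum_j q_s^{i,j}=1$, introducing dual variables $\nu^i$ and $\xi^j$. Slater's condition is satisfied because a strictly feasible interior solution exists by hypothesis, so strong duality holds and the dual has a unique optimum (since the primal is strictly convex, the dual is differentiable, and by the structure of $(1-q)\log(1-q)$ the gradient is continuous on the relative interior). I would then eliminate the primal variables $q_s^{i,j}$ from the KKT stationarity conditions, yielding a coupled system of fixed-point equations in the messages $\nu^i$ and $\xi^j$. Identifying the updates in figure \ref{alg:SingleScanFFE} with the exact block maximisers of the dual in one set of variables with the other held fixed, together with the compactness of the bounded feasible set and the strict concavity of the dual, would establish monotonic improvement and convergence to the unique optimum by the standard argument for block coordinate ascent on a smooth strictly concave function.

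The main obstacle will be the algebraic step that identifies the specific message rules in figure \ref{alg:SingleScanFFE} with the closed-form per-block maximisers of the dual, and pinning down the precise role of the threshold $\tilde{\beta}_s>0.5$. In the $\tilde{\gamma}_s=1$, $\tilde{\beta}_s=1$ setting, this step reduces to the contraction-mapping argument of \cite{WilLau10a}; for general fractional coefficients the per-block update is no longer a rational function in the messages but involves implicit inversion of $(1-q)\log(1-q)$-type expressions, so I would need to show that each block problem nevertheless admits a unique closed-form (or one-dimensional rootfinding) solution that matches the algorithm. Once this identification is made, Zangwill-style convergence for coordinate ascent on a strictly concave, coercive objective over a closed convex domain completes the proof.
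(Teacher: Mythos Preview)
Your proposed approach has a genuine gap. The iteration in figure~\ref{alg:SingleScanFFE} is \emph{not} exact block coordinate ascent on the dual: the KKT stationarity condition for $q_s^{i,j}$ reads $q_s^{i,j}=w_s^{i,j}\exp\{\bar{\lambda}_i+\bar{\mu}_j+\kappa\}/(1-q_s^{i,j})^{\tilde{\gamma}_s}$, which is implicit in $q_s^{i,j}$. The algorithm resolves this by substituting the \emph{previous} iterate of $q_s^{i,j}$ on the right-hand side, i.e.\ it performs one step of a fixed-point map rather than an exact per-block solve. Consequently the identification you hope for---that each sweep equals the closed-form block maximiser of the dual---fails for $\tilde{\gamma}_s\in(0,1)$, and the Zangwill/coordinate-ascent convergence argument does not apply. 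You anticipate this obstacle, but the resolution you suggest (that the block solve admits a closed form or one-dimensional rootfind matching the algorithm) is not what the algorithm actually does.

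Relatedly, the threshold $\tilde{\beta}_s>0.5$ does not arise from convexity of (\ref{eq:SingleScanSubproblem}); lemma~\ref{lem:h1t_Convexity} only requires $\beta_s\ge 0$. The paper's proof instead shows that the message map is a contraction in the metric $d(\boldsymbol{x},\tilde{\boldsymbol{x}})=\max_{i,j}|\log(x_{i,j}/\tilde{x}_{i,j})|$, extending the argument of \cite{WilLau12,Von13}. The update for the false-alarm message takes the form $y_{0,j}\leftarrow x_{0,j}^{1/\tilde{\beta}_s-1}$, and $|1/\tilde{\beta}_s-1|<1$ (i.e.\ $\tilde{\beta}_s>0.5$) is exactly what is needed for this component to be contractive. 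The paper then combines this with a non-expansion/contraction analysis of the remaining update components, treating separately the cases with and without missed-detection and false-alarm events; in the hardest case ($w_{i,0}=w_{0,j}=0$) an $n$-step composite contraction is established via a Hilbert-metric style argument. None of this is accessible through the dual-ascent route you describe.
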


This theorem is proven in \cite[app D]{WilLau16}\xspace. Implementation of the algorithm can be challenging due to numerical underflow. This can be mitigated by implementing the updates in the log domain using well-known numerical optimisations for log-sum-exp \cite[p844]{PreTeu07}. An alternative method for solving this form of problem based on Newton's method was provided in \cite{Wil14a}; the iterative BP-like method in figure \ref{alg:SingleScanFFE} is significantly faster in most cases.

\begin{theorem}\label{th:MultiScan}
The iterative procedure in figure \ref{alg:SequentialConvex} converges to the minimum of the overall convex free energy, provided that weights are as given in lemma \ref{lem:CoefficientSequential}, $\gamma_{s}\geq 0$, and $\sum_{s\in\mathcal{S}}\gamma_{s}<1$.
\end{theorem}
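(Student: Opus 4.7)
The plan is to invoke the convergence result for primal-dual block coordinate ascent (PDCA) from \cite{HazSha10,Tse93}, summarised in Section \ref{ss:Duality}: for any additive decomposition $f + \sum_i h_i$ of a proper closed convex function in which $f$ is strictly convex and each $h_i$ is proper closed convex, the block updates (\ref{eq:HSDCAUpdate1})--(\ref{eq:HSDCAUpdate2}) generate a sequence whose primal iterates converge to the unique minimiser. The bulk of the proof is therefore a verification that the decomposition (\ref{eq:ConvexBFEDecomp}), together with the block-update lemmas just established, meets these hypotheses.

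First I would verify strict convexity of the block $f$ in (\ref{eq:Block_f}). With the coefficients of Lemma \ref{lem:CoefficientSequential}, $\kappa_{f,x}=\kappa_{f,s}=\eta/(S+1)$; the hypothesis $\sum_{s\in\mathcal{S}}\gamma_s<1$ gives $\eta>0$, so both weights are strictly positive and the negative entropy and joint-entropy terms in $f$ produce a strictly convex function of the beliefs (linear expectations of $\log\psi$ do not affect this). Next I would confirm convexity of $h_{s,1}$ and $h_{s,2}$ under those same weights. For $h_{s,2}$ the sufficient conditions noted in the text reduce to $\kappa_{s,2,s}=1-\gamma_s-2\eta/(S+1)\geq 0$ (which follows from $\eta\leq 1-\gamma_s$ together with $S\geq 1$) and $\kappa_{s,2,x}+\kappa_{s,2,s}=0$, both of which hold. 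For $h_{s,1}$, Lemma \ref{lem:h1t_Convexity} applies because $\kappa_{s,1,s}=\gamma_s+\eta/(S+1)\geq 0$, $\beta_s>0$, and $\kappa_{s,1,s}+\kappa_{s,1,x}=\gamma_s$, meeting its hypothesis with equality.

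Then I would show that Figure \ref{alg:SequentialConvex} correctly executes the abstract PDCA block update (\ref{eq:HSDCAPrimal})--(\ref{eq:HSDCAUpdate2}) for each block $(f,h_{s,1})$ and $(f,h_{s,2})$. Lemmas \ref{lem:Blockt1} and \ref{lem:Blockt2} already supply closed-form expressions for the primal minimisers and the corresponding dual updates (recalling the convenient identity $\kappa_{f,x}+\kappa_{s,1,x}=0$ that drives the (s,1) reduction), so the only part that is not closed form is the single-scan sub-problem (\ref{eq:SingleScanSubproblem}) arising inside block $(s,1)$. Convergence of the inner iteration in Figure \ref{alg:SingleScanFFE} that solves this sub-problem is provided by Theorem \ref{th:SingleScan}, once one checks that the transformed parameters $\tilde\gamma_s=\gamma_s/(\gamma_s+2\eta/(S+1))\in[0,1)$ and $\tilde\beta_s=\beta_s/(\gamma_s+2\eta/(S+1))>1/2$ hold; the first is automatic from $\eta>0$, and the second constrains the admissible range of $\beta_s$ (which we may take as part of the algorithm specification).

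The main obstacle I anticipate is the interaction between the outer and inner loops: the dual variables $\mu^i,\mu^i_s$ evolve across outer iterations, so the effective node factors $w_s^{i,j}=\exp\tilde\phi^i_s(j)$ entering the inner solver change at every pass, and one must argue that the hypotheses of Theorem \ref{th:SingleScan} remain valid uniformly. Fortunately $w_s^{i,j}>0$ follows automatically from the log-sum-exp form of $\tilde\phi^i_s$, and $\tilde\gamma_s,\tilde\beta_s$ depend only on $\gamma_s$, $\beta_s$, $\eta$ and $S$, not on the duals, so uniform validity is immediate. With all three hypotheses verified, the Tseng-type block coordinate ascent convergence theorem of \cite{HazSha10} yields convergence of the dual iterates $\boldsymbol\lambda_{s,k}$, and strict convexity of $f$ gives unique recovery of the primal iterates via (\ref{eq:CDRecovery}), converging to the minimiser of the overall convex free energy (\ref{eq:ConvexBFE}).
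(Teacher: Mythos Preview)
Your proposal is correct and takes essentially the same approach as the paper: the paper's proof is a single sentence stating that the theorem is a corollary of claim 8 in \cite{HazSha10}, once one recognises that the algorithm is an instance of that PDCA framework. You have simply filled in the verification of the hypotheses (strict convexity of $f$, convexity of $h_{s,1}$ and $h_{s,2}$ under the weights of Lemma \ref{lem:CoefficientSequential}, and applicability of Theorem \ref{th:SingleScan} to the inner sub-problem) that the paper leaves implicit.
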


This theorem is a corollary of claim 8 in \cite{HazSha10}, recognising that the algorithms are an instance of this framework.

\subsection{Modification of factors between scans}
\label{sss:FactorModification}
As discussed at the beginning of this section (and illustrated in figure \ref{fig:MultipleScanBetheJPDA}), we propose solving a problem at scan $S$ involving a recent history of scans of measurements $s\in\mathcal{S}$ with fractional weights configured to give high accuracy in the newest scan, and using lower values in earlier scans. The main goal of retaining historical scans is to ensure that consistency constraints from past scans remain enforced. 

Suppose that we solve the multiple scan problem at scan $S$. When we move to scan $S'=S+1$, we will set $\gamma_S=0$, this time using the larger value for $\gamma_{S'}$. Thus we seek to modify the problem parameters at time $S$ to counteract the change of reducing $\gamma_s$ to zero. This is analogous to the approximation that JPDA makes, approximating the posterior as the product of the marginals.

More generally, suppose we have been using weights $\gamma_{s}$, $s\in\mathcal{S}$, and at the next time, we will change these to $\bar{\gamma}_{s}$. Similarly, suppose that the coefficients of the terms involving $q_{s}^{0,j}$ were $\beta_{s}$, and will be changed to $\bar{\beta}_{s}$. The following theorem gives the modification to the problem parameters necessary to ensure that the solution of the problem remains unchanged.

\begin{theorem}\label{th:SeqModification}
Let $[q^i(\boldsymbol{x}^i)]$, $[q^i_{s}(\boldsymbol{x}^i,a_{s}^i)]$ and $[q_{s}^{i,j}]$ be the solution of the problem in (\ref{eq:ConvexBFE}) using fractional weights $\gamma_{s}$ and $\beta_{s}$, $s\in\mathcal{S}$. Suppose that the weights are changed to $\bar{\gamma}_{s}=\gamma_{s}+\Delta\gamma_{s}$ and $\bar{\beta}_{s}=\beta_{s}+\Delta\beta_{s}$, and the problem parameters are changed as follows:
\begin{multline}\label{eq:SeqMod}
\log\bar{\psi}^i_{s}(\boldsymbol{x}^i,a_{s}^i = j) = \log\psi^i_{s}(\boldsymbol{x}^i,j) \\
+\Delta\gamma_{s}[1 + \log(1-q_{s}^{i,j})] 
-\Delta\beta_{s}[1+\log q_{s}^{0,j}]
\end{multline}
for $j>0$, and $\log\bar{\psi}^i_{s}(\boldsymbol{x}^i,0)=\log\psi^i_{s}(\boldsymbol{x}^i,0)$ remains unchanged. Then the solution of the modified problem, denoted $[\bar{q}^i(\boldsymbol{x}^i)]$, $[\bar{q}^i_{s}(\boldsymbol{x}^i,a_{s}^i)]$ and $[\bar{q}_{s}^{i,j}]$, is unchanged.
\end{theorem}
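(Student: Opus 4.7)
The plan is to argue via the first-order (KKT) optimality conditions. Because the free energy in (\ref{eq:ConvexBFE}) is convex under the constraints (\ref{eq:ConstrPositivity})--(\ref{eq:Constr_qti_SumToOne}) and the modification in (\ref{eq:SeqMod}) only shifts the linear term $\mathbb{E}[\log\psi^i_s(\boldsymbol{x}^i,a_s^i)]$ (leaving the convexity unaffected), it suffices to show that the original solution $([q^i(\boldsymbol{x}^i)],[q^i_s(\boldsymbol{x}^i,a_s^i)],[q_s^{i,j}])$, together with the original Lagrange multipliers for (\ref{eq:ConstrSumToOne})--(\ref{eq:Constr_qti_SumToOne}), remains a stationary point of the Lagrangian of the modified problem.

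First I would write out the KKT stationarity condition of the original problem with respect to the primal variables. The relevant pieces of the gradient are: (i) the contribution of $-\mathbb{E}[\log\psi^i_s]$ to $\partial/\partial q^i_s(\boldsymbol{x}^i,j)$, which is $-\log\psi^i_s(\boldsymbol{x}^i,j)$; (ii) the contribution of the concave $\gamma_s$-term to $\partial/\partial q_s^{i,j}$, which, using $\frac{d}{dq}[-(1-q)\log(1-q)] = \log(1-q)+1$, equals $\gamma_s[1+\log(1-q_s^{i,j})]$; and (iii) the contribution of the $\beta_s$-term to $\partial/\partial q_s^{0,j}$, which equals $\beta_s[1+\log q_s^{0,j}]$. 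The variables $q_s^{i,j}$ and $q_s^{0,j}$ are tied to $q^i_s(\boldsymbol{x}^i,j)$ via the marginalisation (\ref{eq:Constr_qtij}) and the consistency constraint (\ref{eq:Constr_qti_SumToOne}); I would introduce explicit multipliers for both constraints and eliminate $q_s^{i,j},q_s^{0,j}$ to express the stationarity condition as an equation purely in $\psi^i_s(\boldsymbol{x}^i,j)$, the multipliers, and the beliefs.

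Next I would compute the same stationarity condition for the modified problem at the unchanged point $([q^i],[q^i_s],[q_s^{i,j}])$, using $\bar{\gamma}_s = \gamma_s + \Delta\gamma_s$, $\bar{\beta}_s = \beta_s + \Delta\beta_s$, and $\log\bar{\psi}^i_s$ as in (\ref{eq:SeqMod}). The change in the $\gamma$-term contributes an extra $\Delta\gamma_s[1+\log(1-q_s^{i,j})]$ to $\partial/\partial q_s^{i,j}$, while the change in the $\beta$-term contributes an extra $\Delta\beta_s[1+\log q_s^{0,j}]$ to $\partial/\partial q_s^{0,j}$, which (via the constraint $\sum_{i=0}^n q_s^{i,j}=1$) propagates to $\partial/\partial q_s^{i,j}$ with the opposite sign. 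On the other hand, the modification of $\log\psi^i_s(\boldsymbol{x}^i,j)$ adds exactly $-\Delta\gamma_s[1+\log(1-q_s^{i,j})] + \Delta\beta_s[1+\log q_s^{0,j}]$ to the gradient term $-\log\psi^i_s(\boldsymbol{x}^i,j)$. After pushing the $q_s^{i,j}$ dependence through the marginalisation (\ref{eq:Constr_qtij}), these two changes cancel pointwise, so the stationarity condition of the modified problem holds at the original solution with the same multipliers. Because $a_s^i = 0$ is not involved in either the $\gamma$-sum or the $\beta$-sum, no change is needed to $\log\psi^i_s(\boldsymbol{x}^i,0)$.

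Finally, primal feasibility is immediate since the constraints (\ref{eq:ConstrPositivity})--(\ref{eq:Constr_qti_SumToOne}) are unchanged and the candidate solution is the old feasible one; dual feasibility/positivity of the preserved multipliers is inherited; and complementary slackness is preserved because the active set has not changed. Strict convexity of the modified objective (under the coefficient conditions $\eta>0$) then implies this KKT point is the unique optimiser. The main technical obstacle I anticipate is the careful bookkeeping of the two coupling constraints (\ref{eq:Constr_qtij}) and (\ref{eq:Constr_qti_SumToOne}), specifically making sure the $\Delta\beta_s[1+\log q_s^{0,j}]$ shift enters the gradient of $q^i_s(\boldsymbol{x}^i,j)$ with the correct sign via the multiplier of the measurement-side consistency constraint, and verifying that the same multipliers that were optimal for the original problem continue to certify optimality after the factor modification.
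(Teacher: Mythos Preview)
Your approach via KKT conditions is exactly the paper's approach, and the overall architecture of the argument is right. There is, however, a concrete gap in the claim that the stationarity conditions for the modified problem hold ``with the same multipliers''. They do not: the Lagrange multiplier $\sigma_s^j$ attached to the measurement-side consistency constraint (\ref{eq:Constr_qti_SumToOne}) must change.

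To see why, write the stationarity condition for $q_s^{0,j}$ separately. In the original problem it reads $\sigma_s^j + \beta_s[1+\log q_s^{0,j}] = 0$ (plus whatever multiplier sits on positivity, which is inactive at the interior optimum). In the modified problem the same condition becomes $\bar{\sigma}_s^j + \bar{\beta}_s[1+\log \bar{q}_s^{0,j}] = 0$. If you insist on $\bar{q}_s^{0,j}=q_s^{0,j}$ and $\bar{\sigma}_s^j=\sigma_s^j$, this fails unless $\Delta\beta_s[1+\log q_s^{0,j}]=0$. The fix is to set
\[
\bar{\sigma}_s^j \;=\; \sigma_s^j - \Delta\beta_s\bigl[1+\log q_s^{0,j}\bigr],
\]
keep all other multipliers (for (\ref{eq:Constr_qtij}) and for $\sum_j q_s^{i,j}=1$) unchanged, and then verify the $q_s^{i,j}$ stationarity condition for $i\geq 1$: the extra $\Delta\gamma_s[1+\log(1-q_s^{i,j})]$ from the $\gamma$-term, the shift $-\Delta\beta_s[1+\log q_s^{0,j}]$ coming from the adjusted $\bar{\sigma}_s^j$, and the additive piece $-\Delta\gamma_s[1+\log(1-q_s^{i,j})]+\Delta\beta_s[1+\log q_s^{0,j}]$ from the modified $\log\bar{\psi}^i_s$ cancel exactly. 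Your ``propagation with opposite sign'' is really this multiplier adjustment in disguise; once you make it explicit, the anticipated bookkeeping obstacle disappears and the proof goes through as the paper's does.
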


The proof of the theorem is in \cite[app E]{WilLau16}\xspace. Figure~\ref{alg:SequentialConvex} includes a step to incorporate these modifications.

\vspace{-6pt}

\subsection{Uses and limitations of proposed method}
\label{sss:Uses}
The proposed method seeks to estimate marginal distributions of target states. This provides a complete summary of the information available when considering each target separately, and is useful, for example, when seeking to provide a confidence region for the target location, or when deciding whether it is necessary to execute sensor actions which will provide clarifying information. The experiments in the following section demonstrate that the proposed methods provide a scalable approach for solving problems of this type, addressing limitations experienced using existing methods.

In some tracking problems, a particular type of uncertainty arises, in which multiple modes appear in the joint distribution which essentially correspond to exchanges of target identity; the coalescence problem in JPDA is a well-known example of this (e.g., \cite{BloBlo00}). In such instances, multiple modes will appear in the estimates of the marginal distributions; this is by design, and is a correct summary of the uncertainty which exists. In these cases, extracting point estimates from marginal distributions is not straight-forward, but can be performed using methods such as the variational minimum mean optimal subpattern assignment (VMMOSPA) estimator \cite{Wil14}. Alternatively, if a point estimate is all that is required, MAP-based methods can be used. Likewise, MF or hybrid MF-BP methods (e.g., \cite{RieKir13}, applied to tracking in \cite{TurBot14,LauWil16}) tend to provide good estimates of a particular mode (and hence point estimates), at the expense of not characterising the full multi-modal uncertainty which exists (see \cite{LauWil16}).

Past experiments (e.g., \cite{WilLau12}) have shown that the accuracy of the beliefs provided by BP is highest when SNR is low, e.g., high false alarm rate and/or low probability of detection. Conversely, accuracy is lowest in high SNR conditions, e.g., low false alarm rate, high probability of detection. As demonstrated in the next section, this can now be mitigated through the use of FFE. This behaviour is complementary to traditional solution techniques, which perform very well in high SNR conditions (when ambiguity is the least, permitting tractable, exact solution) but fail in low SNR conditions where many targets are interdependent.

\vspace{-6pt}
\section{Experiments}
\label{sec:Experiments}

The proposed method is demonstrated through a simulation which seeks to estimate the marginal distribution of several targets using bearings only measurements. The region of interest is the square $[-100,100]^2\subset\mathbb{R}^2$. Tracks are initialised using a single accurate bearing measurement from one sensor, corrupted by Gaussian noise with $0.1\ensuremath{^{\circ}}$ standard deviation (e.g., as may be provided if there is an accurate, low false alarm rate sensor providing bearing measurements from a single location); particle filter representations of each track are initialised by randomly sampling from the posterior calculated by combining these measurements with a uniform prior on the region of interest. The proposed algorithm is then utilised to refine the sensor positions. Target-originated bearing measurements are corrupted with $1\ensuremath{^{\circ}}$ standard deviation Gaussian noise. False alarms are uniform over the bearing range covering the region of interest. 

We compare to two variants of JPDA, both of which maintain a particle representation of each target location, and utilise BP to approximate data association probabilities. The first variant (which we refer to as JPDA-PBP, with `P' denoting parallel) processes measurements from different sensors in parallel, solving each single-sensor problem once, as in \cite{MeyBra17} (this is better suited to maintaining track rather than initial localisation). The second variant (JPDA-BP) approaches multi-sensor data by sequentially processing individual sensors, similar to the IC-TOMB/P approach in \cite{MeyBra17}.

For the proposed method, we compare:
\begin{enumerate}
\item $\gamma_{s}=\beta_s=\frac{1}{|\mathcal{S}|+1}$ for each scan, weights according to lemma \ref{lem:CoefficientSequential}, and solving using figure \ref{alg:SequentialConvex}, not utilising sequential modification (i.e., the final line of the algorithm); we refer to this as the convex variational (CV) algorithm
\item The method using the sequential modification of section \ref{sss:FactorModification}, introducing a new scan at each step with $\gamma_s=0.55$, for past scans setting $\gamma_{s}=0$, and with $\beta_s=\kappa_{f,s}+\kappa_{s,1,s}$, again solving using figure \ref{alg:SequentialConvex}; we refer to this as the CV-sequential (CVS) algorithm
\end{enumerate}

\vspace{-6pt}

\subsection{Illustrative example}
\begin{figure*}
	\centering
	\includegraphics[width=0.9\linewidth]{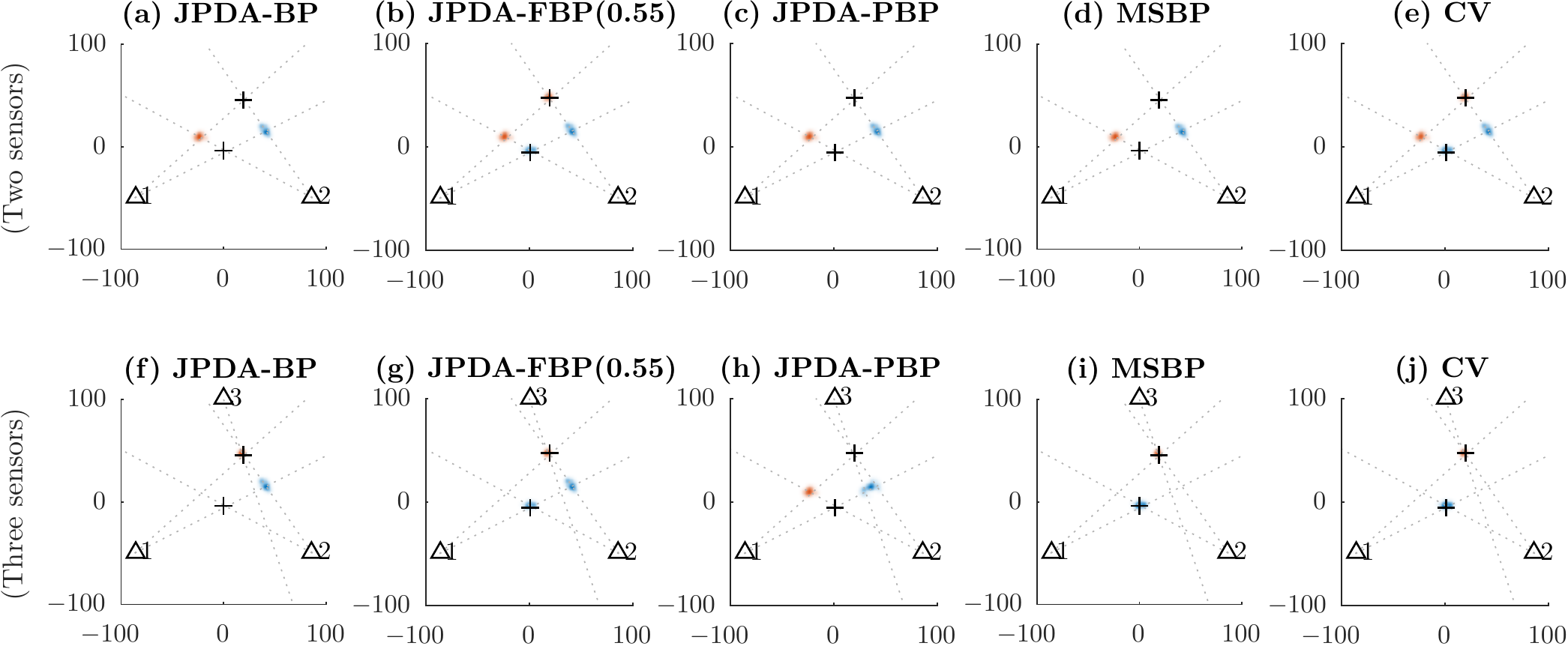}
	\caption{Example problem involving two targets and two or three sensors. Targets are marked as `$+$', and sensors as `$\triangle$', and bearing measurements are illustrated as dotted grey lines. Tracks are initialised with measurements from sensor 1, and updated with two measurements from sensor 2 (top row, (a)-(e)). Bottom row (f)-(j), incorporates an additional scan from sensor 3, in which a single measurement was received. Background image shows marginal distribution estimates for the two tracks overlaid.}
	\label{fig:example}
\end{figure*}%
The result in figure \ref{fig:example} illustrates the behaviour on a simplified version of the problem, with three sensors, two targets, and a low false alarm rate ($10^{-6}$). The top row (a)-(e) shows the results for different algorithms utilising the first two sensors, where the first sensor initialises the distribution for each target (drawing particles along each bearing line), and the second permits triangulation. The sensor locations are shown as triangles, while true target locations are shows as crosses. Measurements are illustrated as dotted grey lines. The marginal distributions of the two targets, as estimated by the various algorithms studied, are shown in the background image.%
\begin{itemize}
\item Due to the low false alarm rate, JPDA-BP (shown in (a)) essentially provides a MAP association. The marginal distribution estimates indicate high confidence for each target, in an incorrect (ghost) location, assigning near-zero probability density to the true target location.
\item JPDA-FBP(0.55) (shown in (b)) utilises the fractional BP method, based on figure \ref{alg:SingleScanFFE} (for which convergence is proven as theorem \ref{th:SingleScan}), with $\gamma=0.55$. The figure demonstrates correct characterisation of the uncertainty in the problem, with each marginal distribution estimate showing significant probability density in both the true target location, and the ghost location.
\item Since tracks are initialised using sensor 1, and updated using sensor 2, the two sensor problem is a single scan association problem, and JPDA-PBP and MSBP (shown in (c) and (d)) is identical to JPDA-BP, similarly indicating high confidence for each target in a ghost location, and assigning near-zero probability density to the true target location.
\item Similarly, CV (shown in (e)) is identical to JPDA-FBP, and again correctly characterises the uncertainty in the problem.
\end{itemize}

The bottom row (f)-(j) shows the results for the same algorithms introducing a third sensor, which receives a measurement on one of the two targets.
\begin{itemize}
\item Utilising the measurement from the third sensor, JPDA-BP (shown in (f)) correctly localises one of the two targets, but the second remains invalid, indicating high confidence in a ghost location, and assigning near-zero probability to the true target location.
\item JPDA-FBP is shown in (g) to correctly localise one of the two targets, but a bimodal distribution remains for the second. This is unnecessary: localisation of the first target effectively confirms that the upper measurement from sensor 2 belongs to that target, which in turn confirms that the lower measurement belongs to the other target. Thus the distribution exhibits unnecessarily high uncertainty as a consequence of not enforcing past association feasibility constraints.
\item Because data from sensor 2 is not used when interpreting data from sensor 3, JPDA-PBP (shown in (h)), still indicates high confidence in a ghost location for both targets, and near-zero probability in the true location.
\item By simultaneously optimising over multiple scans of data, MSBP (shown in (i)) is able to recover from the incorrect solution in (d) and arrive at the correct solution.
\item Likewise, by retaining association feasibility constraints from previous scans, CV (shown in (j)) is able to utilise the confirmation of the location of one target to resolve the bimodal uncertainty in the other target, correctly localising both targets.
\end{itemize}

Of the methods shown, JPDA-BP, JPDA-PBP and MSBP exhibit false confidence in ghost locations in figures (a), (c), (d), (f) and (h), and JPDA-FBP fails to localise the distribution to the extent possible in (g). CV is the only method shown that is able to correctly characterise the uncertainty present in both cases.

\subsection{Quantitative analysis}
As illustrated through the example in figure \ref{fig:example}, the goal in this work is to produce a faithful estimate of the marginal probability distributions. This is quite different to problems in which the aim is to produce a point estimate of target location, for which the standard performance measure is mean square error (MSE). In 200 Monte Carlo trials of the scenario in figure \ref{fig:example}(a)-(e), JPDA-BP, JPDA-PBP and MSBP resolve uncertainty to a single mode for each target, which is correct 76\% of the time, and incorrect (as in figure \ref{fig:example}(a), (c) and (d)) 24\% of the time. This incorrect resolution of uncertainty could lead to dire outcomes if the decision is made to take an action based on the incorrect characterisation of uncertainty (which indicates high confidence in a single, incorrect mode, as in figure \ref{fig:example}(a), (c) and (d)), rather than wait until further information is obtained (as the characterisation in figure \ref{fig:example}(b) and (e) would direct). Since MSE is a measure only of the proximity of the single point estimate to the true location, it does not capture the correctness of the uncertainty characterised in the marginal probability distribution, and it is not an adequate measure in this class of problem.

Instead, we utilise two performance measures, which directly measure occurrences of the undesirable outcomes in figure \ref{fig:example}(a), (c), (d), (f), (g) and (h), and which are known to behave consistently in the presence of multi-modal uncertainty. The performance measures are the entropy of the beliefs produced by each method, and the high probability density (HPD) value in which the true location lies. The HPD value is defined as the total probability under a distribution that is more likely than a given point; e.g., given a point $x^*$ (the true location of the target) and a distribution $p$, the HPD value is
\begin{equation}
\HPD(p,x^*) = \int_{x|p(x)\geq p(x^*)} p(x) \mathrm{d} x.
\end{equation}
\begin{figure}
	\centering
	\includegraphics[width=0.9\linewidth]{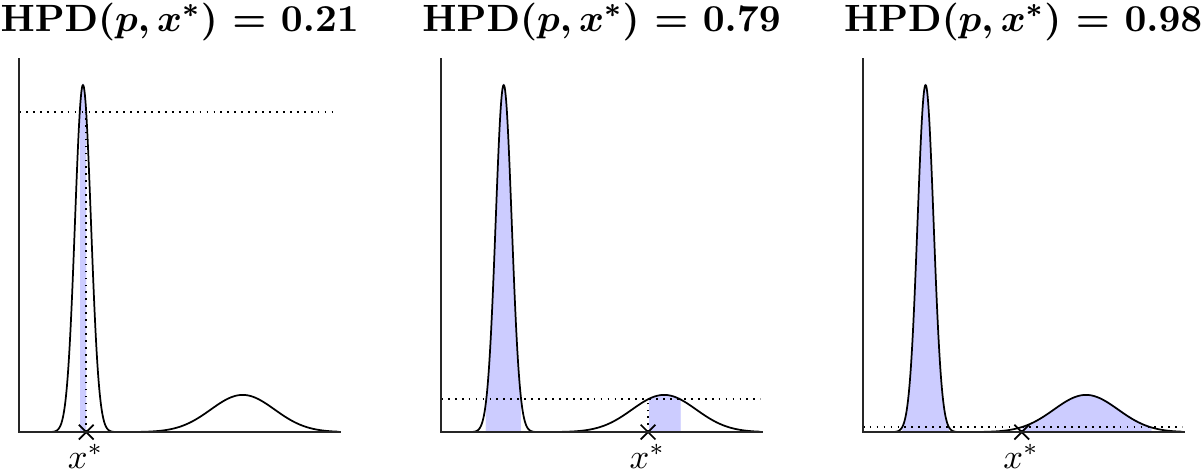}
	\caption{Example of HPD value for different true target locations ($x^*$). HPD is the area of probability that is more likely than a given point, i.e., the area of the shaded region in each diagram.}
	\label{fig:HPDExample} \vspace{-6pt}
\end{figure}

\vspace{-6pt}

Three examples of this are illustrated in figure \ref{fig:HPDExample}; in each case, the true target location is marked as $x^*$, and the HPD value is the area of the shaded region. Thus if $\HPD(p,x^*)\approx 1$ then $x^*$ is in the distant tails of $p(x)$ and is assigned very low likelihood, as in figure \ref{fig:example}(a), (c), and (d) while if $\HPD(p,x^*)=0$ then $x^*$ falls on the most likely value of $p(x)$ (i.e., it is a MAP estimate). Under mild conditions, it can be shown that if $x^*\sim p(x)$ (i.e., if $p(x)$ correctly characterises the uncertainty in $x^*$) then $\HPD(p,x^*)\sim\mathcal{U}\{0,1\}$ \cite[section 9.7.2]{DavGor16}. If the distribution of HPD values is concentrated at the lower end, then the beliefs generated are conservative, i.e., they overestimate uncertainty in such a way that the true value rarely falls in the tails. If the distribution of HPD values is concentrated at the higher end, then the beliefs are non-conservative, i.e., they underestimate uncertainty, and the true value is often falling in the tails.

The entropy of the distribution characterises its uncertainty; for example, the entropy of a multivariate Gaussian distribution with covariance $\mathbf{P}$ is $0.5\log|2\pi e\mathbf{P}|$. Entropy is often preferred over variance for multi-modal distributions as it is not affected by the distance between well-spaced modes (whereas the distance between the modes will dominate the variance). The HPD value is not sufficient, e.g., since an estimator based purely on the prior distribution should produce a uniform HPD distribution, but this would have a much higher entropy than a solution which utilises all available measurement data. Likewise entropy is not sufficient, since one could devise a method of approximating beliefs which reports an arbitrarily small uncertainty; this would report large HPD values. There is not a single measure which adequately characterises performance in this class of problem. This pair of values is necessary to capture the undesirability of the behaviour in figure \ref{fig:example}(a), (c) and (d) (assigning near-zero likelihood to the true target location, and producing a $\HPD(p,x^*)\approx 1$), as well as the undesirability of the behaviour in figure \ref{fig:example}(g) (not resolving uncertainty when adequate information exists to do so, and thus increasing entropy).

For the quantitative experiment, twelve sensors are spaced equally around a circle with radius 100 units. False alarms follow a Poisson distribution with one per scan on average, and targets are detected with probability $0.9$. The number of targets is Poisson distributed with expected value of twelve (but in each simulation the true number is known by the estimator; the method can be extended to accommodate an unknown number of targets using \cite{Wil12}).

%\begin{figure}[t]
%\centering
%\includegraphics[width=\linewidth]{fin_res_4scan-crop}
%\caption{CDF of HPD value of true location and entropy for beliefs of each target over 200 Monte Carlo trials.}
%\label{fig:Res4Scan}
%\end{figure}

\begin{figure}[t]
\centering
\includegraphics[width=0.95\linewidth]{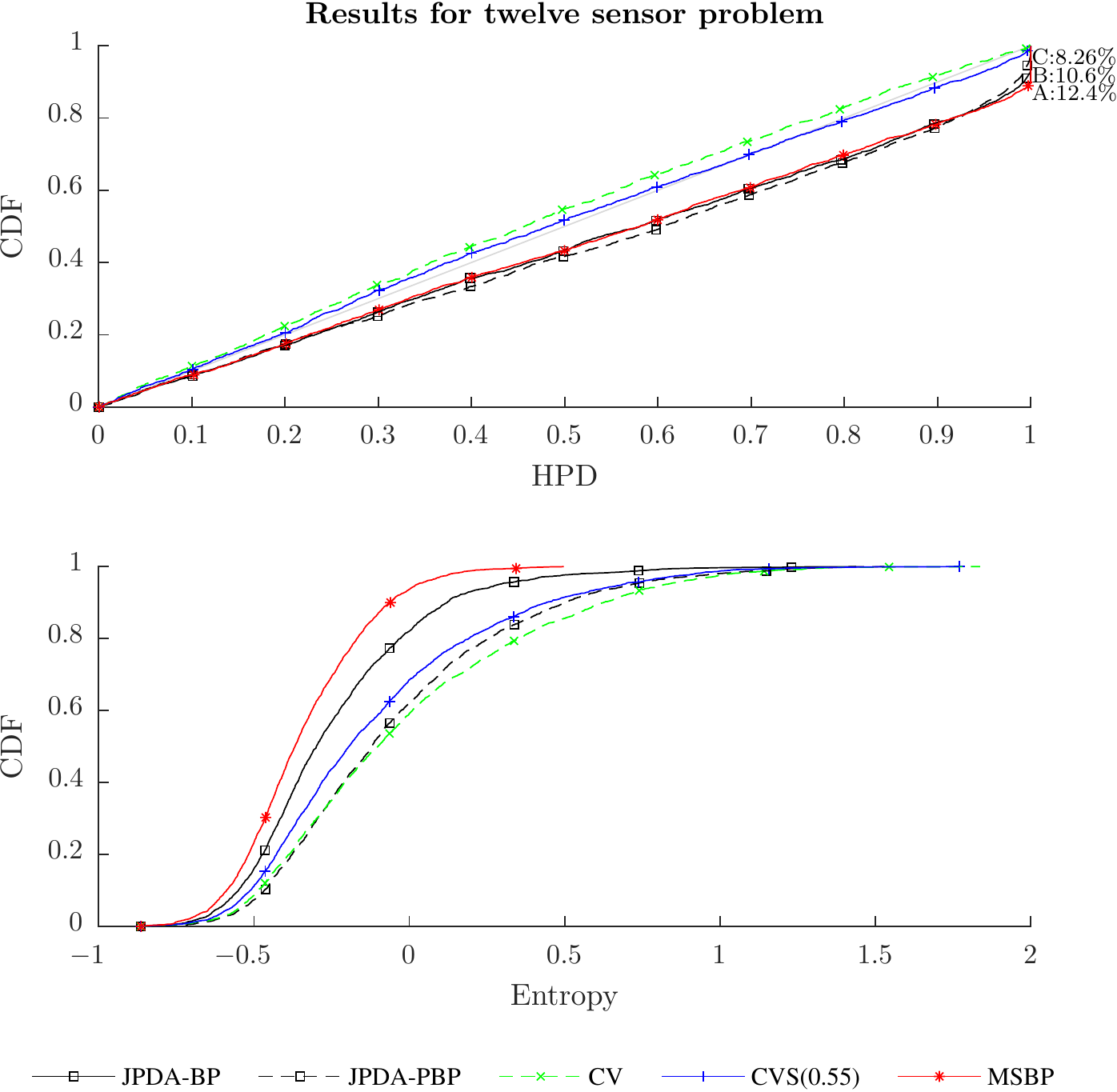}
\caption{CDF of HPD value of true location and entropy for beliefs of each target over 200 Monte Carlo trials. Points A, B and C mark the percent of cases in which the true target location is less likely than 99\% of the marginal distribution estimates produced by MSBP, JPDA-BP and JPDA-PBP, i.e., it is in the tails in a similar manner to figure \ref{fig:example}(a), (c), (d), (f) and (h).}
\label{fig:Res12Scan} \vspace{-6pt}
\end{figure}

The results in figures \ref{fig:Res12Scan}--\ref{fig:Res12ScanHeuristic} show the cumulative distribution function (CDF) of the HPD value (top) and entropy (bottom) for the various methods. Figure \ref{fig:Res12Scan} shows that MSBP produces significantly non-conservative results. Although the entropies of the MSBP beliefs are significantly smaller than the other methods, the point labelled as `A' in the top figure reveals that the true location is less likely than $99\%$ of the belief for $12.4\%$ of targets (treating each target in each Monte Carlo simulation as a sample). This indicates that if the MSBP belief is used to construct a $99\%$ confidence region for the location of a particular target, the target does not lie within the region $12.4\%$ of the time. An instance of this is illustrated in figure \ref{fig:example}(d), where the beliefs effectively rule out the location of the true target. These results may be useful in applications where smaller entropy is desirable, but when consistency and accuracy of the beliefs is essential, it is unacceptable.

JPDA-BP and JPDA-PBP also produce non-conservative results, since the CDF of the HPD value consistently lies below the $x=y$ line (i.e., the CDF of a uniform distribution). The points labelled as `B' and `C' reveal that the true location is less likely than $99\%$ of the belief for $10.6\%$ of targets for JPDA-BP, and $8.26\%$ of targets for JPDA-PBP. Again, this indicates that if the respective beliefs are used to construct $99\%$ confidence regions for a particular target, the target does not lie within the region for $10.6\%$ or $8.26\%$ of the time, as illustrated in figures \ref{fig:example}(a) and (c).

The convex variational (CV) method is shown to produce conservative beliefs, since the CDF of the HPD value consistently lies above the $x=y$ line. The cost of this conservatism is beliefs with higher entropy; in many applications this may be preferable. The CVS method with $\gamma_s=0.55$ is also shown to significantly reduce instances where the target is in a very low likelihood area of the belief. By tuning the value of $\gamma_s$ a trade-off between conservatism and entropy can be obtained.

A large family of heuristic methods can be developed by employing the algorithm in figure \ref{alg:SequentialConvex} with weights that do not ensure that each component (\ref{eq:Block_ht1}), (\ref{eq:Block_ht2}) is convex. We consider an instance of this, which sets $\kappa_{f,x} = \kappa_{f,s} = 1$, $\kappa_{s,1,x} = -1$, and $\kappa_{s,1,s} = \kappa_{s,2,x} = \kappa_{s,2,s} = 0$. As long as $\kappa_{f,x}+\sum_{s\in\mathcal{S}}[\kappa_{s,1,x}+\kappa_{s,2,x}]=-|\mathcal{S}|+1$ and $\kappa_{f,s}+\kappa_{s,1,s}+\kappa_{s,2,s}=1\;\forall\;s\in\mathcal{S}$, the original objective remains unchanged, and if the algorithm converges, the result is optimal (assuming convexity). Experimentally, convergence appears to be both reliable and rapid, though not guaranteed. In the rare case that convergence is not obtained, weights may be reverted (immediately or via a homotopy) to the form in lemma \ref{lem:CoefficientSequential}, for which convergence is guaranteed, but slower in practice. With $\gamma_{s}=1\;\forall\;s\in\mathcal{S}$, this can be seen to be equivalent to multiple scan BP (which is non-convex since $\sum_{s\in\mathcal{S}}\gamma_{s}>1$). 

\begin{figure}[t]
	\centering
	\includegraphics[width=0.95\linewidth]{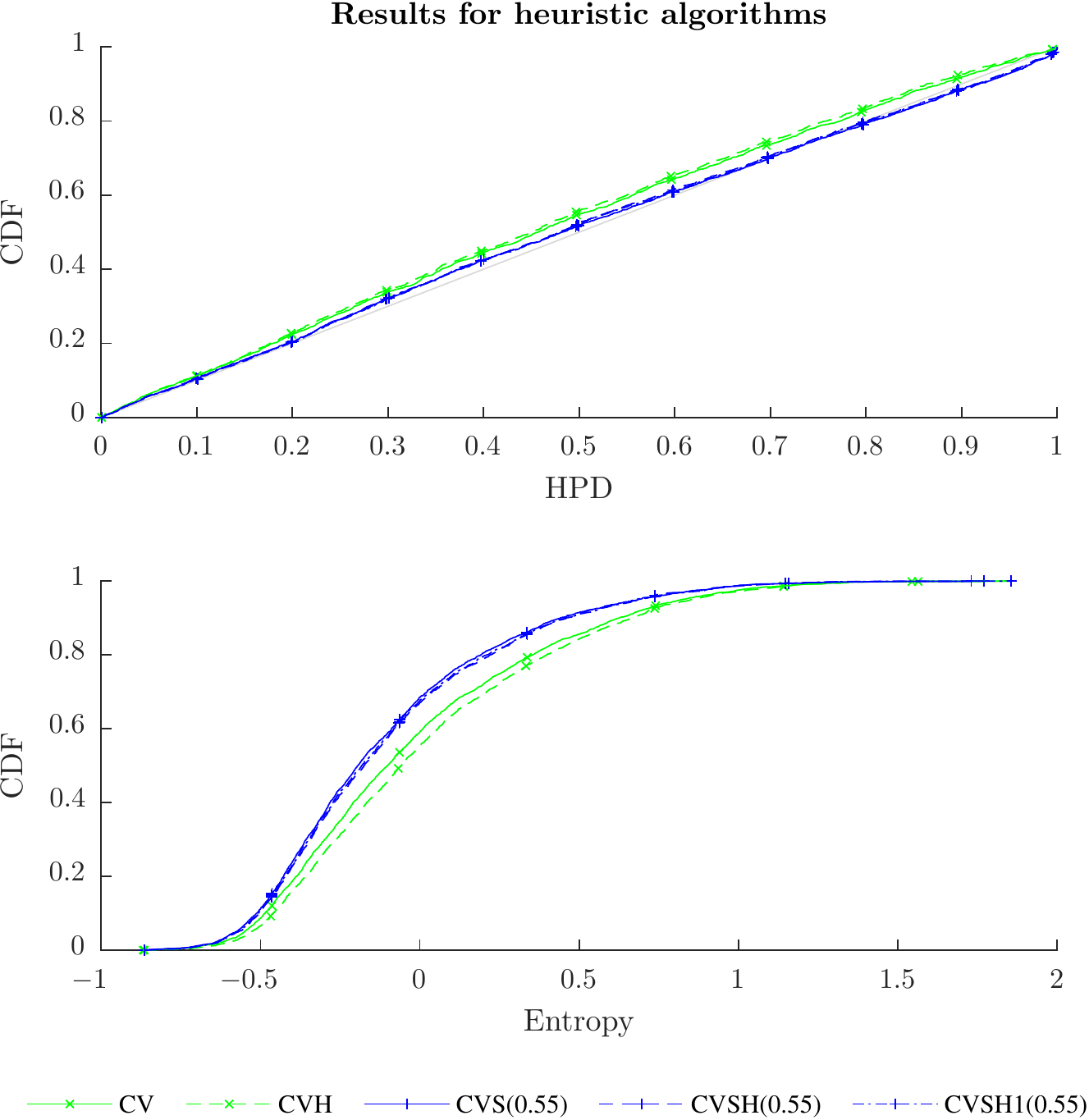}
	\caption{CDF of HPD value of true location and entropy for beliefs of each target over 200 Monte Carlo trials.}
	\label{fig:Res12ScanHeuristic}\vspace{-6pt}
\end{figure}

Figure \ref{fig:Res12ScanHeuristic} shows the results of the heuristic approach. The algorithms with guaranteed convergence are marked as CV and CVS(0.55); the heuristic equivalents are CVH and CVSH(0.55) respectively. CVSH1(0.55) uses a single backward-forward sweep after introducing each new sensor. The slight difference between the method with guaranteed performance and the heuristic method is caused by the different values of $\beta_{s}$ used (since we must ensure that $\tilde{\beta}_{s}>0.5$; in each case we select $\beta_{s}$ to set $\tilde{\beta}_{s}=1$). The results demonstrate that very similar performance can be obtained with a single sweep.

\section{Conclusion}
\label{sec:Conclusion}
This paper has shown how the BP data association method of \cite{WilLau12} can be extended to multiple scans in a manner which preserves convexity, using convex optimisation alongside a convergent, BP-like method for optimising the FFE. In doing so, it was demonstrated that the conservative beliefs can be obtained, whereas the estimates provided by MSBP and JPDA-BP are significantly non-conservative, and can provide beliefs which effectively rule out the true target location a significant proportion of the time. The result is a scalable, reliable algorithm for estimating marginal probability distributions using multiple scans.

\vspace{-4pt}
\section*{Acknowledgements}
The authors would like to thank the anonymous reviewers for suggestions that helped to clarify many points. 

\vspace{-4pt}

{\small{\bibliographystyle{IEEEtran}
\bibliography{IEEEabrv,../Bibliography}}}

\ifthenelse{\boolean{jlwExtendedVersion}}{

\appendices

\ifthenelse{\boolean{jlwExtendedVersion}}{%
\section{Association history model}
\label{ss:GMSDA}

In section \ref{ss:MSDA}, the problem of interest is formulated to incorporate both continuous states $\boldsymbol{x}^i$ and discrete association hypothesis variables $a_s^i$. Alternatively, we may formulate the problem by defining association history hypotheses $\boldsymbol{a}_\mathcal{S}^i=(a_1^i,\dots,a_S^i)$, which detail which measurement corresponds to the target in each scan. The role of the variational algorithm is to determine the marginal association distribution $p^i(\boldsymbol{a}_\mathcal{S}^i)$ for each target. Calculation of the kinematic distribution conditioned on an association hypothesis, $p^{i,\boldsymbol{a}_\mathcal{S}^i}(\boldsymbol{x}^i)$, can utilise well-studied methods such as the Kalman filter (KF) \cite{Kal60}, extended Kalman filter (EKF), unscented Kalman filter (UKF) \cite{JulUhl04}, or the particle filter (PF) \cite{GorSal93}.

\begin{definition}\label{def:GlobalHypothesis}
A \textbf{global association history hypothesis} (or \textbf{global hypothesis} for short) is a hypothesis for the origin of every measurement received so far, i.e., for each measurement it specifies from which target it originated, or if it was a false alarm. 
\end{definition}

\begin{definition}\label{def:SingleTargetHypothesis}
A \textbf{single target association history hypothesis} (or \textbf{single target hypothesis} for short) is a sequence of time-stamped measurements that are hypothesised to correspond to the same target. 
\end{definition}

A global hypothesis for the scans in set $\mathcal{S}$ may be represented as $\boldsymbol{a}_\mathcal{S}=(\boldsymbol{a}_\mathcal{S}^1,\dots,\boldsymbol{a}_\mathcal{S}^n)$. Each single target hypothesis is equipped with a hypothesis weight $w^{i,\boldsymbol{a}_\mathcal{S}^i}$ (utilised in the calculation of the probability of the global hypotheses), and the target state probability density function (PDF) conditioned on the hypothesis $p^{i,\boldsymbol{a}_\mathcal{S}^i}(\boldsymbol{x}^i)$. Under this model, prediction steps may be easily introduced to incorporate a stochastic state model.

We denote by $\mathcal{A}^i_\mathcal{S}$ the set of feasible single-target hypotheses for target $i\in\{1,\dots,n\}$ in the scans in $\mathcal{S}$. The set of all feasible global hypotheses (i.e., those in which no two targets utilise the same measurement) can be written as:
\begin{multline}
\mathcal{A}_\mathcal{S} = \bigg\{
(\boldsymbol{a}_\mathcal{S}^1,\dots,\boldsymbol{a}_\mathcal{S}^n)
\bigg| 
\boldsymbol{a}_\mathcal{S}^i=(a_1^i,\dots,a_S^i)\in\mathcal{A}^i_\mathcal{S}, \\
a_s^i\neq a_s^j \;\forall\;s,\; i, \; j \mbox{ s.t.~} i\neq j, \; a_s^i\neq 0
\bigg\}.
\end{multline}

The joint distribution of states and hypotheses conditioned on measurements may be written as:
\begin{equation}\label{eq:BackJointPosteriorGen}
p(\boldsymbol{X},\boldsymbol{a}_\mathcal{S},\boldsymbol{b}_\mathcal{S}|Z_\mathcal{S}) \propto \left\{\prod_{i=1}^{n} w^{i,\boldsymbol{a}_\mathcal{S}^i} p^{i,\boldsymbol{a}_\mathcal{S}^i}(\boldsymbol{x}^i) \right\} 
\left\{\prod_{s\in\mathcal{S}}\psi_s(\boldsymbol{a}_s,\boldsymbol{b}_s) \right\}.
\end{equation}

Marginalising the kinematic states, the probability of a global hypothesis $\boldsymbol{a}_\mathcal{S}=(\boldsymbol{a}_\mathcal{S}^1,\dots,\boldsymbol{a}_\mathcal{S}^n)\in\mathcal{A}_\mathcal{S}$ (and the corresponding $\boldsymbol{b}_\mathcal{S}$) can be written in the form:
\begin{equation}
p(\boldsymbol{a}_\mathcal{S},\boldsymbol{b}_\mathcal{S}|Z_\mathcal{S}) \propto \left\{\prod_{i=1}^{n} w^{i,\boldsymbol{a}_\mathcal{S}^i}  \right\} 
\left\{\prod_{s\in\mathcal{S}}\psi_s(\boldsymbol{a}_s,\boldsymbol{b}_s) \right\}.
\end{equation}

The joint PDF of all targets can be represented through a total probability expansion over all global hypotheses:
\begin{equation}\label{eq:JointStatePDF}
p(\boldsymbol{X}) = \sum_{\boldsymbol{a}_\mathcal{S}\in\mathcal{A}_\mathcal{S}} p(\boldsymbol{a}_\mathcal{S}) \prod_{i=1}^{n} p^{i,\boldsymbol{a}_\mathcal{S}^i}(\boldsymbol{x}^i),
\end{equation}
where, for notational simplicity, we drop the explicit conditioning on $Z_\mathcal{S}$ from $p(\boldsymbol{X})$ and $p(\boldsymbol{a}_\mathcal{S})$. It is of interest to obtain the marginal distributions of global hypothesis probabilities:
\begin{equation}
p^i(\boldsymbol{a}_\mathcal{S}^i) = \sum_{\tilde{\boldsymbol{a}}_\mathcal{S}=(\tilde{\boldsymbol{a}}_\mathcal{S}^1,\dots,\tilde{\boldsymbol{a}}_\mathcal{S}^{n})\in\mathcal{A}_\mathcal{S}|\tilde{\boldsymbol{a}}_\mathcal{S}^i=\boldsymbol{a}_\mathcal{S}^i} p(\tilde{\boldsymbol{a}}_\mathcal{S}).
\end{equation}
From these marginal association distributions, we can find the marginal state PDF of each target:
\begin{equation}
p^i(\boldsymbol{x}^i) = \sum_{\boldsymbol{a}_\mathcal{S}^i\in\mathcal{A}_\mathcal{S}^i} p^i(\boldsymbol{a}_\mathcal{S}^i) p^{i,\boldsymbol{a}_\mathcal{S}^i}(\boldsymbol{x}^i).
\end{equation}

We now describe the updates which occur when a new scan of measurements is received, i.e., when $\mathcal{S}=\{1,\dots,S\}$ is replaced by $\mathcal{S}'=\{1,\dots,S'\}$, where $S'=S+1$. If the new scan represents a new time step, each hypothesis-conditioned PDF $p^{i,\boldsymbol{a}_\mathcal{S}^i}(\boldsymbol{x}^i)$ first undergoes prediction according to standard KF/EKF/UKF/PF expressions. A new single-target hypothesis $\boldsymbol{a}_{\mathcal{S}'}^i=(\boldsymbol{a}_\mathcal{S}^i,a_{S'}^i)$ is generated for each combination of an old single-target hypothesis $\boldsymbol{a}_\mathcal{S}^i$, and choice of event in the new scan, $a_{S'}^i$, where $a_{S'}^i=0$ denotes a missed detection, and $a_{S'}^i=j\in\{1,\dots,m_{S'}\}$ indicates that target $i$ corresponded to measurement $j$. The parameters for the hypothesis $\boldsymbol{a}_{\mathcal{S}'}^i=(\boldsymbol{a}_\mathcal{S}^i,0)$ can be calculated using the expression:
\begin{align}
w^{i,\boldsymbol{a}_{\mathcal{S}'}^i} &= w^{i,\boldsymbol{a}_\mathcal{S}^i}\int [1-P^{\mathrm{d}}_{S'}(\boldsymbol{x}^i)] p^{i,\boldsymbol{a}_\mathcal{S}^i}(\boldsymbol{x}^i) \mathrm{d} \boldsymbol{x}^i, \label{eq:ModelWeightMissedDet} \\
p^{i,\boldsymbol{a}_{\mathcal{S}'}^i}(\boldsymbol{x}^i) &\propto [1-P^{\mathrm{d}}_{S'}(\boldsymbol{x}^i)]p^{i,\boldsymbol{a}_\mathcal{S}^i}(\boldsymbol{x}^i).
\end{align}
The hypothesis $\boldsymbol{a}_{\mathcal{S}'}^i=(\boldsymbol{a}_\mathcal{S}^i,j)$, which updates the old single-target hypothesis $\boldsymbol{a}_\mathcal{S}^i$ with measurement $\boldsymbol{z}_{S'}^j$, is calculated using the expressions:
\begin{align}
w^{i,\boldsymbol{a}_{\mathcal{S}'}^i} &= \frac{w^{i,\boldsymbol{a}_\mathcal{S}^i} \int p_{S'}(\boldsymbol{z}_{S'}^j|\boldsymbol{x}^i) P^{\mathrm{d}}_{S'}(\boldsymbol{x}^i) p^{i,\boldsymbol{a}_\mathcal{S}^i}(\boldsymbol{x}^i)\mathrm{d} \boldsymbol{x}^i}{\lambda^\mathrm{fa}_{S'}(\boldsymbol{z}_{S'}^j)}, \label{eq:ModelWeightUpdate} \\
p^{i,\boldsymbol{a}_{\mathcal{S}'}^i}(\boldsymbol{x}^i) &\propto p_{S'}(\boldsymbol{z}_{S'}^j|\boldsymbol{x}^i) P^{\mathrm{d}}_{S'}(\boldsymbol{x}^i) p^{i,\boldsymbol{a}_\mathcal{S}^i}(\boldsymbol{x}^i).
\end{align}
The extension of these steps to accommodate an unknown, time-varying number of targets can be found in \cite{Wil12}.

The association history model can be written in a graphical model form as:
\begin{multline}\label{eq:JointMeasLikelihoodPGM_AssociatonHistory}
p(\boldsymbol{X},\boldsymbol{a}_\mathcal{S},\boldsymbol{b}_\mathcal{S}|Z_\mathcal{S}) \propto \\
\prod_{i=1}^n \left\{ \psi^i(\boldsymbol{x}^i,\boldsymbol{a}_\mathcal{S}) \psi^i(\boldsymbol{a}_\mathcal{S}) \prod_{s\in\mathcal{S}} \left[ 
\psi^i_s(\boldsymbol{a}^i_\mathcal{S},a^i_s)\prod_{j=1}^{m_s} \psi_s^{i,j}(a_s^i,b_s^j) \right]
\right\},
\end{multline}
where $\psi^i(\boldsymbol{x}^i,\boldsymbol{a}_\mathcal{S}) = p^{i,\boldsymbol{a}^i_\mathcal{S}}(\boldsymbol{x}^i)$, $\psi^i(\boldsymbol{a}_\mathcal{S})=w^{i,\boldsymbol{a}^i_\mathcal{S}}$, and $\psi^i_s(\boldsymbol{a}^i_\mathcal{S},a^i_s)$ ensures that $\boldsymbol{a}^i_\mathcal{S}$ and $a^i_s$ are in agreement:
\begin{equation}
\psi^i_s(\boldsymbol{a}^i_\mathcal{S},a^i_s) = \begin{cases}
1, & \boldsymbol{a}^i_\mathcal{S} = (\tilde{a}_1^i,\dots,\tilde{a}_S^i),\; \tilde{a}_s^i = a_s^i \\
0, & \mbox{otherwise}.
\end{cases}
\end{equation}
This graph is illustrated in figure \ref{fig:AssocHistPGM}. Marginalising the kinematic states $\boldsymbol{x}^i$ (which can be done simply since they are leaves), we arrive at the representation
\begin{multline}\label{eq:JointMeasLikelihoodPGM_AssociatonHistoryNoKinematic}
p(\boldsymbol{a}_\mathcal{S},\boldsymbol{b}_\mathcal{S}|Z_\mathcal{S}) \propto \\
\prod_{i=1}^n \left\{ \psi^i(\boldsymbol{a}_\mathcal{S}) \prod_{s\in\mathcal{S}} \left[ 
\psi^i_s(\boldsymbol{a}^i_\mathcal{S},a^i_s) \prod_{j=1}^{m_s} \psi_s^{i,j}(a_s^i,b_s^j) \right]
\right\}.
\end{multline}

\begin{figure}
\centering
\includegraphics[scale=0.55]{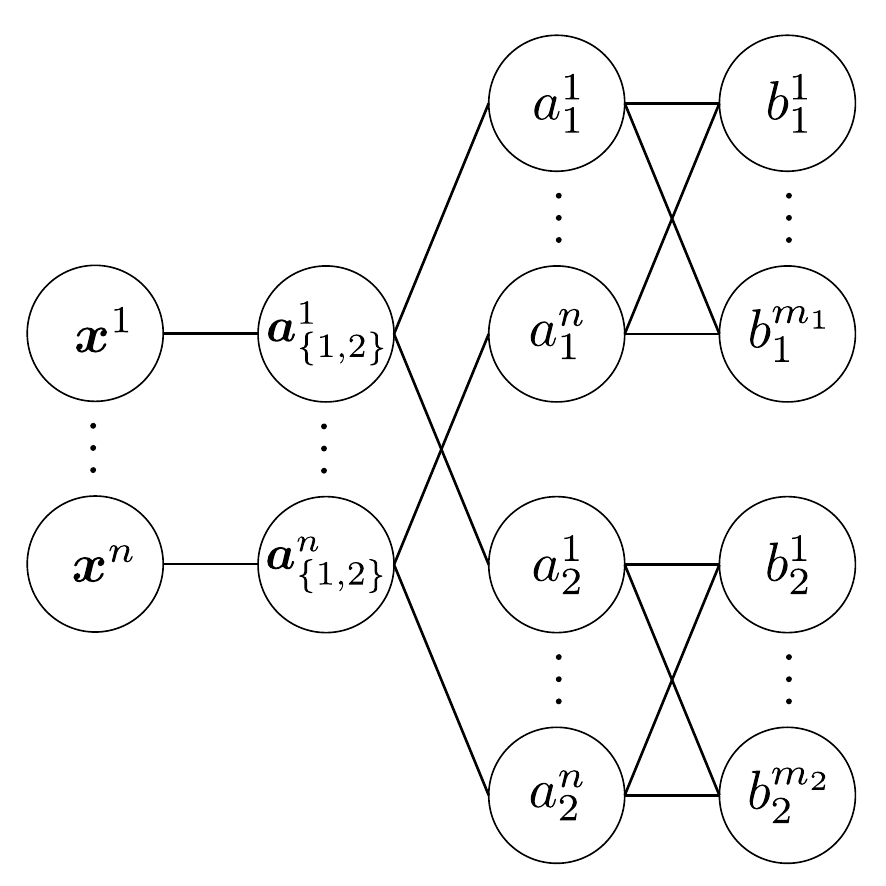}
\caption{Graphical model of association history formulation of multiple scan problem.}
\label{fig:AssocHistPGM}
\end{figure}

The derivation of section \ref{sec:VMSDA} may be applied directly to the model in (\ref{eq:JointMeasLikelihoodPGM_AssociatonHistoryNoKinematic}), substituting $\boldsymbol{a}^i_\mathcal{S}$ in place of $\boldsymbol{x}^i$. The kinematic distribution can then be recovered as 
\begin{equation}
q(\boldsymbol{x}^i,\boldsymbol{a}_\mathcal{S}^i) = q(\boldsymbol{a}_\mathcal{S}^i)\psi^i(\boldsymbol{x}^i,\boldsymbol{a}_\mathcal{S}) = q(\boldsymbol{a}_\mathcal{S}^i) p^{i,\boldsymbol{a}^i_\mathcal{S}}(\boldsymbol{x}^i).
\end{equation}

}{% Regular paper skips this section
}

\section{Derivation of Bethe free energy form}

\subsection{Single scan}
\label{app:SSBFE}

In this section, we present two formulations for the single scan problem, (\ref{eq:AA:BFE1})-(\ref{eq:AA:con1_end}) and (\ref{eq:AA:BFE2})-(\ref{eq:AA:con2_end}), and show they are equivalent to the formulation of (\ref{eq:SSObjective})-(\ref{eq:SSFeas3}) after partial minimisation. The formulation of (\ref{eq:SSObjective})-(\ref{eq:SSFeas3}) is similar to the formulation in \cite{Von13}; the difference is our formulation includes the belief that target $i$ is not detected, $q_s^{i,0}$, and the belief that measurement $z_s^j$ is not used by any target, $q_s^{0,j}$, whereas the formulation in \cite{Von13} is for the matrix permanent problem, which excludes $q_s^{i,0}$ and $q_s^{0,j}$ (and is thus constrained such that $n=m_s$).

The Bethe variational problem in section \ref{ss:BPDA} involving random variables $a_s^i$ and $b_s^j$ can be solved by minimising:
\begin{multline}
	F_{B}([q_s^i(a_s^i)],[q_s^{i,j}(a_s^i,b_s^j)],[q_s^j(b_s^j)]) = 
	\\
	-\sum_{i=1}^{n} \left\{H(a_s^i) + \mathbb{E}[\log\psi_s^i(a_s^i)]\right\}
	-\sum_{j=1}^{m_s} H(b_s^j)\\
	-\sum_{i=1}^{n}\sum_{j=1}^{m_s} 
	     \left\{ -I(a_s^i;b_s^j) + \mathbb{E}[\log\psi_s^{i,j}(a_s^i,b_s^j)]\right\}, \label{eq:AA:BFE1}
\end{multline}
subject to the constraints:
\begin{gather}
	q_s^{i,j}(a_s^i,b_s^j) \geq 0, \quad q_s^i(a_s^i) \geq 0, \quad q_s^j(b_s^j) \geq 0, \label{eq:AA:con1_1}\\
	\sum_{b_s^j=0}^{n} q_s^{i,j}(a_s^i,b_s^j) = q_s^i(a_s^i), \quad
	\sum_{a_s^i=0}^{m_s} q_s^{i,j}(a_s^i,b_s^j) = q_s^j(b_s^j), \label{eq:AA:con1_2}\\
	\sum_{a_s^i=0}^{m_s}\sum_{b_s^j=0}^{n} q_s^{i,j}(a_s^i,b_s^j) = 1,
	\enskip \sum_{a_s^i=0}^{m_s} q_s^i(a_s^i) = 1,
	\enskip \sum_{b_s^j=0}^{n} q_s^j(b_s^j) = 1, \label{eq:AA:con1_end}
\end{gather}
where $\psi_s^i(a_s^i)$ is defined by (\ref{eq:KinematicStateElimination}), $I(a_s^i;b_s^j)$ is defined in (\ref{eq:MI}), and $\psi_s^{i,j}(a_s^i,b_s^j)$ is in (\ref{eq:ConsistencyConstraint}). Note that there is some redundancy in these constraints, which is retained to reinforce that the constraints in (\ref{eq:AA:con1_1}) and (\ref{eq:AA:con1_end}) are retained when the marginal constraints (\ref{eq:AA:con1_2}) are relaxed in the next step.

Let the marginals $q_s^i(a_s^i)$ and $q_s^j(b_s^j)$ be fixed and feasible. Because the marginals are fixed, the Bethe variational problem (\ref{eq:AA:BFE1})-(\ref{eq:AA:con1_end}) is convex with respect to $q_s^{i,j}(a_s^i,b_s^j)$. In addition, since the marginals are feasible, then $q_s^i(a_s^i=j)=q_s^j(b_s^j=i)\triangleq q_s^{i,j}$. Relaxing the marginal constraints (\ref{eq:AA:con1_2}), the dual function for the partial minimisation can be written as:
\begin{multline}
\minimise_{q_s^{i,j}(a_s^i,b_s^j)}  F_{B}([q_s^i(a_s^i)],[q_s^{i,j}(a_s^i,b_s^j)],[q_s^j(b_s^j)])\notag\\
	+ \sum_{i=1}^{n}\sum_{j=1}^{m_s}\sum_{a_s^i=0}^{m_s}
	  \lambda_s^{i,j}(a_s^i)
	    \left(\sum_{b_s^j=0}^{n} q_s^{i,j}(a_s^i,b_s^j) - q_s^i(a_s^i)
	    \right) \notag\\
	+ \sum_{i=1}^{n}\sum_{j=1}^{m_s}\sum_{b_s^j=0}^{n}
	  \lambda_s^{i,j}(b_s^j)
	    \left(\sum_{a_s^i=0}^{m_s} q_s^{i,j}(a_s^i,b_s^j) - q_s^j(b_s^j)
	    \right) \notag\\
	\subjectto 
	\sum_{a_s^i=0}^{m_s}\sum_{b_s^j=0}^{n} q_s^{i,j}(a_s^i,b_s^j) = 1,
	\quad q_s^{i,j}(a_s^i,b_s^j) \geq 0,
\end{multline}
where $\lambda_s^{i,j}(a_s^i)$ and  $\lambda_s^{i,j}(b_s^j)$ are dual variables. Solving the dual function  yields the solution:
\begin{align}
	q_s^{i,j}(a_s^i,b_s^j)
	= \frac{1}{c^{i,j}} \psi_s^{i,j}(a_s^i,b_s^j)
	  \exp\{-\lambda_s^{i,j}(a_s^i)-\lambda_s^{i,j}(b_s^j)\}, \label{eq:AA:qab_sol}
\end{align}
where $c^{i,j}$ is the normalisation constant.

Using the marginalisation constraints (\ref{eq:AA:con1_2}), the pairwise joint (\ref{eq:AA:qab_sol}) and the definition of $\psi_s^{i,j}(a_s^i,b_s^j)$ in (\ref{eq:ConsistencyConstraint}) (which ensures that $q_s^{i,j}(a_s^i,b_s^j)=0$ if $a_s^i=j$, $b_s^j\neq i$ or $b_s^j=i$, $a_s^i\neq j$), we find that $q_s^{i,j} = q_s^{i,j}(a_s^i = j,b_s^j = i)$, which is related to the dual variables by
\begin{equation}\label{eq:AA:qij}
q_s^{i,j} = \frac{1}{c^{i,j}} \exp\{-\lambda_s^{i,j}(a_s^i=j)\} \exp\{-\lambda_s^{i,j}(b_s^j=i)\}.
\end{equation}
Secondly, for $i'\neq i$ and $j'\neq j$, we find that $q_s^i(a_s^i=j')=q_s^{i,j'}$ and $q_s^j(b_s^j=i')=q_s^{i',j}$ are related to the dual variables by
\begin{align}
q_s^{i,j'} &= \frac{1}{c^{i,j}}
		\exp\{-\lambda_s^{i,j}(a_s^i=j')\}
  \sum\limits_{\substack{i'=0 \\ i'\neq i}}^{n}
			\exp\{-\lambda_s^{i,j}(b_s^j = i')\},
	\label{eq:AA:qijp}\\
q_s^{i',j} &= \frac{1}{c^{i,j}}
		\exp\{-\lambda_s^{i,j}(b_s^j = i')\} 
 \sum\limits_{\substack{j'=0 \\ j' \neq j}}^{m_s} \exp\{-\lambda_s^{i,j}(a_s^i=j')\}.
\label{eq:AA:qipj}
\end{align}

As we have already seen, $q_s^{i,j} = q_s^{i,j}(a_s^i = j,b_s^j = i)$. In the remaining case, $a_s^i=j'\neq j$, $b_s^j=i'\neq i$, we again exploit the structure of $\psi_s^{i,j}(a_s^i,b_s^j)$ to obtain
\begin{equation}
q_s^{i,j}(a_s^i,b_s^j) = \frac{1}{c^{i,j}}
		\exp\{-\lambda_s^{i,j}(a_s^i=j')\} \exp\{-\lambda_s^{i,j}(b_s^j=i')\}.
\label{eq:AA:qab}
\end{equation}
Substituting (\ref{eq:AA:qij})--(\ref{eq:AA:qab}) into the pairwise normalisation constraint (\ref{eq:AA:con1_end}) yields:
\begin{multline}
	q_s^{i,j} + \frac{1}{c^{i,j}}
	\sum\limits_{\substack{i'=0 \\ i'\neq i}}^{n}
	\exp\{-\lambda_s^{i,j}(b_s^j = i')\}\\
	\times
	\sum\limits_{\substack{j'=0 \\ j' \neq j}}^{m_s} \exp\{-\lambda_s^{i,j}(a_s^i=j')\} = 1.
\end{multline}
Subsequently, we find for $i'\neq i$, $j'\neq j$,
\begin{equation}
q_s^{i,j}(a_s^i=j',b_s^j=i') = \frac{q_s^{i,j'}q_s^{i',j} }{1-q_s^{i,j}}.
\end{equation}

Substituting the marginals and the pairwise joint into the entropies $H(a_s^i)$, $H(b_s^j)$ and $H(a_s^i,b_s^j)$ yields:
\begin{align}
	-H(a_s^i) &= \sum_{j=0}^{m_s} q_s^{i,j}\log{q_s^{i,j}}, \label{eq:AA:Ha}\\ 
	-H(b_s^j) &= \sum_{i=0}^{n} q_s^{i,j}\log{q_s^{i,j}}, \\ 
	-H(a_s^i,b_s^j) &=  q_s^{i,j}\log{q_s^{i,j}}
		+ \sum\limits_{\substack{i'=0 \\ i'\neq i}}^{n}
			\sum\limits_{\substack{j'=0 \\ j'\neq j}}^{m_s}
			\frac{q_s^{i,j'}q_s^{i',j} }{1-q_s^{i,j}}
			\log{\frac{q_s^{i,j'}q_s^{i',j} }{1-q_s^{i,j}}} \notag \\
			&=  q_s^{i,j}\log{q_s^{i,j}} -  (1-q_s^{i,j})\log{(1-q_s^{i,j})} \notag\\
			& \quad + \sum\limits_{\substack{j'=0 \\ j'\neq j}}^{m_s}
					q_s^{i,j'}\log{q_s^{i,j'}}
				+ \sum\limits_{\substack{i'=0 \\ i'\neq i}}^{n}
					q_s^{i',j}\log{q_s^{i',j}} \label{eq:AA:Hab}
\end{align}
so that the mutual information (\ref{eq:MI}) is:
\begin{equation}
I(a_s^i;b_s^j) = -q_s^{i,j}\log{q_s^{i,j}} - (1-q_s^{i,j})\log{(1-q_s^{i,j})}. \label{eq:AA:Iab}
\end{equation}
Substituting (\ref{eq:AA:Ha})-(\ref{eq:AA:Iab}) into the single scan formulation (\ref{eq:AA:BFE1})-(\ref{eq:AA:con1_end}), we arrive at the equivalent Bethe variational problem (\ref{eq:SSObjective})-(\ref{eq:SSFeas3}) where $w_s^{i,j}=\psi_s^i(a_s^i=j)$.

The Bethe variational problem in section \ref{ss:BPDA} involving random variables $\boldsymbol{x}^i$, $a_s^i$ and $b_s^j$ (illustrated in figure \ref{fig:MultipleScanBetheJPDA}(a)) can be solved by minimising:
\begin{multline}
F_{B}([q^i(\boldsymbol{x}^i)],[q_s^{i}(\boldsymbol{x}^i,a_s^i)],[q_s^i(a_s^i)],[q_s^{i,j}(a_s^i,b_s^j)],[q_s^j(b_s^j)]) = 
\\
-\sum_{i=1}^{n} \left\{ H(\boldsymbol{x}^i) + \mathbb{E}[\log\psi^i(\boldsymbol{x}^i)] \right\}
	-\sum_{i=1}^{n} H(a_s^i)
	-\sum_{j=1}^{m_s} H(b_s^j)\\
-\sum_{i=1}^{n} \left\{ -I(\boldsymbol{x}^i;a_s^i) + \mathbb{E}[\log\psi_s^i(\boldsymbol{x}^i,a_s^i)] \right\} \\
	-\sum_{i=1}^{n}\sum_{j=1}^{m_s} 
	\left\{ -I(a_s^i;b_s^j)  + \mathbb{E}[\log\psi_s^{i,j}(a_s^i,b_s^j)]\right\}, \label{eq:AA:BFE2}
\end{multline}
subject to the constraints (\ref{eq:AA:con1_1})-(\ref{eq:AA:con1_end}) and
\begin{gather}
q_s^{i}(\boldsymbol{x}^i,a_s^i) \geq 0, \quad q_s^i(\boldsymbol{x}^i) \geq 0, \label{eq:AA:con2_1}\\
\sum_{a_s^i=0}^{m_s} q_s^{i}(\boldsymbol{x}^i,a_s^i) = q_s^i(\boldsymbol{x}^i), \quad
\sum_{\boldsymbol{x}^i} q_s^{i}(\boldsymbol{x}^i,a_s^i) = q_s^i(a_s^i), \label{eq:AA:con2_2}\\
\sum_{\boldsymbol{x}^i}\sum_{a_s^i=0}^{m_s} q_s^{i}(\boldsymbol{x}^i,a_s^i) = 1,
\quad \sum_{\boldsymbol{x}^i} q_s^i(\boldsymbol{x}^i) = 1. \label{eq:AA:con2_end}
\end{gather}

Partial minimisation over the pairwise joint $q_s^{i,j}(a_s^i,b_s^j)$ arrives at the Bethe variational problem (\ref{eq:SSBFE})-(\ref{eq:SSConstrSumToOne}). A rearrangement of the Bethe free energy (\ref{eq:SSBFE}) is:
\begin{multline}
F_B([q_s^i(a_s^i)],[q_s^i(\boldsymbol{x}^i,a_s^i)],[q_s^{i,j}]) = \\
- \sum_{i=1}^{n} \left\{ H(a_s^i) + H(\boldsymbol{x}^i|a_s^i) + \mathbb{E}[\log\psi^i(\boldsymbol{x}^i)\psi_s^i(\boldsymbol{x}^i,a_s^i)] \right\} \\
+ \sum_{j=1}^{m_s}q_s^{0,j}\log{q_s^{0,j}}
- \sum_{i=1}^{n}\sum_{j=1}^{m_s} (1-q_s^{i,j})\log (1-q_s^{i,j}). \label{eq:AA:BFE3}
\end{multline}

Let $q_s^i(a_s^i)$ and $q_s^{i,j}$ be fixed and feasible. Minimising the Bethe free energy (\ref{eq:AA:BFE3}) with respect to $q_s^i(\boldsymbol{x}^i,a_s^i)$ subject to the constraints (\ref{eq:SSConstrPositivity})-(\ref{eq:SSConstrSumToOne}) yields the solution (using (\ref{eq:VariableEliminationRecovery})):
\begin{align}
q_s^i(\boldsymbol{x}^i,a_s^i)
= \frac{q_s^i(a_s^i)\psi^i(\boldsymbol{x}^i)\psi_s^i(\boldsymbol{x}^i,a_s^i)}
       {\sum_{{\boldsymbol{x}^i}'}\psi^i({\boldsymbol{x}^i}')\psi_s^i({\boldsymbol{x}^i}',a_s^i)}.
       \label{eq:AA:qAa}
\end{align}
Substituting $q_s^i(\boldsymbol{x}^i,a_s^i)$ into the Bethe free energy (\ref{eq:AA:BFE3}) results in the equivalent Bethe variational problem (\ref{eq:SSObjective})-(\ref{eq:SSFeas3}) where $w_s^{i,j}=\sum_{\boldsymbol{x}^i}\psi^i(\boldsymbol{x}^i)\psi^i_s(\boldsymbol{x}^i,j)$, and $q_s^i(a_s^i=j)=q_s^{i,j}$.

\subsection{Multiple scans}
\label{app:MSBFE}

The Bethe variational problem in section \ref{ss:MSDA}, which is represented by figure \ref{fig:MultipleScanBetheJPDA}(c), can be solved by minimising:
\begin{multline}
F_{B}([q^i(\boldsymbol{x}^i)],[q_s^{i}(\boldsymbol{x}^i,a_s^i)],[q_s^i(a_s^i)],[q_s^{i,j}(a_s^i,b_s^j)],[q_s^j(b_s^j)]) = 
\\
-\sum_{i=1}^{n} \left\{ H(\boldsymbol{x}^i) + \mathbb{E}[\log\psi^i(\boldsymbol{x}^i)] \right\} \\
-\sum_{s\in\mathcal{S}}\sum_{i=1}^{n} H(a_{s}^i)
-\sum_{s\in\mathcal{S}}\sum_{j=1}^{m_{s}} H(b_{s}^j)\\
-\sum_{s\in\mathcal{S}}\sum_{i=1}^{n} \left\{ -I(\boldsymbol{x}^i;a_{s}^i) 
%- H(\xv^i) - H(a_{s}^i)\right\}\\-\sum_{s\in\calS}\sum_{i=1}^{n} \left\{ 
+ \mathbb{E}[\log\psi_{s}^i(\boldsymbol{x}^i,a_{s}^i)] \right\} \\
-\sum_{s\in\mathcal{S}}\sum_{i=1}^{n}\sum_{j=1}^{m_{s}} 
\left\{-I(a_{s}^i;b_{s}^j) 
%\right\}\\-\sum_{s\in\calS}\sum_{i=1}^{n}\sum_{j=1}^{m_{s}}\left\{
+ \mathbb{E}[\log\psi_s^{i,j}(a_{s}^i,b_{s}^j)]\right\}, \label{eq:AB:BFE}
\end{multline}
subject to the constraints (\ref{eq:AA:con1_1})-(\ref{eq:AA:con1_end}) and (\ref{eq:AA:con2_1})-(\ref{eq:AA:con2_end}) for $s\in\mathcal{S}$. Partial minimisation over $q_s^{i,j}(a_{s}^i,b_{s}^j)$ and rearrangement produces the equivalent variational problem (\ref{eq:MSDABFE})-(\ref{eq:Constr_qti_SumToOne}).
\section{Proof of algorithms for minimising PDCA sub-problems}
\label{app:BlockAlgorithmProofs}
In this section, we prove lemmas \ref{lem:Blockt1} and \ref{lem:Blockt2}, i.e., we derive algorithms for minimising the blocks utilised in the PDCA algorithm. Before we begin, we prove the preliminary result in lemma \ref{lem:h1t_Convexity}, which shows that the block $h_{s,1}$ is convex (convexity of $f$ and $h_{s,2}$ is straight-forward).

\begin{IEEEproof}[Proof of lemma \ref{lem:h1t_Convexity}]
If $\kappa_{s,1,x}\geq 0$, then convexity with respect to $q^i(\boldsymbol{x}^i)$ is immediate. Otherwise, $-\kappa_{s,1,x}>0$; let $\tilde{\kappa}=\kappa_{s,1,s}+\kappa_{s,1,x}\geq 0$, and rewrite the first line of (\ref{eq:Block_ht1}) as:
\[
 -\sum_{i=1}^{n} \left\{ -\kappa_{s,1,x} H(a_{s}^i|\boldsymbol{x}^i) + \tilde{\kappa} H(\boldsymbol{x}^i|a_{s}^i) + \tilde{\kappa} H(a_{s}^i)\right\}.
\]
The first two terms are convex by definition of conditional entropy, as is the second line in (\ref{eq:Block_ht1}), so we focus on the remainder of the expression:
\begin{equation}\label{eq:SSConvexityStep2}
-\tilde{\kappa}\sum_{i=1}^{n}  H(a_{s}^i) - \gamma_{s} \sum_{i=1}^{n}\sum_{j=1}^{m_{s}} (1-q_{s}^{i,j})\log (1-q_{s}^{i,j}),
\end{equation}
where $-H(a_{s}^i) = \sum_{j=0}^{m_{s}} q_{s}^{i,j}\log q_{s}^{i,j}$. Recognising (\ref{eq:SSConvexityStep2}) as:
\begin{multline}
\gamma_{s}\left[\sum_{i=1}^{n} \sum_{j=0}^{m_{s}} q_{s}^{i,j}\log q_{s}^{i,j} - \sum_{i=1}^{n}\sum_{j=0}^{m_{s}} (1-q_{s}^{i,j})\log (1-q_{s}^{i,j}) \right] \\
+(\tilde{\kappa}-\gamma_{s})\sum_{i=1}^{n} \sum_{j=0}^{m_{s}} q_{s}^{i,j}\log q_{s}^{i,j} + \gamma_{s}\sum_{i=1}^{n} (1-q_{s}^{i,0})\log (1-q_{s}^{i,0}),
\end{multline}
we obtain the desired result; the first line is convex by theorem 20 in \cite{Von13}, which shows that the function $S(\xi) = \sum_j \xi_j\log \xi_j - \sum_j (1-\xi_j)\log (1-\xi_j)$ is convex on the domain $\xi_j\geq 0$, $\sum_j \xi_j=1$; the second line is convex by convexity of $x\log x$.
\end{IEEEproof}

\begin{IEEEproof}[Proof of lemma \ref{lem:Blockt1}]
Collecting terms, the objective to be minimised is:
\begin{multline}
F^\mu_{s,1}([q^i(\boldsymbol{x}^i)],[q_{s}^i(\boldsymbol{x}^i,a_{s}^i)]) \\
= -\sum_{i=1}^{n} \left\{ (\kappa_{f,x}+\kappa_{s,1,x}) H(\boldsymbol{x}^i) + \mathbb{E}[\phi^i(\boldsymbol{x}^i) ] \right\} \\
- \sum_{\tau\in\mathcal{S}\backslash\{s\}}\sum_{i=1}^{n} \left\{ \kappa_{f,\tau} H(\boldsymbol{x}^i,a_{\tau}^i) + \mathbb{E}[\phi_{\tau}^i(\boldsymbol{x}^i,a_{\tau}^i) ]\right\} \\
- \sum_{i=1}^{n} \left\{ (\kappa_{f,s}+\kappa_{s,1,s}) H(\boldsymbol{x}^i,a_{s}^i) + \mathbb{E}[\phi_{s}^i(\boldsymbol{x}^i,a_{s}^i) ]\right\} \\
 + \beta_{s} \sum_{j=1}^{m_{s}}q_{s}^{0,j}\log{q_{s}^{0,j}} %\\
 - \gamma_{s} \sum_{i=1}^{n}\sum_{j=1}^{m_{s}} (1-q_{s}^{i,j})\log (1-q_{s}^{i,j}).
\label{eq:Block_Jt1} 
\end{multline}
For $\tau\in\mathcal{S}\backslash\{s\}$, $h_{s,1}$ is constant with respect to $q^i_{\tau}(\boldsymbol{x}^i,a_\tau^i)$, so $\lambda_{s,1,\tau}^i(\boldsymbol{x}^i,a_{\tau}^i) = 0$. If we define 
\begin{align}
\tilde{\phi}_{s}^i(\boldsymbol{x}^i,a_{s}^i) &= \phi^i(\boldsymbol{x}^i) + \phi_{s}^i(\boldsymbol{x}^i,a_{s}^i), \\
q^i(\boldsymbol{x}^i) &= \sum_{a_{s}^i=0}^{m_{s}}q^i_{s}(\boldsymbol{x}^i,a_{s}^i),
\end{align}
then the terms in (\ref{eq:Block_Jt1}) that depend on $q^i_{s}(\boldsymbol{x}^i,a_{s}^i)$ can be written as
\begin{multline}
- \sum_{i=1}^{n} (\kappa_{f,s}+\kappa_{s,1,s}) [ H(\boldsymbol{x}^i|a_{s}^i) +H(a_{s}^i)] \\
- \sum_{i=1}^{n}\mathbb{E}[\tilde{\phi}_{s}^i(\boldsymbol{x}^i,a_{s}^i) ] + \beta_{s} \sum_{j=1}^{m_{s}}q_{s}^{0,j}\log{q_{s}^{0,j}} \\
 - \gamma_{s} \sum_{i=1}^{n}\sum_{j=1}^{m_{s}} (1-q_{s}^{i,j})\log (1-q_{s}^{i,j}).
\end{multline}
Using lemma \ref{lem:CondEntOpt} to minimise with respect to $q^i_{s}(\boldsymbol{x}^i,a_{s}^i)$ while holding $q_{s}^i(a_{s}^i)$ fixed, we find that the optimisation becomes
\begin{multline}
 (\kappa_{f,s}+\kappa_{s,1,s})\Bigg[ -\sum_{i=1}^{n}  \left\{ H(a_{s}^i) + \mathbb{E}[\tilde{\phi}_{s}^i(a_{s}^i) ]\right\} \\
 + \tilde{\beta}_{s} \sum_{j=1}^{m_{s}}q_{s}^{0,j}\log{q_{s}^{0,j}} % \\
 - \tilde{\gamma}_{s} \sum_{i=1}^{n}\sum_{j=1}^{m_{s}} (1-q_{s}^{i,j})\log (1-q_{s}^{i,j})  \Bigg],
\end{multline}
while $q^i_{s}(\boldsymbol{x}^i,a_{s}^i)$ can be recovered via (\ref{eq:Blockt1QAaRecovery}). Dividing by $(\kappa_{f,s}+\kappa_{s,1,s})$, we obtain (\ref{eq:SingleScanSubproblem}). Finally, since
\begin{equation}
\nabla_{q^i(\boldsymbol{x}^i)} f = \kappa_{f,x}\log q^i(\boldsymbol{x}^i) + \kappa_{f,x} - \log\psi^i(\boldsymbol{x}^i), \label{eq:Block0FGradient}
\end{equation}
we find that the update in (\ref{eq:HSDCAUpdate2}) reduces to
\begin{align}
\lambda_{s,1,x}^i(\boldsymbol{x}^i) &= -\mu^i(\boldsymbol{x}^i) - \kappa_{f,x}\log q^i(\boldsymbol{x}^i)  - \kappa_{f,x} + \log\psi^i(\boldsymbol{x}^i) \notag \\
&= \phi^i(\boldsymbol{x}^i) - \kappa_{f,x}\log q^i(\boldsymbol{x}^i)  - \kappa_{f,x}, \label{eq:Blockt1LA}
\end{align}
which is the result in (\ref{eq:Blockt1UpdateLA}). Following identical steps for $q^i_{s}(\boldsymbol{x}^i,a_{s}^i)$ gives the result in (\ref{eq:Blockt1UpdateLAa}).
\end{IEEEproof}

\begin{IEEEproof}[Proof of lemma \ref{lem:Blockt2}]
Collecting terms, the objective to be minimised is:
\begin{multline}
F^\mu_{s,2}([q^i(\boldsymbol{x}^i)],[q_{\tau}^i(\boldsymbol{x}^i,a_{\tau}^i)]) \\
= -\sum_{i=1}^{n} \left\{ (\kappa_{f,x}+\kappa_{s,2,x}) H(\boldsymbol{x}^i) + \mathbb{E}[\phi^i(\boldsymbol{x}^i) ] \right\} \\
- \sum_{\tau\in\mathcal{S}}\sum_{i=1}^{n} \left\{ \kappa_{f,\tau} H(\boldsymbol{x}^i,a_{\tau}^i) + \mathbb{E}[\phi_{\tau}^i(\boldsymbol{x}^i,a_{\tau}^i) ]\right\} \\
- \sum_{i=1}^{n} \left\{ \kappa_{s,2,s} H(\boldsymbol{x}^i,a_{s}^i) \right\}.
	 \label{eq:Block_Jt2} 
\end{multline}
For $\tau\in\mathcal{S}\backslash\{s\}$, (\ref{eq:Constr_qAa}) is not enforced, and $h_{s,2}$ is constant with respect to $q^i_{\tau}(\boldsymbol{x}^i,a_\tau^i)$, so $\lambda_{s,2,\tau}^i(\boldsymbol{x}^i,a_{\tau}^i) = 0$. Since the constraint (\ref{eq:Constr_qAa}) is enforced for time $s$, (\ref{eq:Block_Jt2}) can be written equivalently as 
\begin{multline}
F^\mu_{s,2}([q^i(\boldsymbol{x}^i)],[q_{\tau}^i(\boldsymbol{x}^i,a_{s}^i)]) %\\
= -\sum_{i=1}^{n} \left\{ \tilde{\kappa} H(\boldsymbol{x}^i) + \mathbb{E}[\phi^i(\boldsymbol{x}^i) ] \right\} \\
- \sum_{i=1}^{n} \left\{ (\kappa_{f,s}+\kappa_{s,2,s}) H(a_{s}^i|\boldsymbol{x}^i) + \mathbb{E}[\phi_{s}^i(\boldsymbol{x}^i,a_{s}^i) ]\right\} \\
- \sum_{\tau\in\mathcal{S}\backslash\{s\}}\sum_{i=1}^{n} \left\{ \kappa_{f,\tau} H(\boldsymbol{x}^i,a_{\tau}^i) + \mathbb{E}[\phi_{\tau}^i(\boldsymbol{x}^i,a_{\tau}^i) ]\right\},
\label{eq:Block_Jt2b} 
\end{multline}
where $\tilde{\kappa} = \kappa_{f,x}+\kappa_{f,s}+\kappa_{s,2,x}+\kappa_{s,2,s}$. Using lemma \ref{lem:CondEntOpt} to perform a partial minimisation of (\ref{eq:Block_Jt2b}) with respect to $q_{\tau}^i(\boldsymbol{x}^i,a_{\tau}^i)$, holding $q^i(\boldsymbol{x}^i)$ fixed, we find the result in (\ref{eq:Blockt2qAa}) and the remaining problem:
\begin{multline}
F^\mu_{s,2}([q^i(\boldsymbol{x}^i)]) 
\stackrel{c}{=} -\sum_{i=1}^{n} \left\{ \tilde{\kappa} H(\boldsymbol{x}^i) + \mathbb{E}\left[\phi^i(\boldsymbol{x}^i) + \tilde{\phi}^i_{s}(\boldsymbol{x}^i) \right] \right\}.
\label{eq:Block_Jt2part} 
\end{multline}
Using lemma \ref{lem:EntOpt}, we obtain the result in (\ref{eq:Blockt2qA}). 

Following similar steps to (\ref{eq:Blockt1LA}) gives the updates for $\lambda_{s,2,x}^i(\boldsymbol{x}^i)$ and $\lambda_{s,2,s}^i(\boldsymbol{x}^i,a_{s}^i)$.
\end{IEEEproof}

\section{Proof of convergence of single scan iteration}
\label{app:FFEConvergence}
In this section, we prove convergence of an iterative algorithm for solving the single scan block, illustrated in figure \ref{fig:SSBPDA}:
\begin{align}
\minimise & \sum_{i=1}^n\sum_{j=1}^m q_{i,j}\log\frac{q_{i,j}}{w_{i,j}} \notag\\ &
 + \alpha\sum_{i=1}^n q_{i,0}\log\frac{q_{i,0}}{w_{i,0}}
 + \beta\sum_{j=1}^m q_{0,j}\log\frac{q_{0,j}}{w_{0,j}} \notag\\
& - \gamma\sum_{i=1}^n\sum_{j=1}^m  (1-q_{i,j})\log(1-q_{i,j}) \label{eq:AppSSObjective} \\
\subjectto & \sum_{j=0}^m q_{i,j} = 1 \;\forall\; i\in\{1,\dots,n\} \label{eq:AppSSFeas1} \\
& \sum_{i=0}^n q_{i,j} = 1 \;\forall\; j\in\{1,\dots,m\} \label{eq:AppSSFeas2} \\
& 0 \leq q_{i,j} \leq 1, \label{eq:AppSSFeas3}
\end{align}
where $\gamma\in[0,\alpha]\cap[0,\beta]\cap[0,1)$, $\alpha\in(0.5,\infty)$, $\beta\in(0.5,\infty)$. In our analysis, we permit values $w_{i,j}=0$, maintaining a finite objective by fixing the corresponding $q_{i,j}=0$, and defining $q_{i,j}/w_{i,j}\triangleq 1$; since these take on fixed values, we do not consider them to be optimisation variables. While we state the algorithm more generally, we prove convergence for three cases:
\begin{enumerate}
\item $n=m$ and $w_{i,0}=w_{0,j}=0$ (i.e., no missed detection/false alarm events)
\item $w_{i,0}>0\;\forall\;i$, $w_{0,j}=0\;\forall\; j$, $\alpha=1$ (i.e., missed detections but no false alarms)
\item $w_{i,0}>0\;\forall\;i$, $w_{0,j}>0\;\forall\; j$, $\alpha=1$ (i.e., missed detections and false alarms)
\end{enumerate}

In case 1 above, $\alpha$ and $\beta$ have no effect since $q_{i,0}=0$ and $q_{0,j}=0$. Similarly, in case 2, $\beta$ has no effect since $q_{0,j}=0$. Assumption \ref{ass:TwoNonZero} ensures that the problem has a relative interior (again, we exclude the $q_{i,j}$ variables for which $w_{i,j}=0$, since they are fixed to zero).

\begin{assumption}\label{ass:TwoNonZero}
There exists a feasible point in the relative interior, i.e., there exists $q_{i,j}$ satisfying the constraints (\ref{eq:AppSSFeas1})-(\ref{eq:AppSSFeas3}) such that $0<q_{i,j}<1\;\forall\;(i,j)\;\mbox{s.t.}\;w_{i,j}>0$.
\end{assumption}

\begin{assumption}\label{ass:Connected}
The graph is connected, i.e., we can travel from any left-hand side vertex $a^i$, $i\in\{1,\dots,n\}$ to any right-hand side vertex $b^j$, $j\in\{1,\dots,m\}$ by following a path consisting of edges $(i',j')$ with $w_{i',j'}>0$.
\end{assumption}

Assumption \ref{ass:TwoNonZero} can easily be shown to be satisfied if $w_{i,0}>0\;\forall\;i$ and $w_{0,j}>0\;\forall\;j$ (i.e., missed detection and false alarm likelihoods are non-zero). In problems without false alarms or missed detections, the condition excludes infeasible problems (e.g., where two measurements can only be associated with a single target), and problems with trivial components (e.g., where a measurement can only be associated with one target, so that measurement and target can be removed and the smaller problem solved via optimisation). Assumption \ref{ass:Connected} ensures that the problem is connected; this property is utilised in the proof of case 1. Any problem in which the graph is not connected can be solved more efficiently by solving each connected component separately. 

\begin{lemma}\label{lem:Interior}
The solution of (\ref{eq:AppSSObjective})-(\ref{eq:AppSSFeas3}) lies in the relative interior, i.e., $0<q_{i,j}<1\;\forall\;(i,j)\;\mbox{s.t.}\;w_{i,j}>0$.
\end{lemma}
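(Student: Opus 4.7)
The plan is to argue by contradiction. Suppose the optimum $q^*$ saturates at some coordinate, $q^*_{i_0,j_0}\in\{0,1\}$ with $w_{i_0,j_0}>0$. Let $\tilde q$ be the interior feasible point guaranteed by Assumption~\ref{ass:TwoNonZero}, set $d:=\tilde q-q^*$, and consider the curve $q(\epsilon):=q^*+\epsilon d$, which is feasible for $\epsilon\in[0,1]$ because the constraints (\ref{eq:AppSSFeas1})--(\ref{eq:AppSSFeas3}) cut out a convex polytope. The goal is to show that the one-sided derivative $\lim_{\epsilon\downarrow 0}[F(q(\epsilon))-F(q^*)]/\epsilon$ is $-\infty$, contradicting the optimality of $q^*$.

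For $q^*_{i_0,j_0}=0$ the argument is quick: $d_{i_0,j_0}=\tilde q_{i_0,j_0}>0$, and the entropy term $q_{i_0,j_0}\log(q_{i_0,j_0}/w_{i_0,j_0})$ evaluated at $q(\epsilon)$ already behaves like $d_{i_0,j_0}\log\epsilon + O(1)\to-\infty$. All remaining terms either stay smooth at $q^*$ (interior coordinates) or contribute another non-positive multiple of $\log\epsilon$ (other boundary zeros), so the difference quotient diverges to $-\infty$.

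The case $q^*_{i_0,j_0}=1$ is the main obstacle. Here the concave penalty $-\gamma(1-q_{i_0,j_0})\log(1-q_{i_0,j_0})$ supplies an \emph{unfavourable} contribution of order $+\gamma|d_{i_0,j_0}|\log(1/\epsilon)$ to the difference quotient, and one must beat it with negative contributions from other cells. The row-sum constraint (\ref{eq:AppSSFeas1}) forces $q^*_{i_0,k}=0$ for every $k\neq j_0$, so every companion cell with $w_{i_0,k}>0$ contributes a negative $\log(1/\epsilon)$ term from its own entropy (with an extra factor $\alpha$ for $k=0$, and symmetrically $\beta$ for the corresponding argument along column $j_0$). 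Invoking the identity $\sum_{k\neq j_0,\,w_{i_0,k}>0}\tilde q_{i_0,k}=1-\tilde q_{i_0,j_0}=|d_{i_0,j_0}|$, the net $\log(1/\epsilon)$ coefficient from row $i_0$ collapses to a combination of the form $(\gamma-1)S+(\gamma-\alpha)T$ with $S,T\geq 0$ and $S+T>0$, which is strictly negative in each of the three problem cases considered (case 1 has $T=0$ so the sign comes from $\gamma<1$; cases 2--3 have $\alpha=1>\gamma$). Hence the difference quotient again tends to $-\infty$, completing the contradiction.

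The crux, and the main obstacle, is the bookkeeping in the $q^*=1$ case: one must weigh the saturation blow-up of the fractional concave term against the opposing log-entropy blow-ups on the complementary row and column cells. The hypothesis $\gamma<1$ (together with $\alpha,\beta\geq\gamma$, which holds in each of the three problem classes listed) is precisely what makes the log-entropy contributions strictly dominate, forcing the boundary to be ruled out.
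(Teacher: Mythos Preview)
Your direct bookkeeping of the $\log(1/\epsilon)$ coefficients is a different route from the paper's. The paper instead splits the objective as $g+h$, where $h$ packages the concave $(1-q)\log(1-q)$ terms together with just enough entropy to be convex on the feasible set (via theorem~20 of \cite{Von13}); convexity makes $h'(\lambda)$ monotone and hence bounded above near $\lambda=0$, while the remainder $g$ contains only $q\log(q/w)$ terms with nonnegative coefficients $(1-\gamma),(\alpha-\gamma),\beta$, so $g'(\lambda)\to-\infty$ whenever some $q^0_{i,j}=0$. This sidesteps any balancing of positive against negative singular contributions.

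Your argument can be completed, but as written it has a gap. In the $q^*_{i_0,j_0}=0$ case you assert that the remaining boundary contributions are all ``boundary zeros'' and hence harmless, but a boundary optimum may have other cells equal to $1$ as well (for instance, any permutation matrix in case~1). Each cell $(i,j)$ with $i,j>0$ and $q^*_{i,j}=1$ contributes $+\gamma(1-\tilde q_{i,j})\log(1/\epsilon)$ from its concave penalty, a \emph{positive} singular term you have not accounted for. The same issue recurs in your $q^*_{i_0,j_0}=1$ analysis: you balance row $i_0$ alone, but other rows may also carry cells at $1$. The repair is to do the accounting globally, row by row: every row $i$ containing a cell at $1$ has net $\log(1/\epsilon)$ coefficient $(\gamma-1)S_i+(\gamma-\alpha)T_i\le 0$ (strictly negative in each of the three problem cases, since $S_i+T_i=1-\tilde q_{i,j_i}>0$), rows without such a cell contribute only nonpositive zero-cell terms, and the $q^*_{0,j}=0$ cells contribute $-\beta\tilde q_{0,j}\le 0$; summing gives a strictly negative total. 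With that global sum your elementary argument goes through, and it has the merit of not relying on the Vontobel convexity lemma.
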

\begin{proof}
Rewrite the objective in (\ref{eq:AppSSObjective}) in the form:
\begin{multline}\label{eq:AppSSObjectiveSeparated}
(1-\gamma)\sum_{i=1}^n\sum_{j=1}^m q_{i,j}\log\frac{q_{i,j}}{w_{i,j}}  \\
+ (\alpha-\gamma) \sum_{i=1}^n q_{i,0}\log\frac{q_{i,0}}{w_{i,0}} + \beta\sum_{j=1}^m q_{0,j}\log\frac{q_{0,j}}{w_{0,j}} \\
+ \gamma\Bigg[\sum_{i=0}^n\sum_{j=1}^m q_{i,j}\log\frac{q_{i,j}}{w_{i,j}} - \sum_{i=1}^n\sum_{j=1}^m (1-q_{i,j})\log(1-q_{i,j}) \Bigg].
\end{multline}
Consider two feasible points $\boldsymbol{q}^0$ and $\boldsymbol{q}^1$, where $\boldsymbol{q}^0$ is on the boundary and $\boldsymbol{q}^1$ is in the relative interior (such a point exists by assumption \ref{ass:TwoNonZero}). Let $\boldsymbol{q}^\lambda=\lambda \boldsymbol{q}^1 + (1-\lambda)\boldsymbol{q}^0$, and denote the objective evaluated at $\boldsymbol{q}^\lambda$ by 
\[
f(\lambda)=g(\lambda) + h(\lambda),
\]
where $g(\lambda)$ is the first two lines of (\ref{eq:AppSSObjectiveSeparated}) evaluated at $\boldsymbol{q}^\lambda$, and $h(\lambda)$ is the final line. Lemma \ref{lem:h1t_Convexity} shows that $h(\lambda)$ is convex, therefore its gradient is monotonically non-decreasing. Consequently it must be the case that:
\begin{equation}\label{eq:LemmaInteriorGradientH}
\lim_{\lambda\downarrow 0} h'(\lambda) = c < \infty.
\end{equation}
The derivative of $g(\lambda)$ is given by:
\begin{multline*}
g'(\lambda) = (1-\gamma)\sum_{i=1}^n\sum_{j=1}^m (q^1_{i,j}-q^0_{i,j})\left[\log\frac{\lambda q^1_{i,j} + (1-\lambda)q^0_{i,j}}{w_{i,j}} + 1 \right] \\
+(\alpha-\gamma)\sum_{i=1}^n (q^1_{i,0}-q^0_{i,0})\left[\log\frac{\lambda q^1_{i,0} + (1-\lambda)q^0_{i,0}}{w_{i,0}} + 1 \right] \\
+\beta\sum_{j=1}^m (q^1_{0,j}-q^0_{0,j})\left[\log\frac{\lambda q^1_{0,j} + (1-\lambda)q^0_{0,j}}{w_{0,j}} + 1 \right].
\end{multline*}
Since $\boldsymbol{q}^0$ is on the boundary and $\boldsymbol{q}^1$ is not, we must have:
\begin{equation}\label{eq:LemmaInteriorGradientG}
\lim_{\lambda\downarrow 0} g'(\lambda) = -\infty.
\end{equation}
By (\ref{eq:LemmaInteriorGradientH}) and (\ref{eq:LemmaInteriorGradientG}), we thus have that $f'(\lambda)<0\;\forall\;\lambda\in(0,\epsilon)$ for some $\epsilon>0$. 
Thus the optimum cannot lie on the boundary.
\end{proof}

\begin{lemma}\label{lem:SSKKTConditions}
The Karush-Kuhn-Tucker (KKT) optimality conditions \cite{Ber99} for the problem in (\ref{eq:AppSSObjective}) are:
\begin{gather}
\log\frac{q_{i,j}}{w_{i,j}} + \gamma\log(1-q_{i,j}) + 1 + \gamma - \lambda_i - \mu_j = 0 \;\forall\; i,j>0, \label{eq:AppSSKKTDualResid} \\
\alpha\log \frac{q_{i,0}}{w_{i,0}} + \alpha - \lambda_i = 0 \;\forall\; i>0, \label{eq:AppSSKKTPrimalResid1} \\
\beta\log \frac{q_{0,j}}{w_{0,j}} + \beta - \mu_j = 0 \;\forall\; j>0, \label{eq:AppSSKKTPrimalResid2}
\end{gather}
as well as the primal feasibility conditions (\ref{eq:AppSSFeas1})-(\ref{eq:AppSSFeas3}). The conditions are necessary and sufficient for optimality. In case 1 (where $w_{i,0}=0\;\forall\;i$ and $w_{0,j}=0\;\forall\;j$) the solution is unique up to a constant $c$ being added to $\lambda_i\;\forall\;i$ and subtracted from $\mu_j\;\forall\;j$. In other cases, the solution is unique.
\end{lemma}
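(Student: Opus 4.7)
The plan is to apply the KKT framework to the convex program (\ref{eq:AppSSObjective})--(\ref{eq:AppSSFeas3}), leaning on the convexity established in lemma \ref{lem:h1t_Convexity}, Slater's condition via the relative-interior point guaranteed by assumption \ref{ass:TwoNonZero}, and the interiority result of lemma \ref{lem:Interior}. First I will form the Lagrangian with multipliers $\lambda_i$ for (\ref{eq:AppSSFeas1}) and $\mu_j$ for (\ref{eq:AppSSFeas2}); the box constraints (\ref{eq:AppSSFeas3}) carry multipliers that, by lemma \ref{lem:Interior}, vanish at the optimum and so drop out. Differentiating using $\tfrac{d}{dq}[q\log(q/w)]=\log(q/w)+1$ and $\tfrac{d}{dq}[-(1-q)\log(1-q)]=\log(1-q)+1$ then yields (\ref{eq:AppSSKKTDualResid})--(\ref{eq:AppSSKKTPrimalResid2}) directly; combined with primal feasibility, Slater's condition gives that these KKT conditions are both necessary and sufficient for optimality.

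For uniqueness of the primal I will invoke the decomposition in (\ref{eq:AppSSObjectiveSeparated}). The coefficient $(1-\gamma)$ on the $\sum_{i,j>0} q_{i,j}\log(q_{i,j}/w_{i,j})$ block is strictly positive since $\gamma<1$, so this block is strictly convex in $\{q_{i,j}:i,j>0\}$. In cases 2 and 3, $\alpha=1$ yields $\alpha-\gamma>0$, making $\sum_i q_{i,0}\log(q_{i,0}/w_{i,0})$ strictly convex in $\{q_{i,0}\}$; in case 3 the factor $\beta>0$ does the same for $\{q_{0,j}\}$. Since the remaining bracketed term of (\ref{eq:AppSSObjectiveSeparated}) is convex (as shown within the proof of lemma \ref{lem:h1t_Convexity}), the overall objective is strictly convex in every free variable (those with $w_{i,j}>0$), so the primal optimum $q^{*}$ is unique.

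Uniqueness of the dual then splits according to the three cases. In case 3, (\ref{eq:AppSSKKTPrimalResid1}) and (\ref{eq:AppSSKKTPrimalResid2}) explicitly solve for each $\lambda_i$ and each $\mu_j$ in terms of $q^{*}$. In case 2 the same holds for $\lambda_i$, and (\ref{eq:AppSSKKTDualResid}) at any edge $(i,j)$ with $w_{i,j}>0$ then fixes each $\mu_j$. In case 1 only (\ref{eq:AppSSKKTDualResid}) remains, and for each such edge it constrains only the sum $\lambda_i+\mu_j$; a shift $\lambda_i\mapsto\lambda_i+c$, $\mu_j\mapsto\mu_j-c$ leaves every such equation unchanged. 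To show this is the \emph{only} freedom, I will consider the difference $(\delta_i,\epsilon_j)$ of two dual solutions; the relation $\delta_i+\epsilon_j=0$ on every edge, together with connectedness of the bipartite graph (assumption \ref{ass:Connected}), forces $\delta_i\equiv c$ and $\epsilon_j\equiv -c$ by propagating along a spanning tree.

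The main obstacle is the case-1 dual step: one must verify that the one-parameter degeneracy in the statement matches exactly the null-space of the KKT equations, which is where assumption \ref{ass:Connected} is essential; without connectedness, each connected component of the graph would contribute its own independent shift and the dual would no longer be unique up to a single constant. Everything else reduces to standard convex-analytic bookkeeping built on the already-established convexity and interiority lemmas.
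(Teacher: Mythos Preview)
Your approach is essentially the paper's, but you gloss over the one technical subtlety that the paper treats explicitly. The objective (\ref{eq:AppSSObjective}) is \emph{not} convex on the ambient space: the concave term $-(1-q_{i,j})\log(1-q_{i,j})$ is only neutralised once one of the affine constraints (\ref{eq:AppSSFeas1}) or (\ref{eq:AppSSFeas2}) is in force (this is precisely the content of lemma \ref{lem:h1t_Convexity}). Consequently, the Lagrangian $L(\boldsymbol{q},\boldsymbol{\lambda},\boldsymbol{\mu})$ is not convex in $\boldsymbol{q}$ as a function on $\mathbb{R}^{(n+1)(m+1)}$, and the standard ``stationarity of $L$ plus convexity $\Rightarrow$ global minimiser of $L$ $\Rightarrow$ optimality'' chain does not apply verbatim. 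The paper handles this by reparameterising via the projector $\mathbf{P}=\mathbf{I}-\mathbf{A}_1^T(\mathbf{A}_1\mathbf{A}_1^T)^{-1}\mathbf{A}_1$ onto the null-space of the first constraint block, obtaining a problem that \emph{is} convex in the free variable, applying KKT there, and then showing a one-to-one correspondence (via (\ref{eq:KKTLambdaOffset})) between KKT points of the projected and original problems. Your write-up should either reproduce this projection step or at least argue that full stationarity $\nabla_{\boldsymbol{q}}L=0$ implies stationarity of the restriction of $L$ to the affine subspace $\{\mathbf{A}_1\boldsymbol{q}=\boldsymbol{b}_1\}$, on which $L$ \emph{is} convex, and that this suffices for optimality over the (smaller) feasible set.

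On the other hand, your treatment of dual uniqueness is more careful than the paper's. The paper simply asserts that the constant-shift freedom in case~1 ``is the result of linear dependence of the constraints'' and that strict convexity gives primal uniqueness; it does not explicitly invoke assumption \ref{ass:Connected} to rule out additional dual degeneracies. Your spanning-tree propagation argument, showing that $\delta_i+\epsilon_j=0$ on every edge of a connected bipartite graph forces $\delta_i\equiv c$, $\epsilon_j\equiv -c$, is a genuine addition and makes the case-1 claim rigorous.
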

\begin{proof}
One complication is that the objective is not convex on $\mathbb{R}^{n+1\times m+1}$ but rather only on the subspace in which either (\ref{eq:AppSSFeas1}) or (\ref{eq:AppSSFeas2}) is satisfied. We show that the regular KKT conditions are still necessary and sufficient in this case. Relaxing the non-negativity condition,\footnote{Alternatively, define $f(\boldsymbol{q})=\infty$ for points violating the constraint.} the problem can be expressed as:
\begin{align*}
\minimise \; & f(\boldsymbol{q})  \\
\subjectto & \mathbf{A}_1\boldsymbol{q} = \boldsymbol{b}_1, \quad \mathbf{A}_2\boldsymbol{q} = \boldsymbol{b}_2.
\end{align*}
The KKT conditions for this problem are:
\begin{gather}
\nabla f(\boldsymbol{q}) - \mathbf{A}_1^T\boldsymbol{\lambda} - \mathbf{A}_2^T\boldsymbol{\mu} = 0, \label{eq:KKTDualResid} \\
\mathbf{A}_1\boldsymbol{q} = \boldsymbol{b}_1, \quad \mathbf{A}_2\boldsymbol{q} = \boldsymbol{b}_2. \label{eq:KKTPrimalResid}
\end{gather}
Given a solution $\boldsymbol{q}_0$ that satisfies $\mathbf{A}_1\boldsymbol{q}_0=\boldsymbol{b}_1$, we can express any feasible $\boldsymbol{q}$ as $\boldsymbol{q}_0 + \mathbf{P}(\boldsymbol{q}-\boldsymbol{q}_0)$ where $\mathbf{P} = \mathbf{I} - \mathbf{A}_1^T (\mathbf{A}_1 \mathbf{A}_1^T)^{-1} \mathbf{A}_1$ is the matrix that projects onto the null-space of $\mathbf{A}_1$. Thus we can equivalently solve
\begin{align*}
\minimise\; & f(\boldsymbol{q}_0 + \mathbf{P}(\boldsymbol{q}-\boldsymbol{q}_0))  \\
\subjectto & \mathbf{A}_1\boldsymbol{q} = \boldsymbol{b}_1, \quad \mathbf{A}_2\boldsymbol{q} = \boldsymbol{b}_2.
\end{align*}
Since the argument of $f$ lies in the feasible subspace for the first constraint, this problem is convex, and under Assumption \ref{ass:TwoNonZero} the Slater condition \cite{Ber99} is satisfied, so the KKT conditions are necessary and sufficient. The KKT conditions for this modified problem are:
\begin{gather}
\mathbf{P}\nabla f(\boldsymbol{q}) - \mathbf{A}_1^T\boldsymbol{\lambda} - \mathbf{A}_2^T\boldsymbol{\mu} = 0, \label{eq:KKTEquivDualResid} \\
\mathbf{A}_1\boldsymbol{q} = \boldsymbol{b}_1, \quad \mathbf{A}_2\boldsymbol{q} = \boldsymbol{b}_2, \label{eq:KKTEquivPrimalResid}
\end{gather}
where, after taking the gradient of $f$ in (\ref{eq:KKTEquivDualResid}), we substitute $\boldsymbol{q}_0 + \mathbf{P}(\boldsymbol{q}-\boldsymbol{q}_0)=\boldsymbol{q}$ since the point must satisfy the constraints (\ref{eq:KKTEquivPrimalResid}). The projection of the gradient is:
\[
\mathbf{P}\nabla f(\boldsymbol{q}) = \nabla f(\boldsymbol{q}) - \mathbf{A}_1^T (\mathbf{A}_1 \mathbf{A}_1^T)^{-1} \mathbf{A}_1\nabla f(\boldsymbol{q}).
\]
Thus a point $(\boldsymbol{q}^*,\boldsymbol{\lambda}^*,\boldsymbol{\mu}^*)$ satisfying the KKT conditions for the modified problem (\ref{eq:KKTEquivDualResid})-(\ref{eq:KKTEquivPrimalResid}) corresponds to a point $(\boldsymbol{q}^*,\tilde{\boldsymbol{\lambda}}^*,\boldsymbol{\mu}^*)$ in the KKT conditions for the original problem (\ref{eq:KKTDualResid})-(\ref{eq:KKTPrimalResid}), where
\begin{equation}\label{eq:KKTLambdaOffset}
\tilde{\boldsymbol{\lambda}}^*=\boldsymbol{\lambda}^* + (\mathbf{A}_1\mathbf{A}_1^T)^{-1}\mathbf{A}_1\nabla f(\boldsymbol{q}).
\end{equation}
Similarly, given a point satisfying the KKT conditions for the original problem, we can find a corresponding point satisfying the modified KKT conditions (\ref{eq:KKTEquivDualResid})-(\ref{eq:KKTEquivPrimalResid}) by inverting (\ref{eq:KKTLambdaOffset}). Thus points satisfying the KKT conditions for the original problem (\ref{eq:KKTDualResid})-(\ref{eq:KKTPrimalResid}) and the modified problem (\ref{eq:KKTEquivDualResid})-(\ref{eq:KKTEquivPrimalResid}) are in direct correspondence.

The expressions in (\ref{eq:AppSSKKTDualResid})-(\ref{eq:AppSSKKTPrimalResid2}) are found by forming the Lagrangian and taking gradients. Uniqueness of the solution comes from strict convexity of $f$. The freedom to choose a constant offset is the result of linear dependence of the constraints in case 1 (since each set of constraints implies that $\sum_{i,j}q_{i,j}=n$).
\end{proof}

The optimisation methodology we adopt is to define an iterative method and prove that it converges to a point that satisfies the KKT conditions, motivated by analysis of the BP iteration in \cite{Von13,WilLau12}. Defining $\bar{\lambda}_i=\lambda_i-\alpha$, $\bar{\mu}_i=\mu_i-\beta$ and $\kappa = -1 - \gamma + \alpha + \beta$, the KKT conditions in (\ref{eq:AppSSKKTDualResid})-(\ref{eq:AppSSKKTPrimalResid2}) can be rewritten as:
\begin{align}
q_{i,j} &= \frac{w_{i,j}\exp\{\bar{\lambda}_i + \bar{\mu}_j + \kappa \}}{(1-q_{i,j})^\gamma} \;\forall\; i,j>0, \label{eq:IterationUpdateQij}\\
q_{i,0} &= w_{i,0}\exp\{{\textstyle\frac{1}{\alpha}}\bar{\lambda}_i\} \;\forall\; i>0, \label{eq:IterationUpdateQi0}\\
q_{0,j} &= w_{0,j}\exp\{{\textstyle\frac{1}{\beta}}\bar{\mu}_j\} \;\forall\; j>0. \label{eq:IterationUpdateQ0j}
\end{align}
While these expressions do not permit us to immediately solve for $q_{i,j}$, they permit application of an iterative method, in which we repeatedly calculate new LHS values of $q_{i,j}$ by updating either $\bar{\lambda}_i$ via the equation:
\begin{equation}\label{eq:IterationUpdateLambda}
\exp\bar{\lambda}_i = \left[w_{i,0}\exp\{{\textstyle(\frac{1}{\alpha}-1)}\bar{\lambda}_i\} + \sum_{j=1}^m \frac{w_{i,j}\exp \{\bar{\mu}_j+\kappa\}}{(1-q_{i,j})^\gamma} \right]^{-1},
\end{equation}
or $\bar{\mu}_j$ via the equation:
\begin{equation}\label{eq:IterationUpdateMu}
\exp\bar{\mu}_j = \left[w_{0,j}\exp\{{\textstyle(\frac{1}{\beta}-1)}\bar{\mu}_j\} + \sum_{i=1}^n \frac{w_{i,j}\exp\{\bar{\lambda}_i+\kappa\}}{(1-q_{i,j})^\gamma} \right]^{-1},
\end{equation}
where the RHS values of $q_{i,j}$, $\bar{\lambda}_i$ and $\bar{\mu}_j$ refer to the previous iterates. The updates in (\ref{eq:IterationUpdateLambda}) and (\ref{eq:IterationUpdateMu}) are applied alternately. After each update, the values of $q_{i,j}$ are recalculated using (\ref{eq:IterationUpdateQij})-(\ref{eq:IterationUpdateQ0j}).

The iteration may be written equivalently in terms of the parameterisation $x_{i,j}$ and $y_{i,j}$, where
\begin{gather}
x_{i,j} = \frac{\exp\{\bar{\mu}_j+\kappa\}}{(1-q_{i,j})^\gamma}, \notag\\
x_{i,0} = \exp\{{\textstyle(\frac{1}{\alpha}-1)}\bar{\lambda}_i\}, \quad
x_{0,j} = \exp\{\bar{\mu}_j\}, \label{eq:CompactRepDefn1}  \\
y_{i,j} = \frac{\exp\{\bar{\lambda}_i+\kappa\}}{(1-q_{i,j})^\gamma}, \notag\\
y_{i,0} = \exp\{\bar{\lambda}_i\}, \quad
y_{0,j} = \exp\{{\textstyle(\frac{1}{\beta}-1)}\bar{\mu}_j\}. \label{eq:CompactRepDefn2}
\end{gather}
Algebraic manipulation yields equivalent iterations in terms of $x_{i,j}$ and $y_{i,j}$ as:

\begin{align}
x_{i,j}^{(k+1)} &= r_{i,j}(\boldsymbol{y}^{(k)}) \triangleq \left(w_{0,j}y_{0,j}^{(k)} + \sum_{i'} w_{i',j}y_{i',j}^{(k)}\right)^{-(1-\gamma)} \notag\\
&\qquad \times \left(w_{0,j}y_{0,j}^{(k)} + \sum_{i'\neq i} w_{i',j}y_{i',j}^{(k)}\right)^{-\gamma} \times e^\kappa, \label{eq:HalfIt1a} \\
x_{i,0}^{(k+1)} &= r_{i,0}(\boldsymbol{y}^{(k)}) \triangleq (y_{i,0}^{(k)})^{\frac{1}{\alpha}-1}, \label{eq:HalfIt1b} \\
x_{0,j}^{(k+1)} &= r_{0,j}(\boldsymbol{y}^{(k)}) \triangleq \left(w_{0,j}y_{0,j}^{(k)} + \sum_{i'} w_{i',j}y_{i',j}^{(k)}\right)^{-1}, \label{eq:HalfIt1c}
\end{align}
and
\begin{align}
y_{i,j}^{(k+1)} &= s_{i,j}(\boldsymbol{x}^{(k+1)}) \triangleq \left(w_{i,0}x_{i,0}^{(k+1)} + \sum_{j'} w_{i,j'}x_{i,j'}^{(k+1)}\right)^{-(1-\gamma)} \notag\\
&\qquad \times \left(w_{i,0}x_{i,0}^{(k+1)} + \sum_{j'\neq j} w_{i,j'}x_{i,j'}^{(k+1)}\right)^{-\gamma} \times e^\kappa, \label{eq:HalfIt2a} \\
y_{i,0}^{(k+1)} &= s_{i,0}(\boldsymbol{x}^{(k+1)}) \triangleq \left(w_{i,0}x_{i,0}^{(k+1)} + \sum_{j'} w_{i,j'}x_{i,j'}^{(k+1)}\right)^{-1}, \label{eq:HalfIt2b} \\
y_{0,j}^{(k+1)} &= s_{0,j}(\boldsymbol{x}^{(k+1)}) \triangleq (x_{0,j}^{(k+1)})^{\frac{1}{\beta}-1}. \label{eq:HalfIt2c}
\end{align}
The shorthand $\sum_{i'}$ represents the sum over the set $i'\in\{1,\dots,n\}$, while $\sum_{i'\neq i}$ represents the same summation, excluding the $i$-th element. Similarly, $\sum_{j'}$ represents the sum over the set $j'\in\{1,\dots,m\}$, while $\sum_{j'\neq j}$ represents the same summation, excluding the $j$-th element. The structure of this iterative method is illustrated in figure \ref{fig:IterationStructure}. Note that if $\gamma=1$, this reduces to the BP iteration of \cite{WilLau12}.

At this point, we have stated but not derived the iteration (\ref{eq:HalfIt1a})--(\ref{eq:HalfIt2c}). The validity of the expressions is established by proving that the solution of the KKT conditions is a fixed point of the iteration (in lemma \ref{lem:KKTFixedPoint}), and then showing that repeated application of the expressions yields a contraction, which is guaranteed to converge to the unique fixed point.

\begin{figure}
\centering
\includegraphics[width=0.75\linewidth]{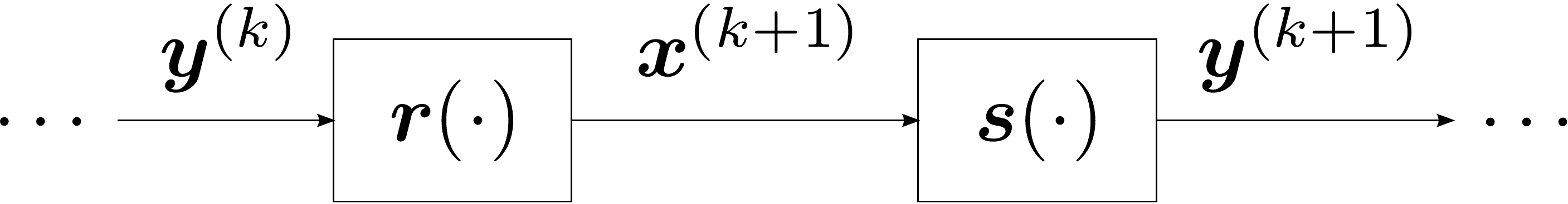}
\caption{Structure of iterative solution, alternating between half-iterations $\boldsymbol{x}^{(k+1)} = \boldsymbol{r}(\boldsymbol{y}^{(k)})$ and $\boldsymbol{y}^{(k+1)} = \boldsymbol{s}(\boldsymbol{x}^{(k+1)})$.}
\label{fig:IterationStructure}
\end{figure}

\begin{lemma}\label{lem:KKTFixedPoint}
Let $(q_{i,j}^*,\bar{\lambda}_i^*,\bar{\mu}_j^*)$ be the solution of the KKT conditions in lemma \ref{lem:SSKKTConditions}, and let $x_{i,j}^*$ and $y_{i,j}^*$ be the values calculated from $(q_{i,j}^*,\bar{\lambda}_i^*,\bar{\mu}_j^*)$ using (\ref{eq:CompactRepDefn1})--(\ref{eq:CompactRepDefn2}). Then $x_{i,j}^*$ and $y_{i,j}^*$ are a fixed point of $\boldsymbol{r}(\cdot)$ and $\boldsymbol{s}(\cdot)$ in (\ref{eq:HalfIt1a})-(\ref{eq:HalfIt2c}).
\end{lemma}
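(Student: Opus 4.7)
The plan is to verify the fixed-point property by direct substitution, using the KKT stationarity conditions together with the primal feasibility constraints \eqref{eq:AppSSFeas1}--\eqref{eq:AppSSFeas2}. The key observation is that the definitions \eqref{eq:CompactRepDefn1}--\eqref{eq:CompactRepDefn2} imply the identities
\[
q_{i,j} = w_{i,j}\, y_{i,0}\, x_{i,j} = w_{i,j}\, x_{0,j}\, y_{i,j}, \qquad i,j>0,
\]
because $\exp\{\bar{\lambda}_i\} = y_{i,0}$ and $\exp\{\bar{\mu}_j\}=x_{0,j}$, and the KKT equation \eqref{eq:IterationUpdateQij} is exactly this statement. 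Similarly $q_{i,0} = w_{i,0} x_{i,0} y_{i,0}$ since $y_{i,0}^{1/\alpha}=x_{i,0} y_{i,0}$, and $q_{0,j}=w_{0,j} x_{0,j} y_{0,j}$ by the same token.

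First I would derive the formulas for $y_{i,0}^{*}$ and $x_{0,j}^{*}$. Substituting the identities above into the row-normalisation $\sum_{j=0}^m q_{i,j}=1$ gives
\[
y_{i,0}^{*} \Bigl[w_{i,0} x_{i,0}^{*} + \textstyle\sum_{j'} w_{i,j'} x_{i,j'}^{*}\Bigr] = 1,
\]
which is precisely $y_{i,0}^{*}=s_{i,0}(\boldsymbol{x}^{*})$ in \eqref{eq:HalfIt2b}; an identical argument applied to the column-normalisation yields $x_{0,j}^{*}=r_{0,j}(\boldsymbol{y}^{*})$ of \eqref{eq:HalfIt1c}. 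The remaining two ``degenerate'' identities $x_{i,0}^{*}=r_{i,0}(\boldsymbol{y}^{*})$ and $y_{0,j}^{*}=s_{0,j}(\boldsymbol{x}^{*})$ are immediate from \eqref{eq:CompactRepDefn1}--\eqref{eq:CompactRepDefn2}, since $x_{i,0} = y_{i,0}^{1/\alpha-1}$ and $y_{0,j} = x_{0,j}^{1/\beta-1}$ by definition.

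Next I would verify the main update formulas \eqref{eq:HalfIt1a} and \eqref{eq:HalfIt2a}. From $q_{i,j}=w_{i,j} y_{i,0}^{*} x_{i,j}^{*}$ and the row-sum formula for $y_{i,0}^{*}$ derived above, one obtains
\[
1 - q_{i,j} = y_{i,0}^{*}\Bigl[w_{i,0} x_{i,0}^{*} + \textstyle\sum_{j'\neq j} w_{i,j'} x_{i,j'}^{*}\Bigr].
\]
Raising to the $\gamma$ power and using $y_{i,j}=y_{i,0}\, e^{\kappa}/(1-q_{i,j})^{\gamma}$ (from the definitions in \eqref{eq:CompactRepDefn2}) gives
\[
y_{i,j}^{*} = (y_{i,0}^{*})^{1-\gamma}\Bigl[w_{i,0} x_{i,0}^{*} + \textstyle\sum_{j'\neq j} w_{i,j'} x_{i,j'}^{*}\Bigr]^{-\gamma} e^{\kappa},
\]
which is exactly $s_{i,j}(\boldsymbol{x}^{*})$ of \eqref{eq:HalfIt2a} once one substitutes in $y_{i,0}^{*}=s_{i,0}(\boldsymbol{x}^{*})$. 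The same chain of manipulations applied to the column marginalisation gives $x_{i,j}^{*}=r_{i,j}(\boldsymbol{y}^{*})$.

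This is fundamentally a verification, so there is no conceptually hard step; the main obstacle is simply keeping the notation straight across the two parameterisations and checking that the factorised $1-q_{i,j}$ expression is precisely the ``leave-one-out'' sum appearing in $r$ and $s$. I would present the derivation for the $y$-update in detail and then remark that the $x$-update follows by a symmetric argument.
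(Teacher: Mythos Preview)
Your proposal is correct and follows essentially the same approach as the paper: both arguments combine the KKT stationarity relations \eqref{eq:IterationUpdateQij}--\eqref{eq:IterationUpdateQ0j} with the primal feasibility constraints \eqref{eq:AppSSFeas1}--\eqref{eq:AppSSFeas2} to recover the fixed-point equations, and both obtain the second factor in \eqref{eq:HalfIt1a}/\eqref{eq:HalfIt2a} by recognising $1-q_{i,j}$ as the leave-one-out sum. Your packaging via the identities $q_{i,j}=w_{i,j}\,y_{i,0}\,x_{i,j}=w_{i,j}\,x_{0,j}\,y_{i,j}$ is a slightly cleaner bookkeeping device, but the underlying logic is identical to the paper's direct substitution in the $(\bar{\lambda},\bar{\mu},q)$ variables.
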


\begin{proof}
Feasibility implies that $\sum_{i=0}^n q_{i,j}=1\;\forall\;j$. Therefore:
\begin{equation}
w_{0,j}\exp\{{\textstyle\frac{1}{\beta}}\bar{\mu}_j^*\} + \sum_{i=1}^n
\frac{w_{i,j}\exp\{\bar{\lambda}_i^* + \bar{\mu}_j^* + \kappa\}}{(1-q_{i,j}^*)^\gamma} = 1,
\end{equation}
or
\begin{equation}
\exp\{\bar{\mu}_j^*\} = \left[
w_{0,j}\exp\{{(\textstyle\frac{1}{\beta}}-1)\bar{\mu}_j^*\} + \sum_{i=1}^n
\frac{w_{i,j}\exp\{\bar{\lambda}_i^* +\kappa\}}{(1-q_{i,j}^*)^\gamma} \right]^{-1}
\end{equation}
Equating terms, this proves (\ref{eq:HalfIt1c}), and shows that the first factor in (\ref{eq:HalfIt1a}) is $\exp\{(1-\gamma)\bar{\mu}_j^*\}$. To prove (\ref{eq:HalfIt1a}), note that
\begin{equation}
\frac{1}{1-q_{i,j}^*}=\frac{1}{1-\frac{w_{i,j}\exp\{\bar{\lambda}_i^*+\bar{\mu}_j^*+\kappa\}}{(1-q_{i,j}^*)^\gamma}},
\end{equation}
thus
\begin{align}
\frac{\exp\{\bar{\mu}_j^*\}}{1-q_{i,j}^*} &= \left[\exp\{-\bar{\mu}_j^*\}-\frac{w_{i,j}\exp\{\bar{\lambda}_i^*+\kappa\}}{(1-q_{i,j}^*)^\gamma}\right]^{-1} \\
&= \left[
w_{0,j}\exp\{{(\textstyle\frac{1}{\beta}}-1)\bar{\mu}_j^*\} + \sum_{i'=1 \atop i'\neq i}^n
\frac{w_{i',j}\exp\{\bar{\lambda}_{i'}^* +\kappa \}}{(1-q_{i',j}^*)^\gamma}
\right]^{-1}
\end{align}
Raising this to the power $\gamma$ and multiplying by $\exp\{(1-\gamma)\bar{\mu}_j^*+\kappa\}$, we obtain (\ref{eq:HalfIt1a}). Similar steps show (\ref{eq:HalfIt2a}) and (\ref{eq:HalfIt2b}), while (\ref{eq:HalfIt1b}) and (\ref{eq:HalfIt2c}) are immediate.
\end{proof}

Our goal in what follows is to prove that the composite operator $\boldsymbol{r}(\boldsymbol{s}(\cdot))$ is a contraction, as defined below. We utilise the same distance metric as in \cite{WilLau12}:
\[
d(\boldsymbol{x},\tilde{\boldsymbol{x}}) = \max_{i,j}\log\left|\frac{x_{i,j}}{\tilde{x}_{i,j}}\right|,
\]
where we define $0/0=1$. 

\begin{definition}
An operation $\boldsymbol{g}(\boldsymbol{x})$ is a \textbf{contraction} with respect to $d(\cdot,\cdot)$ if there exists $\alpha\in[0,1)$ such that for all $\boldsymbol{x},\tilde{\boldsymbol{x}}$
\begin{equation}
d[\boldsymbol{g}(\boldsymbol{x}),\boldsymbol{g}(\tilde{\boldsymbol{x}})] \leq \alpha d(\boldsymbol{x},\tilde{\boldsymbol{x}}).
\end{equation}
If the expression is satisfied for $\alpha=1$, then $\boldsymbol{g}(\boldsymbol{x})$ is a \textbf{non-expansion}.
\end{definition}

\begin{lemma}\label{lem:WeightedCombNonExpansion}
Let $\boldsymbol{g}(\cdot)$ be the operator taking the weighted combination with non-negative weights $w_{i,j}\geq 0$:
\[
g_{k,l}(\boldsymbol{x})=\sum_{i,j}w_{i,j,k,l}x_{i,j}.
\]
$\boldsymbol{g}(\cdot)$ is non-expansive with respect to $d$.
\end{lemma}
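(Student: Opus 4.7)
The plan is a direct monotonicity argument exploiting the log-ratio metric together with non-negativity of the weights. Let $\delta = d(\boldsymbol{x},\tilde{\boldsymbol{x}})$. If $\delta = \infty$ there is nothing to prove, so I would assume $\delta < \infty$. Under the convention $0/0 = 1$, the finiteness of $\delta$ forces $x_{i,j} = 0$ iff $\tilde{x}_{i,j} = 0$, and on the remaining coordinates the definition of the metric gives the pointwise sandwich
\begin{equation}
e^{-\delta}\,\tilde{x}_{i,j} \;\le\; x_{i,j} \;\le\; e^{\delta}\,\tilde{x}_{i,j}.
\end{equation}

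Next I would multiply through by the weights $w_{i,j,k,l} \ge 0$ (which preserves all inequalities precisely because the weights are non-negative) and sum over $(i,j)$ to obtain
\begin{equation}
e^{-\delta}\,g_{k,l}(\tilde{\boldsymbol{x}}) \;\le\; g_{k,l}(\boldsymbol{x}) \;\le\; e^{\delta}\,g_{k,l}(\tilde{\boldsymbol{x}}).
\end{equation}
When $g_{k,l}(\tilde{\boldsymbol{x}}) > 0$, dividing and taking the logarithm yields $|\log[g_{k,l}(\boldsymbol{x})/g_{k,l}(\tilde{\boldsymbol{x}})]| \le \delta$. Taking the maximum over $(k,l)$ then gives $d(\boldsymbol{g}(\boldsymbol{x}),\boldsymbol{g}(\tilde{\boldsymbol{x}})) \le \delta$, which is the required non-expansion property (contraction with $\alpha = 1$).

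The only thing to be careful about is the case $g_{k,l}(\tilde{\boldsymbol{x}}) = 0$. This can only happen if $w_{i,j,k,l}\tilde{x}_{i,j} = 0$ for every $(i,j)$; the finite-distance hypothesis then forces $w_{i,j,k,l}x_{i,j} = 0$ identically as well, so both sums vanish and the convention $0/0 = 1$ contributes $\log 1 = 0 \le \delta$ to the max, preserving the bound. I expect this boundary bookkeeping to be the only subtle step; the core inequality is essentially immediate from non-negativity of the weights, which is why the lemma delivers only non-expansion (not strict contraction) and will later need to be combined with the nonlinear factors in $\boldsymbol{r}$ and $\boldsymbol{s}$ to obtain a strict contraction.
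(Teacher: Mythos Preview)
Your argument is correct and is essentially identical to the paper's own proof: set $L=e^{\delta}$, use the pointwise sandwich $L^{-1}x_{i,j}\le \tilde{x}_{i,j}\le L x_{i,j}$, multiply by the non-negative weights and sum. The only difference is that you explicitly handle the degenerate case $g_{k,l}(\tilde{\boldsymbol{x}})=0$ via the $0/0=1$ convention, which the paper leaves implicit.
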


\begin{proof}
Let $L = \exp\{ d(\boldsymbol{x},\tilde{\boldsymbol{x}}) \} < \infty$ (otherwise there is nothing to prove), so that $\frac{1}{L} x_{i,j} \leq \tilde{x}_{i,j} \leq L x_{i,j}$. Then
\begin{align}
g_{k,l}(\tilde{\boldsymbol{x}}) &= \sum_{i,j}w_{i,j,k,l}\tilde{x}_{i,j} \leq L \sum_{i,j}w_{i,j,k,l}x_{i,j} = L g_{k,l}(\boldsymbol{x}),
\intertext{and}
g_{k,l}(\tilde{\boldsymbol{x}}) &= \sum_{i,j}w_{i,j,k,l}\tilde{x}_{i,j} \geq {\textstyle\frac{1}{L}} \sum_{i,j}w_{i,j,k,l}x_{i,j} = {\textstyle\frac{1}{L}} g_{k,l}(\boldsymbol{x}).
\end{align}
\end{proof}

\begin{lemma}\label{lem:PointwiseProduct}
Let $\boldsymbol{f}(\cdot)$ be formed from two operators $\boldsymbol{g}(\cdot)$ and $\boldsymbol{h}(\cdot)$ as
\[
f_{i,j}(\boldsymbol{x}) = g_{i,j}(\boldsymbol{x})^{\rho_g} h_{i,j}(\boldsymbol{x})^{\rho_h}.
\]
Suppose that $\boldsymbol{g}(\cdot)$ and $\boldsymbol{h}(\cdot)$ are contractions or non-expansions with coefficients $\alpha_g$ and $\alpha_h$. If $\alpha_f = \alpha_g |\rho_g| + \alpha_h |\rho_h|<1$, then $\boldsymbol{f}(\cdot)$ is a contraction with respect to $d$. If $\alpha_f=1$ then $\boldsymbol{f}(\cdot)$ is a non-expansion.
\end{lemma}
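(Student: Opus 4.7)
The plan is to work directly from the definition of the metric and exploit the fact that taking powers of the operators corresponds to scalar multiplication on the log-ratio scale. First I would write out the log-ratio of $f_{i,j}$ evaluated at $\boldsymbol{x}$ versus $\tilde{\boldsymbol{x}}$: since $f_{i,j}(\boldsymbol{x}) = g_{i,j}(\boldsymbol{x})^{\rho_g} h_{i,j}(\boldsymbol{x})^{\rho_h}$, we immediately have
\[
\log\frac{f_{i,j}(\boldsymbol{x})}{f_{i,j}(\tilde{\boldsymbol{x}})} = \rho_g \log\frac{g_{i,j}(\boldsymbol{x})}{g_{i,j}(\tilde{\boldsymbol{x}})} + \rho_h \log\frac{h_{i,j}(\boldsymbol{x})}{h_{i,j}(\tilde{\boldsymbol{x}})}.
\]
Applying the triangle inequality for absolute values and pulling $|\rho_g|$, $|\rho_h|$ through (which justifies the absolute values on the exponents in the statement, since $\rho_g,\rho_h$ may be negative), I obtain
\[
\left|\log\frac{f_{i,j}(\boldsymbol{x})}{f_{i,j}(\tilde{\boldsymbol{x}})}\right| \leq |\rho_g|\left|\log\frac{g_{i,j}(\boldsymbol{x})}{g_{i,j}(\tilde{\boldsymbol{x}})}\right| + |\rho_h|\left|\log\frac{h_{i,j}(\boldsymbol{x})}{h_{i,j}(\tilde{\boldsymbol{x}})}\right|.
\]

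Next, I would invoke the (non-)contraction hypotheses on $\boldsymbol{g}$ and $\boldsymbol{h}$ to bound each term on the right by $\alpha_g d(\boldsymbol{x},\tilde{\boldsymbol{x}})$ and $\alpha_h d(\boldsymbol{x},\tilde{\boldsymbol{x}})$ respectively, for every index $(i,j)$. Combining these two bounds gives
\[
\left|\log\frac{f_{i,j}(\boldsymbol{x})}{f_{i,j}(\tilde{\boldsymbol{x}})}\right| \leq (\alpha_g|\rho_g| + \alpha_h|\rho_h|)\,d(\boldsymbol{x},\tilde{\boldsymbol{x}}) = \alpha_f\,d(\boldsymbol{x},\tilde{\boldsymbol{x}}).
\]
Taking the maximum over $(i,j)$ on the left then yields $d(\boldsymbol{f}(\boldsymbol{x}),\boldsymbol{f}(\tilde{\boldsymbol{x}})) \leq \alpha_f d(\boldsymbol{x},\tilde{\boldsymbol{x}})$, which is a contraction when $\alpha_f < 1$ and a non-expansion when $\alpha_f = 1$.

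A minor technicality to handle cleanly is the possibility that some coordinates of $g_{i,j}$ or $h_{i,j}$ vanish, together with the $0/0=1$ convention used in the metric; I would note that if $d(\boldsymbol{x},\tilde{\boldsymbol{x}})=\infty$ the claim is vacuous, and otherwise the previous lemmas already guarantee that $g_{i,j}(\boldsymbol{x})$ and $g_{i,j}(\tilde{\boldsymbol{x}})$ are either simultaneously zero or simultaneously positive (and likewise for $h$), so the log ratios are well-defined. There is no substantial obstacle here: the whole argument is essentially the observation that $d$ is a supremum norm on log-coordinates, so exponentiation by $\rho$ scales distances by $|\rho|$ and the usual triangle-inequality machinery delivers the result.
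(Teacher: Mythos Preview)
Your proof is correct and follows essentially the same route as the paper: expand the log-ratio of $f_{i,j}$ as a linear combination of the log-ratios of $g_{i,j}$ and $h_{i,j}$, apply the triangle inequality to pull out $|\rho_g|$ and $|\rho_h|$, bound each piece by the contraction/non-expansion hypothesis, and take the max over $(i,j)$. The paper's version is slightly terser (it takes the max earlier and does not discuss the zero-coordinate technicality), but the argument is the same.
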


\begin{proof}
\begin{align}
d[&\boldsymbol{f}(\boldsymbol{x}),\boldsymbol{f}(\tilde{\boldsymbol{x}})] \notag\\
&= \max_{i,j}\log\left|
\frac{f_{i,j}(\boldsymbol{x})}{f_{i,j}(\tilde{\boldsymbol{x}})} \right| \\
&= \max_{i,j}\log\left|
\frac{g_{i,j}(\boldsymbol{x})^{\rho_g}h_{i,j}(\boldsymbol{x})^{\rho_h}}{g_{i,j}(\tilde{\boldsymbol{x}})^{\rho_g}h_{i,j}(\tilde{\boldsymbol{x}})^{\rho_h}} \right| \\
&\leq 
|\rho_g|\max_{i,j}\log\left| \frac{g_{i,j}(\boldsymbol{x})}{g_{i,j}(\tilde{\boldsymbol{x}})} \right| +
|\rho_h|\max_{i,j}\log\left| \frac{h_{i,j}(\boldsymbol{x})}{h_{i,j}(\tilde{\boldsymbol{x}})} \right|  \\
&\leq (|\rho_g|\alpha_g + |\rho_h|\alpha_h)d(\boldsymbol{x},\tilde{\boldsymbol{x}}).
\end{align}
\end{proof}

The following results immediately from lemmas \ref{lem:WeightedCombNonExpansion} and \ref{lem:PointwiseProduct}.

\begin{corollary}\label{cor:NonExpansion}
The operators $\boldsymbol{r}(\cdot)$ and $\boldsymbol{s}(\cdot)$ defined in (\ref{eq:HalfIt1a})-(\ref{eq:HalfIt2b}) are non-expansions.
\end{corollary}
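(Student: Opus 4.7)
The plan is to express each coordinate function of $\boldsymbol{r}$ and $\boldsymbol{s}$ as a finite product of powers of weighted combinations of the input (with possibly a positive multiplicative constant), and then chain Lemma~\ref{lem:WeightedCombNonExpansion} with Lemma~\ref{lem:PointwiseProduct}. First I would observe that the metric $d(\boldsymbol{x},\tilde{\boldsymbol{x}})=\max_{i,j}|\log(x_{i,j}/\tilde{x}_{i,j})|$ is invariant under multiplication of both arguments by a common positive scalar, so the factor $e^\kappa$ appearing in (\ref{eq:HalfIt1a}) and (\ref{eq:HalfIt2a}) plays no role in any of the bounds. Coordinate by coordinate, I would then invoke Lemma~\ref{lem:WeightedCombNonExpansion} to conclude that every inner weighted sum appearing in (\ref{eq:HalfIt1a})--(\ref{eq:HalfIt2c}) is non-expansive with coefficient $\alpha_g=1$, and invoke Lemma~\ref{lem:PointwiseProduct} to combine the resulting powers.

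The only real work is to check that in each case the sum of absolute values of the exponents does not exceed one. For $r_{i,j}$ with $j>0$ (and symmetrically $s_{i,j}$) the exponents are $-(1-\gamma)$ and $-\gamma$; since $\gamma\in[0,1)$, we get $(1-\gamma)+\gamma=1$, yielding a non-expansion. For $r_{0,j}$ in (\ref{eq:HalfIt1c}) (and $s_{i,0}$ in (\ref{eq:HalfIt2b})) a single weighted sum is raised to the power $-1$, giving $|\rho|=1$, again a non-expansion. For $r_{i,0}$ in (\ref{eq:HalfIt1b}) (and $s_{0,j}$ in (\ref{eq:HalfIt2c})) the map is a single coordinate raised to the power $\tfrac{1}{\alpha}-1$ (respectively $\tfrac{1}{\beta}-1$); the standing assumption $\alpha,\beta>0.5$ forces $|\rho|<1$, so this coordinate is in fact a strict contraction, and certainly a non-expansion.

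I do not anticipate any genuine obstacle. The coordinate-wise bounds combine immediately through the max in the definition of $d$, because each $r_{i,j}(\boldsymbol{y})$ (and each $s_{i,j}(\boldsymbol{x})$) has been shown to satisfy $|\log(r_{i,j}(\boldsymbol{y})/r_{i,j}(\tilde{\boldsymbol{y}}))|\leq d(\boldsymbol{y},\tilde{\boldsymbol{y}})$, with the analogous inequality for $\boldsymbol{s}$. The only conceptual checks are the scale-invariance of $d$ (which absorbs $e^\kappa$) and the short case analysis verifying the exponent condition of Lemma~\ref{lem:PointwiseProduct}; once both are in place, the corollary follows without further calculation.
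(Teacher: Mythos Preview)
Your proposal is correct and is exactly the approach the paper intends: the paper's proof is a single sentence stating that the result follows immediately from Lemmas~\ref{lem:WeightedCombNonExpansion} and~\ref{lem:PointwiseProduct}, and you have simply filled in the case analysis that makes this immediate. The observation that $e^\kappa$ cancels in the log-ratio and the verification that the absolute exponents sum to at most one in each coordinate are precisely the checks required.
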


\begin{lemma}\label{lem:Case23Contraction}
If $d(\boldsymbol{x},\tilde{\boldsymbol{x}})\leq \log \bar{L}<\infty$, then the operator $\boldsymbol{s}(\cdot)$ is a contraction in cases 2 and 3 with a contraction factor dependent on $\bar{L}$ and $w_{i,j}$.
\end{lemma}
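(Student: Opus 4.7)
The plan is to exploit the special structure of cases 2 and 3 to upgrade the non-expansion of corollary \ref{cor:NonExpansion} to a strict contraction. The crucial observation is that both cases have $\alpha=1$, so from (\ref{eq:HalfIt1b}) we have $x_{i,0}=(y_{i,0})^{1/\alpha-1}=1$ identically. Consequently, for any two inputs $\boldsymbol{x},\tilde{\boldsymbol{x}}$ to $\boldsymbol{s}(\cdot)$ that are produced by a prior application of $\boldsymbol{r}$, the entries satisfy $x_{i,0}=\tilde{x}_{i,0}=1$. Writing $\tilde{x}_{i,j}/x_{i,j}=e^{\xi_{i,j}}$ with $|\xi_{i,j}|\leq\log\bar{L}$, this pins $\xi_{i,0}=0$ identically. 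Combined with the assumption $w_{i,0}>0$ (present in both cases), the weighted sums $A_i(\boldsymbol{x})=w_{i,0}x_{i,0}+\sum_{j'>0}w_{i,j'}x_{i,j'}$ and $B_{i,j}(\boldsymbol{x})=w_{i,0}x_{i,0}+\sum_{j'\neq j,\,j'>0}w_{i,j'}x_{i,j'}$ that appear in (\ref{eq:HalfIt2a})--(\ref{eq:HalfIt2b}) always contain a strictly positive anchor whose ratio between $\boldsymbol{x}$ and $\tilde{\boldsymbol{x}}$ is exactly $1$.

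From here I would bound $\log(A_i(\tilde{\boldsymbol{x}})/A_i(\boldsymbol{x}))$ by observing that it equals the logarithm of a weighted average of $\{e^{\xi_{i,j'}}\}_{j'\geq 0}$, with weights $w_{i,j'}x_{i,j'}/A_i(\boldsymbol{x})$. Since the weight on $j'=0$ is $\rho_i:=w_{i,0}/A_i(\boldsymbol{x})\in(0,1)$ and $\xi_{i,0}=0$, a standard log--convex bound yields
\begin{equation*}
\left|\log\frac{A_i(\tilde{\boldsymbol{x}})}{A_i(\boldsymbol{x})}\right|\leq (1-\rho_i)\log\bar{L},
\end{equation*}
with an identical argument for $B_{i,j}$ giving an anchor ratio $\bar{\rho}_{i,j}$. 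Applying lemma \ref{lem:PointwiseProduct} to the product of powered weighted sums in (\ref{eq:HalfIt2a}) gives a per-component contraction factor of the form $(1-\gamma)(1-\rho_i)+\gamma(1-\bar{\rho}_{i,j})$; the purely multiplicative $s_{i,0}$ gives $(1-\rho_i)$; and for $s_{0,j}$ (relevant in case 3) the map $x_{0,j}\mapsto x_{0,j}^{1/\beta-1}$ contributes the factor $|1/\beta-1|<1$ directly from the assumption $\beta>1/2$. Taking the maximum over components then yields an overall contraction factor strictly less than $1$.

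The main obstacle is securing a uniform positive lower bound on $\rho_i$ (and $\bar{\rho}_{i,j}$). These ratios deteriorate whenever $\sum_{j'>0}w_{i,j'}x_{i,j'}$ grows large, so I would need to produce a data-dependent upper bound on these sums over the region containing both $\boldsymbol{x}$ and $\tilde{\boldsymbol{x}}$. The fact that $x_{i,0}\equiv 1$ fixes an absolute scale for $\boldsymbol{x}$, and the defining formulae (\ref{eq:HalfIt1a})--(\ref{eq:HalfIt1c}) bound each $x_{i,j}$ in terms of $\boldsymbol{y}$ and the weights; combining these with $d(\boldsymbol{x},\tilde{\boldsymbol{x}})\leq\log\bar{L}$ should yield $x_{i,j}\leq C(\bar{L},w)$ and hence $\rho_i\geq\rho_{\min}(\bar{L},w)>0$. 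Substituting these lower bounds into the per-component estimates above gives a global contraction coefficient depending only on $\bar{L}$ and $w_{i,j}$, establishing the claim.
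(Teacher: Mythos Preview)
Your core observation is exactly the one the paper exploits: in cases 2 and 3 we have $\alpha=1$, hence $x_{i,0}\equiv 1$, and since $w_{i,0}>0$ every denominator in (\ref{eq:HalfIt2a})--(\ref{eq:HalfIt2b}) contains a fixed positive anchor. The paper simply packages this as ``the map $c_1/(c_2+\sum_{j'}w_{i,j'}x_{i,j'})$ with $c_2>0$ is a contraction'' (citing and lightly modifying Lemma 2 of \cite{WilLau12}), then combines the factors via lemma~\ref{lem:PointwiseProduct} and handles $s_{0,j}$ via $|1/\beta-1|<1$, just as you do.

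There are, however, two concrete errors in your execution. First, the inequality
\[
\left|\log\frac{A_i(\tilde{\boldsymbol{x}})}{A_i(\boldsymbol{x})}\right|\leq(1-\rho_i)\log\bar{L}
\]
is false. Take $\rho_i=\tfrac12$, $\bar{L}=2$, and $\xi_{i,j'}=\log 2$ for every $j'>0$: then $A_i(\tilde{\boldsymbol{x}})/A_i(\boldsymbol{x})=\tfrac12+\tfrac12\cdot 2=\tfrac32$, and $\log\tfrac32\approx 0.405>\tfrac12\log 2\approx 0.347$. Concavity of $\log$ points the wrong way. The valid bound is $|\log(A_i(\tilde{\boldsymbol{x}})/A_i(\boldsymbol{x}))|\leq\log[\rho_i+(1-\rho_i)\bar{L}]$, which \emph{is} strictly less than $\log\bar{L}$ when $\rho_i>0$, so the argument is repairable, but the contraction factor is then genuinely $\bar{L}$-dependent rather than simply $1-\rho_i$.

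Second, your route to a uniform lower bound on $\rho_i$ does not work. The hypothesis $d(\boldsymbol{x},\tilde{\boldsymbol{x}})\leq\log\bar{L}$ is purely relative and says nothing about the size of $A_i(\boldsymbol{x})$; and inverting (\ref{eq:HalfIt1a})--(\ref{eq:HalfIt1c}) expresses $x_{i,j}$ in terms of an unconstrained $\boldsymbol{y}$, so no absolute bound $x_{i,j}\leq C(\bar{L},w)$ follows. What actually pins things down (and what the paper uses implicitly, cf.\ the sentence following the lemma) is that the contraction is applied with one argument equal to the fixed point $\boldsymbol{x}^*$, which is determined by the weights alone. Then $A_i(\boldsymbol{x}^*)$ is a fixed function of $w$, and for the iterate one has $x_{i,j}\leq\bar{L}\,x_{i,j}^*$, giving $\rho_i\geq w_{i,0}/[\,w_{i,0}+\bar{L}\sum_{j'>0}w_{i,j'}x_{i,j'}^*\,]$, which depends only on $\bar{L}$ and $w_{i,j}$ as the lemma claims.
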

\begin{proof}
Lemma 2 in \cite{WilLau12} shows that the update:
\begin{equation}
y_{i,j} = \frac{w_{i,j}}{1+\sum_{j'\neq j}w_{i,j'}x_{i,j'}}
\end{equation}
is a contraction. The proof of lemma 2 in \cite{WilLau12} may be trivially modified to show that the updates:
\begin{gather}
\frac{c_1}{c_2+\sum_{j'\neq j}w_{i,j'}x_{i,j'}}, \\
\frac{c_1}{c_2+\sum_{j'}w_{i,j'}x_{i,j'}}
\end{gather}
are also contractions for any $c_1>0$, $c_2>0$. In cases 2 and 3, $\alpha=1$, so $x_{i,0}=1$. Therefore, these results combined with lemma \ref{lem:PointwiseProduct}, show that (\ref{eq:HalfIt2a}) and (\ref{eq:HalfIt2b}) are contractions. Since $|\frac{1}{\beta}-1|<1$, (\ref{eq:HalfIt2c}) is also a contraction.
\end{proof}

This is adequate to prove convergence in cases 2 and 3: $\boldsymbol{s}(\cdot)$ is a contraction, and $\boldsymbol{r}(\cdot)$ is a non-expansion, so the composite operator is a contraction. Combined with lemma \ref{lem:SSKKTConditions}, this shows that the iteration converges to the unique solution of the KKT conditions. The fact that the contraction factor in lemma \ref{lem:Case23Contraction} depends on an upper bound on the distance $\bar{L}$ is not of concern; since the combined operation is a contraction, the contraction factor for the upper bound $\bar{L}$ that we begin with will apply throughout.

The final step is to prove convergence in case 1. For this, we prove that $n$ successive iterations of applying operators $\boldsymbol{r}(\cdot)$ and $\boldsymbol{s}(\cdot)$ collectively form a contraction. The proof is based on \cite{Von13}, but adapts it to address $\gamma<1$, and to admit cases where some edges have $w_{i,j}=0$.

As discussed in lemma \ref{lem:SSKKTConditions}, the solution in case 1 is not changed by adding any constant $c$ to $\bar{\lambda}_i\;\forall\;i$ and subtracting it from $\bar{\mu}_j\;\forall\;j$; this is clear from (\ref{eq:IterationUpdateQij}), and was termed \textit{message gauge invariance} in \cite[remark 30]{Von13}. Incorporating any such constant simply offsets all future iterations by the value, having no impact on the $q_{i,j}$ iterates produced. Thus, for the purpose of proving convergence, when analysing $\boldsymbol{r}(\cdot)$, we scale $y_{i,j}^*$ such that $\min_{i,j}[y_{i,j}^{(k-1)}/y_{i,j}^*]=1$, and we denote $\max_{i,j}[y_{i,j}^{(k-1)}/y_{i,j}^*]=L$ for some $L$ with $1 < L < \infty$ (if $L=1$ then convergence has occurred, and $L=\infty$ will only occur if $y_{i,j}^*=0$, which contradicts lemma \ref{lem:Interior}). The result is similar to that obtained by changing the distance to Hilbert's projective metric (e.g., \cite{KohPra82}). We emphasise that this rescaling does not need to be performed in the online calculation; rather we are exploiting an equivalence to aid in proving convergence. 

Lemmas \ref{lem:BoundR} and \ref{lem:BoundS} establish an induction which shows that after $n$ steps, we are guaranteed to have reduced $\max_{i,j}[y_{i,j}^{(k+n)}/y_{i,j}^*]$. The induction commences with a single edge with $y_{i,j}^{(k-1)}/y_{i,j}^*=1$, setting $\mathcal{T}^{(k-1)}=\{(i,j)\}$, and $v^{(k-1)}=1$. As the induction proceeds, the set $\mathcal{T}^{(k)}$ (or, alternately, $\mathcal{S}^{(k)}$) represents the edges for which improvement in the bound $L$ is guaranteed, and $v^{(k)}<L$ (or, alternately, $u^{(k)}$) represents the amount of improvement that is guaranteed. The induction proceeds by alternately visiting the left-hand vertices and right-hand vertices (e.g., in figure \ref{fig:SSBPDA}), at each stage adding to the set $\mathcal{S}^{(k)}$ edges $(i,j)$ for which $w_{i,j}>0$, and an edge that is incident on the vertex $j$ is in $\mathcal{T}^{(k)}$ (or, alternatively, adding to $\mathcal{T}^{(k)}$ edges that could be traversed by starting from a vertex $i$ represented by an edge in $\mathcal{S}^{(k)}$).

\begin{lemma}\label{lem:BoundR}
At iteration $(k-1)$, suppose that $1\leq y_{i,j}^{(k-1)}/y_{i,j}^* \leq L<\infty\;\forall\;i,j$, and $y_{i,j}^{(k-1)}/y_{i,j}^*\leq v^{(k-1)} < L\;\forall\;(i,j)\in\mathcal{T}^{(k-1)}$. Then $1/L\leq x_{i,j}^{(k)}/x_{i,j}^*\leq 1\;\forall\;(i,j)$, and $1/u^{(k)}\leq x_{i,j}^{(k)}/x_{i,j}^*\leq 1\;\forall\;(i,j)\in\mathcal{S}^{(k)}$, where 
\begin{multline}
\mathcal{S}^{(k)}= \\
\big\{(i,j)\in\{1,\dots,n\}^2\big|\exists i'\mbox{ s.t.\ }(i',j)\in\mathcal{T}^{(k-1)},w_{i,j}>0\big\},
\end{multline}
and
\begin{equation}\label{eq:IterationU}
u^{(k)} =  \max_j [\theta_j^{(k-1)} v^{(k-1)} + (1-\theta_j^{(k-1)})L]^{1-\gamma} L^\gamma < L,
\end{equation}
where
\begin{equation}
\theta_j^{(k-1)} = \sum_{i|(i,j)\in\mathcal{T}^{(k-1)}} w_{i,j}y_{i,j}^* \Big/ \sum_{i} w_{i,j}y_{i,j}^*.
\end{equation}
\end{lemma}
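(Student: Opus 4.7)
\emph{Proof plan.} The argument is a direct computation from the half-iteration~(\ref{eq:HalfIt1a}). In case~1, $w_{0,j}=0$, so applying the formula to both the iterate and the fixed point gives
\[
\frac{x_{i,j}^{(k)}}{x_{i,j}^{*}}
= \left(\frac{\sum_{i'} w_{i',j}\,y_{i',j}^{*}}{\sum_{i'} w_{i',j}\,y_{i',j}^{(k-1)}}\right)^{1-\gamma}
\left(\frac{\sum_{i'\neq i} w_{i',j}\,y_{i',j}^{*}}{\sum_{i'\neq i} w_{i',j}\,y_{i',j}^{(k-1)}}\right)^{\gamma}.
\]
The plan is to bound each factor by the componentwise inequalities on $y_{i,j}^{(k-1)}/y_{i,j}^{*}$, exploiting that $1-\gamma$ and $\gamma$ are nonnegative and sum to one.

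The uniform bounds follow immediately: the hypothesis $y_{i',j}^{(k-1)}\geq y_{i',j}^{*}$ makes each ratio inside the parentheses at most one, so $x_{i,j}^{(k)}/x_{i,j}^{*}\leq 1$, and $y_{i',j}^{(k-1)}\leq L\,y_{i',j}^{*}$ makes each at least $1/L$, so combining via the exponents gives $x_{i,j}^{(k)}/x_{i,j}^{*}\geq L^{-(1-\gamma)-\gamma}=1/L$. For the tightened bound on $(i,j)\in\mathcal{S}^{(k)}$, I would refine only the first factor, splitting its denominator according to $\mathcal{T}^{(k-1)}$-membership:
\[
\sum_{i'} w_{i',j}\,y_{i',j}^{(k-1)} \leq v^{(k-1)}\sum_{i':(i',j)\in\mathcal{T}^{(k-1)}} w_{i',j}\,y_{i',j}^{*} + L\sum_{i':(i',j)\notin\mathcal{T}^{(k-1)}} w_{i',j}\,y_{i',j}^{*}.
\]
Normalising by $\sum_{i'} w_{i',j}\,y_{i',j}^{*}$ produces the convex combination $\theta_j^{(k-1)} v^{(k-1)} + (1-\theta_j^{(k-1)}) L$. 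For the deleted sum $\sum_{i'\neq i}$ I would use the crude bound $L$, since no refinement is needed there. Substituting yields $x_{i,j}^{(k)}/x_{i,j}^{*}\geq 1/u^{(k)}$ with $u^{(k)}$ as in~(\ref{eq:IterationU}).

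The main subtlety is the strict inequality $u^{(k)}<L$. Here the definition of $\mathcal{S}^{(k)}$ is what does the work: each $(i,j)\in\mathcal{S}^{(k)}$ requires some $i'$ with $w_{i',j}>0$ and $(i',j)\in\mathcal{T}^{(k-1)}$, and Lemma~\ref{lem:Interior} ensures $y_{i',j}^{*}>0$ whenever $w_{i',j}>0$, so $\theta_j^{(k-1)}>0$ for every column $j$ that participates in $\mathcal{S}^{(k)}$. Combined with $v^{(k-1)}<L$, the convex combination is strictly below $L$, and since $\gamma<1$ this forces $[\cdot]^{1-\gamma}L^{\gamma}<L$. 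The maximisation in~(\ref{eq:IterationU}) must be read as being over those $j$ for which some $(i,j)\in\mathcal{S}^{(k)}$; being explicit about this restriction is the main piece of care needed in writing up the argument, but beyond it the proof is routine algebraic manipulation of the ratio above.
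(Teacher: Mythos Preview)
Your proposal is correct and follows essentially the same argument as the paper: write the ratio $x_{i,j}^{(k)}/x_{i,j}^{*}$ via~(\ref{eq:HalfIt1a}), bound the full-sum factor by splitting over $\mathcal{T}^{(k-1)}$ to get the convex combination $\theta_j^{(k-1)} v^{(k-1)} + (1-\theta_j^{(k-1)})L$, and bound the deleted-sum factor crudely by $L$. Your explicit treatment of the strict inequality $u^{(k)}<L$ (via $\theta_j^{(k-1)}>0$ from Lemma~\ref{lem:Interior}) and your remark that the $\max_j$ must range only over columns represented in $\mathcal{S}^{(k)}$ are points the paper leaves implicit, so your write-up is, if anything, slightly more careful.
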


\begin{proof}
Consider the sum in the first factor in (\ref{eq:HalfIt1a}) (remembering that $w_{0,j}=0$):
\begin{equation}
\sigma_j^* = \sum_{i} w_{i,j}y_{i,j}^*, \quad \sigma_j^{(k)} = \sum_{i} w_{i,j}y_{i,j}^{(k-1)},
\end{equation}
so that $\sigma_j^{(k)}/\sigma_j^*\geq 1$, and
\begin{align*}
\frac{\sigma_j^{(k)}}{\sigma_j^*} &\leq \frac{v^{(k-1)}\sum_{i|(i,j)\in\mathcal{T}^{(k-1)}}w_{i,j}y_{i,j}^* + L\sum_{i|(i,j)\notin\mathcal{T}^{(k-1)}} w_{i,j}y_{i,j}^*}{\sum_{i} w_{i,j}y_{i,j}^*} \\
&= \theta_j^{(k-1)} v^{(k-1)} + (1-\theta_j^{(k-1)}) L.
\end{align*}
While a similar analysis could be applied to the second factor of the expression for $r(\boldsymbol{y})$ (as in \cite{Von13}), to prove convergence in the case with $\gamma<1$, it is adequate to simply bound it by:
\begin{equation}
1 \leq \frac{\sum_{i'\neq i} w_{i',j}y_{i',j}^{(k-1)}}{\sum_{i'\neq i} w_{i',j}y_{i',j}^*} \leq L.
\end{equation}
Substituting these bounds into (\ref{eq:HalfIt1a}) gives the desired result.
\end{proof}

\begin{lemma}\label{lem:BoundS}
At iteration $k$, suppose that $0<\frac{1}{L}\leq x_{i,j}^{(k)}/x_{i,j}^* \leq 1\;\forall\;i,j$, and $1/L < 1/u^{(k)} \leq x_{i,j}^{(k)}/x_{i,j}^*\;\forall\;(i,j)\in\mathcal{S}^{(k)}$. Then $1\leq y_{i,j}^{(k)}/y_{i,j}^*\leq L\;\forall\;(i,j)$ and $1\leq y_{i,j}^{(k)}/y_{i,j}^*\leq v^{(k)}\;\forall\;(i,j)\in\mathcal{T}^{(k)}$, where 
\begin{equation}
\mathcal{T}^{(k)}=\big\{(i,j)\in\{1,\dots,n\}^2\big|\exists j'\mbox{ s.t.\ }(i,j')\in\mathcal{S}^{(k)},w_{i,j}>0\big\}
\end{equation}
and
\begin{equation}\label{eq:IterationV}
\frac{1}{v^{(k)}} =  \min_i \left[\frac{\omega_i^{(k)}}{u^{(k)}} + \frac{(1-\omega_i^{(k)})}{L}\right]^{1-\gamma} \frac{1}{L^\gamma} > \frac{1}{L},
\end{equation}
where
\begin{equation}
\omega_i^{(k)} = \sum_{j|(i,j)\in\mathcal{S}^{(k)}} w_{i,j}x_{i,j}^* \Big/ \sum_{j} w_{i,j}x_{i,j}^*.
\end{equation}
\end{lemma}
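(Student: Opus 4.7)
The plan is to mirror the proof of Lemma~\ref{lem:BoundR}, applied to the half-iteration (\ref{eq:HalfIt2a}). In case 1 we have $w_{i,0}=0$, so (\ref{eq:HalfIt2a}) specialises to
\[
y_{i,j}^{(k)} = \left(\tau_i^{(k)}\right)^{-(1-\gamma)} \left(\sigma_{i,-j}^{(k)}\right)^{-\gamma} e^{\kappa},
\]
where $\tau_i^{(k)}=\sum_{j'} w_{i,j'} x_{i,j'}^{(k)}$ and $\sigma_{i,-j}^{(k)}=\sum_{j'\neq j} w_{i,j'} x_{i,j'}^{(k)}$, and analogously for the starred quantities. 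The constant $e^{\kappa}$ cancels when forming the ratio with $y_{i,j}^{*}$ (which is strictly positive for edges with $w_{i,j}>0$ by Lemma~\ref{lem:Interior}), giving
\[
\frac{y_{i,j}^{(k)}}{y_{i,j}^{*}} = \left(\frac{\tau_i^{*}}{\tau_i^{(k)}}\right)^{1-\gamma}\!\left(\frac{\sigma_{i,-j}^{*}}{\sigma_{i,-j}^{(k)}}\right)^{\gamma}.
\]

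First I would establish the global bounds. Since the hypothesis gives $1/L \leq x_{i,j'}^{(k)}/x_{i,j'}^{*}\leq 1$ for every $(i,j')$, monotonicity of non-negatively weighted sums yields $1/L \leq \tau_i^{(k)}/\tau_i^{*} \leq 1$ and $1/L \leq \sigma_{i,-j}^{(k)}/\sigma_{i,-j}^{*} \leq 1$. Substituting into the ratio, both factors lie in $[1,L^{\rho}]$ with $\rho\in\{1-\gamma,\gamma\}$, and because $(1-\gamma)+\gamma=1$ they combine multiplicatively to give $1 \leq y_{i,j}^{(k)}/y_{i,j}^{*} \leq L^{1-\gamma}\cdot L^{\gamma} = L$, which proves the first of the two claims.

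Next I would tighten the first factor on the set $\mathcal{T}^{(k)}$. For $(i,j)\in\mathcal{T}^{(k)}$, by definition the vertex $i$ has at least one incident edge in $\mathcal{S}^{(k)}$, so $\omega_i^{(k)}>0$. Splitting $\tau_i^{(k)}$ into the contributions from $\{j'\,:\,(i,j')\in\mathcal{S}^{(k)}\}$ (using the sharper lower bound $1/u^{(k)}$) and its complement (using $1/L$) gives
\[
\frac{\tau_i^{(k)}}{\tau_i^{*}} \geq \frac{\omega_i^{(k)}}{u^{(k)}} + \frac{1-\omega_i^{(k)}}{L},
\]
which, because $u^{(k)}<L$ and $\omega_i^{(k)}>0$, is strictly greater than $1/L$. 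Combining with the loose bound $L^{\gamma}$ on the second factor and taking the worst case over $i$ yields $y_{i,j}^{(k)}/y_{i,j}^{*} \leq v^{(k)}$ with $v^{(k)}<L$, matching the definition (\ref{eq:IterationV}).

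The only subtlety is ensuring that the dichotomy in the lemma statement is exact, i.e.\ that the improvement argument really does apply to every edge of $\mathcal{T}^{(k)}$ and only to those. The improvement comes from the first factor (involving $\tau_i$, a full sum over $j'$), not the second (involving $\sigma_{i,-j}$, which omits $j'=j$), so what matters is merely that vertex $i$ has \emph{some} incident edge in $\mathcal{S}^{(k)}$; the particular edge $(i,j)$ under consideration need not itself lie in $\mathcal{S}^{(k)}$. This is exactly the condition encoded in the definition of $\mathcal{T}^{(k)}$, together with $w_{i,j}>0$ (which makes the ratio $y_{i,j}^{(k)}/y_{i,j}^{*}$ well defined). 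Conversely, for an edge outside $\mathcal{T}^{(k)}$, one has $\omega_i^{(k)}=0$ and the improved bound degenerates to the trivial $L$. Apart from this bookkeeping, the argument is a direct symmetric analogue of Lemma~\ref{lem:BoundR}.
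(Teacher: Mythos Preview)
Your proposal is correct and follows essentially the same approach as the paper: define the full sum $\tau_i^{(k)}$ and the leave-one-out sum, form the ratio $y_{i,j}^{(k)}/y_{i,j}^*$, bound the first factor by splitting $\tau_i^{(k)}$ into the $\mathcal{S}^{(k)}$ and non-$\mathcal{S}^{(k)}$ contributions, and bound the second factor loosely by $L^{\gamma}$. Your discussion of the ``subtlety'' (that the improvement depends only on whether vertex $i$ has some incident edge in $\mathcal{S}^{(k)}$, not on whether $(i,j)$ itself lies there) is a welcome clarification that the paper leaves implicit.
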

\begin{proof}
Following similar steps to the proof of lemma \ref{lem:BoundR}, we define:
\begin{equation}
\tau_i^* = \sum_{j} w_{i,j}x_{i,j}^*, \quad \tau_i^{(k)} = \sum_{j} w_{i,j}x_{i,j}^{(k)},
\end{equation}
so that $\tau_i^{(k)}/\tau_i^*\leq 1$, and
\begin{align*}
\frac{\tau_i^{(k)}}{\tau_i^*} &\geq \frac{\frac{1}{u^{(k)}}\sum_{j|(i,j)\in\mathcal{S}^{(k)}}w_{i,j}x_{i,j}^* + \frac{1}{L}\sum_{j|(i,j)\notin\mathcal{S}^{(k)}} w_{i,j}y_{i,j}^*}{\sum_{j} w_{i,j}y_{i,j}^*} \\
&= \omega_i^{(k)} \frac{1}{u^{(k-1)}} + (1-\omega_i^{(k)}) \frac{1}{L}.
\end{align*}
The second factor in (\ref{eq:HalfIt2a}) can be bounded by the expression
\begin{equation}
\frac{1}{L} \leq \frac{\sum_{j'\neq j} w_{i,j'}x_{i,j'}^{(k)}}{\sum_{j'\neq j} w_{i,j'}x_{i,j'}^*} \leq 1.
\end{equation}
Substituting these bounds into (\ref{eq:HalfIt2a}) gives the desired result.
\end{proof}

We employ lemmas \ref{lem:BoundR} and \ref{lem:BoundS} by setting $v^{(k-1)}=1$ (scaling $y_{i,j}^*$ accordingly), and $\mathcal{T}^{(k-1)}$ to contain the edge(s) with $y_{i,j}^{(k-1)}/y_{i,j}^*=1$. Iteratively applying the lemmas for $n$ steps, we find that $1\leq y_{i,j}^{(k+n)}/y_{i,j}^* \leq v^{(k+n)}<L\;\forall\;(i,j)$ since we will have $\mathcal{T}^{(k+n)}$ containing all edges (since the graph is connected). To prove linear convergence, we first need to show that the distance is reduced by at least a constant, or that
\begin{equation}\label{eq:AlphaLessThan1}
\alpha(L) \triangleq \frac{\log v^{(k+n)}(L)}{\log L} < 1.
\end{equation}
where $v^{(k+n)}(L)$ depends on $L$ through the recursion in (\ref{eq:IterationU}) and (\ref{eq:IterationV}). The inequality in (\ref{eq:AlphaLessThan1}) can be established simply by commencing from $v^{(k-1)}=1$, and observing that if $v^{(k+l-1)}<L$ then $u^{(k+l)}<L$ and $v^{(k+l)}<L$ for any $l>0$. This is not adequate to prove convergence; we further need to show that the constant $\alpha(L)$ is non-decreasing in $L$. This ensures that the initial contraction rate (for the first $n$ iterations) applies, at least, in all subsequent $n$-step iteration blocks.

\begin{lemma}
$\alpha(L)$ is continuous and non-decreasing in $L$, i.e., its left and right derivatives everywhere satisfy
\[
\partial_- \alpha(L) \geq 0, \quad \partial_+ \alpha(L) \geq 0.
\]
\end{lemma}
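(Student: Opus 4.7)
The plan is to change variables by setting $\ell = \log L$ and $\phi(\ell) = \log v^{(k+n)}(e^\ell)$, so that $\alpha(L) = \phi(\ell)/\ell$ and the desired inequality $\partial_\pm \alpha \geq 0$ becomes the statement that $\phi(\ell)/\ell$ is non-decreasing on $\ell > 0$. Since $\phi(0) = 0$ (when $L = 1$, starting from $v^{(k-1)} = 1$ and substituting into (\ref{eq:IterationU})-(\ref{eq:IterationV}) gives $u^{(k+l)} = v^{(k+l)} = 1$ for every $l$ by an immediate induction), this non-decreasing quotient property follows from convexity of $\phi$ on $[0, \infty)$ via the standard chord-through-origin argument: for $0 < \ell_1 < \ell_2$, convexity and $\phi(0)=0$ give $\phi(\ell_1) \leq (\ell_1/\ell_2)\phi(\ell_2)$. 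Continuity of $\alpha$ (and of $\phi$) on $L > 1$ is immediate because the iteration is built from compositions of log-sum-exp, pointwise max, pointwise min, and $\log/\exp$; continuity at $L = 1$ is handled by an L'Hôpital argument giving $\alpha(1^+) = \phi'(0^+)$, finite because of the smooth structure of the recursion near the origin.

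The substance of the proof is therefore convexity of $\phi$. Introducing $\mu^{(k+l)}(\ell) = \log u^{(k+l)}(e^\ell)$ and $\nu^{(k+l)}(\ell) = \log v^{(k+l)}(e^\ell)$, the recursion splits into a forward step and a backward step, the latter rewritten using the identity $-\log(\omega e^{-\mu} + (1-\omega) e^{-\ell}) = \mu - g_\omega(\mu - \ell)$ with $g_\omega(t) = \log(\omega + (1-\omega) e^t)$ convex and $g_\omega' \in [0,1]$:
\begin{align*}
\mu^{(k+l)} &= \max_j \bigl\{ (1-\gamma) \log(\theta_j e^{\nu^{(k+l-1)}} + (1-\theta_j) e^\ell) + \gamma \ell \bigr\}, \\
\nu^{(k+l)} &= \gamma \ell + (1-\gamma) \max_i \bigl\{ \mu^{(k+l)} - g_{\omega_i}(\mu^{(k+l)} - \ell) \bigr\}.
\end{align*}
The forward step preserves convexity directly: log-sum-exp is jointly convex and coordinatewise non-decreasing, hence its composition with the convex map $\ell \mapsto (\nu^{(k+l-1)}(\ell), \ell)$ is convex, and the pointwise maximum over $j$ preserves convexity.

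The backward step is the main obstacle, because although $g_\omega$ is convex, it enters with a minus sign. Differentiating twice at an active maximizing index $i^*$ yields
\[
(\nu^{(k+l)})''(\ell) = (1-\gamma) \bigl[ (1 - g'_{\omega_{i^*}}(t))\,(\mu^{(k+l)})''(\ell) - g''_{\omega_{i^*}}(t)\,((\mu^{(k+l)})'(\ell) - 1)^2 \bigr],
\]
with $t = \mu^{(k+l)}(\ell) - \ell$. Since $g'' = g'(1-g')$, convexity of $\nu^{(k+l)}$ amounts to the quantitative bound $(\mu^{(k+l)})'' \geq g'_{\omega_{i^*}}(t)\,((\mu^{(k+l)})' - 1)^2$. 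The plan is to strengthen the inductive hypothesis to carry both convexity and a derivative bound $(\nu^{(k+l-1)})' \in [\gamma, 1]$, together with a refined quantitative convexity invariant of the form $(\mu^{(k+l)})'' \geq c\,((\mu^{(k+l)})' - 1)^2$ for a constant $c \geq 1$. Using the explicit forward-step identities $(\mu^{(k+l)})' - 1 = (1-\gamma) p \,((\nu^{(k+l-1)})' - 1)$ and $(\mu^{(k+l)})'' = (1-\gamma) p \bigl[(1-p)((\nu^{(k+l-1)})' - 1)^2 + (\nu^{(k+l-1)})''\bigr]$ at the active $j$, one shows that this invariant is strengthened by the forward step, while the backward step consumes only $g'_\omega \leq 1$ of it. Kinks of the iterates arising from the pointwise maxima are handled by the observation that a pointwise max of convex functions is convex, so both left and right derivatives exist and have the correct signs there.

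The main technical difficulty is thus identifying a refined convexity invariant on $(\mu^{(k+l)}, \nu^{(k+l)})$ jointly strong enough to survive the concave perturbation in the backward step, yet actually preserved by both half-steps; routine bookkeeping through $n$ iterations then delivers convexity of $\phi = \nu^{(k+n)}$ and hence the lemma.
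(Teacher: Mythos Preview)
Your reduction is sound: with $\ell=\log L$ and $\phi(\ell)=\log v^{(k+n)}(e^\ell)$, convexity of $\phi$ together with $\phi(0)=0$ would indeed give $\phi/\ell$ non-decreasing. The gap is that $\phi$ is not convex in general --- the backward half-step can genuinely destroy it, and the ``refined convexity invariant'' you hope to carry does not exist. Take $\gamma=0$, a single $j$ with $\theta_j=0.9$, a single $i$ with $\omega_i=0.1$, and start from $\nu^{(k-1)}\equiv 0$. The forward step gives $\mu(\ell)=\log(0.9+0.1e^\ell)$, so $\mu'(0)=0.1$ and $\mu''(0)=0.09$. Your own second-derivative formula for the backward step then yields
\[
\nu''(0)=(1-g'_\omega(0))\,\mu''(0)-g''_\omega(0)\,(\mu'(0)-1)^2 = 0.1\cdot 0.09 - 0.09\cdot 0.81 = -0.0639 < 0.
\]
Your strengthened hypothesis $\mu''\ge c(\mu'-1)^2$ with $c\ge 1$ fails on the same data: here $\mu''/(\mu'-1)^2=1/9$, whereas the backward step would require $c\ge g'_\omega(0)=0.9$. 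So the difficulty you flag as ``the main technical difficulty'' is not bookkeeping to be filled in later --- no invariant of this shape is available, because you are asking for strictly more than holds.

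The paper does not go through convexity. It carries the \emph{exact} first-order invariant needed, namely $v'(L)\ge v(L)\log v(L)/(L\log L)$ --- in your coordinates $\nu'(\ell)\ge\nu(\ell)/\ell$, i.e.\ precisely ``$\nu/\ell$ non-decreasing'' --- and argues that this single inequality is preserved by each of the three elementary operations from which the recursion is built: (i) $v=u^{1-\gamma}L^\gamma$, (ii) $v=\theta u+(1-\theta)L$, and (iii) the inversion $v(y)=1/u(1/y)$ used to encode the backward step. For (ii) the key is Jensen's inequality for $t\mapsto t\log t$; no second derivatives appear anywhere. Once the invariant holds for the terminal $v^{(k+n)}$, the conclusion $\partial_\pm\alpha\ge 0$ is immediate.
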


\begin{proof}
Taking either the left or right derivative of $\alpha(L)$ in (\ref{eq:AlphaLessThan1}):
\begin{equation}
\partial \alpha(L) = \frac{
	\partial v(L) \cdot \frac{\log L}{v(L)} - \frac{1}{L}\log v(L)
}{\log^2 L}.
\end{equation}
Thus it suffices to show that (omitting the iteration index superscript from $v$)
\begin{equation}\label{eq:ConvergencePropertyMaintainedInRS}
\partial_- v(L) \geq \frac{v(L)\log v(L)}{L\log L}, \quad \partial_+ v(L) \geq \frac{v(L)\log v(L)}{L\log L}.
\end{equation}
We prove by induction, showing that the property in (\ref{eq:ConvergencePropertyMaintainedInRS}) is maintained through the recursion in (\ref{eq:IterationU}) and (\ref{eq:IterationV}). The base case is established by noting that if $v^{(k-1)}=1$, then (\ref{eq:ConvergencePropertyMaintainedInRS}) holds. Now suppose that the property (\ref{eq:ConvergencePropertyMaintainedInRS}) is held for some iteration $(k+l-1)$ with upper bound $v^{(k+l-1)}(L)$. Then let
\[
\tilde{u}_j^{(k+l)}(L) = \theta_j^{(k+l-1)} v^{(k-l-1)}(L) + (1-\theta_j^{(k+l-1)})L.
\]
By the second result in lemma \ref{lem:DerivativeProperty}, $\tilde{u}_j^{(k+l)}$ will satisfy the property (\ref{eq:ConvergencePropertyMaintainedInRS}) for each $j$. The pointwise maximum in (\ref{eq:IterationU}):
\[
\tilde{u}^{(k+l)}(L) = \max_j \tilde{u}_j^{(k+l)}(L)
\]
will introduce a finite number of points where  $\tilde{u}^{(k+l)}(L)$ is continuous but the derivative is discontinuous. However, the one-sided derivative at any of these points will satisfy (\ref{eq:ConvergencePropertyMaintainedInRS}) since each component in the pointwise maximum satisfied it. Finally, the result of (\ref{eq:IterationU}) is:
\[
u^{(k+l)}(L) = [\tilde{u}^{(k+l)}(L)]^{1-\gamma} L^{\gamma}.
\]
The first result in lemma \ref{lem:DerivativeProperty} shows that this will satisfy (\ref{eq:ConvergencePropertyMaintainedInRS}).

Now we need to prove that the other half-iteration (\ref{eq:IterationV}) maintains the property (\ref{eq:ConvergencePropertyMaintainedInRS}). First, note that if $\bar{L}=1/L$ and $\bar{u}^{(k+l)}=1/u^{(k+l)}$ then the final result in lemma \ref{lem:DerivativeProperty} shows that if $u^{(k+l)}(L)$ satisfies (\ref{eq:ConvergencePropertyMaintainedInRS}), then:
\begin{equation}\label{eq:ConvergencePropertyInverse}
\begin{split}
\partial_- \bar{u}^{(k+l)}(\bar{L}) &\geq \frac{\bar{u}^{(k+l)}(\bar{L})\log \bar{u}^{(k+l)}(\bar{L})}{\bar{L}\log \bar{L}}, \\
\partial_+ \bar{u}^{(k+l)}(\bar{L}) &\geq \frac{\bar{u}^{(k+l)}(\bar{L})\log \bar{u}^{(k+l)}(\bar{L})}{\bar{L}\log \bar{L}}.
\end{split}
\end{equation}
Subsequently, if:
\[
\tilde{v}_i^{(k+l)}(\bar{L}) = \omega_i^{(k+l)} \bar{u}^{(k-l)}(\bar{L}) + (1-\omega_i^{(k+l)})\bar{L},
\]
then the second result in lemma \ref{lem:DerivativeProperty} establishes that $\tilde{v}_i^{(k+l)}$ will satisfy the property (\ref{eq:ConvergencePropertyInverse}) for each $i$. As with the pointwise maximum in the the previous case, the pointwise minimum (\ref{eq:IterationV}):
\[
\tilde{v}^{(k+l)}(\bar{L}) = \min_i \tilde{v}_i^{(k+l)}(\bar{L})
\]
will introduce a finite number of points where  $\tilde{v}^{(k+l)}(\bar{L})$ is continuous but the derivative is discontinuous. However, the one-sided derivative at any of these points will satisfy (\ref{eq:ConvergencePropertyInverse}). The first result in lemma \ref{lem:DerivativeProperty} shows that the composition:
\[
\bar{v}^{(k+l)}(\bar{L}) = [\tilde{v}^{(k+l)}(\bar{L})]^{1-\gamma} \bar{L}^{\gamma}
\]
will also satisfy (\ref{eq:ConvergencePropertyInverse}). Finally, the result of (\ref{eq:IterationV}) is:
\[
v^{(k+l)}(L) = 1/\bar{v}^{(k+l)}(1/L)
\]
which will satisfy (\ref{eq:ConvergencePropertyInverse}) by the final result in lemma \ref{lem:DerivativeProperty}.
\end{proof}

\begin{lemma}\label{lem:DerivativeProperty}
Suppose that $\frac{\mathrm{d} u(x)}{\mathrm{d} x} \geq \frac{u(x)\log u(x)}{x\log x}$. Then if $v(x)$ is given by any of the following:
\begin{enumerate}
\item $v(x) = u(x)^{1-\gamma}x^\gamma$,
\item $v(x) = \theta u(x) + (1-\theta) x$
\end{enumerate}
then $\frac{\mathrm{d} v(x)}{\mathrm{d} x} \geq \frac{v(x)\log v(x)}{x\log x}$. Finally, if $y=1/x$ and
\[
v(y) = 1/u(x) = 1/u(1/y),
\]
then $\frac{\mathrm{d} v(y)}{\mathrm{d} y} \geq \frac{v(y)\log v(y)}{y\log y}$.
\end{lemma}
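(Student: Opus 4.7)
The plan is to introduce the auxiliary quantity $\Phi(f, x) := f'(x) \cdot x \log x - f(x) \log f(x)$. Throughout the surrounding induction we have $L > 1$, so $x \log x > 0$, and the target inequality $f'(x) \geq f(x)\log f(x)/(x\log x)$ is equivalent to $\Phi(f, x) \geq 0$. I would treat each of the three constructions in turn and verify $\Phi(v, x) \geq 0$.

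For Case 1 ($v = u^{1-\gamma} x^\gamma$), I would take logarithmic derivatives to obtain $v'/v = (1-\gamma) u'/u + \gamma/x$. The hypothesis rearranges to $u'/u \geq \log u/(x \log x)$, and the identity $1/x = \log x/(x \log x)$ shows the function $x$ itself satisfies the property with equality. Taking the convex combination with weights $1-\gamma$ and $\gamma$, together with $\log v = (1-\gamma)\log u + \gamma \log x$, yields $v'/v \geq \log v/(x \log x)$, which gives the claim after multiplying by $v > 0$.

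Case 2 ($v = \theta u + (1-\theta) x$) is where the real content sits. I would start from $v' = \theta u' + (1-\theta)$ and estimate
\[
\Phi(v,x) = \theta u' x\log x + (1-\theta) x \log x - v \log v \geq \theta u \log u + (1-\theta) x \log x - v \log v,
\]
using the hypothesis together with $x \log x > 0$ to preserve the direction of the inequality when multiplying through. The remaining step is $\theta u \log u + (1-\theta) x \log x \geq v \log v$, which is exactly Jensen's inequality applied to the strictly convex function $z \mapsto z \log z$ at the convex combination $v = \theta u + (1-\theta) x$.

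Case 3 (the inversion $v(y) = 1/u(1/y)$, $y = 1/x$) is a direct chain-rule calculation: $dv/dy = u'(x) x^2/u^2$, while $\log v = -\log u$ and $\log y = -\log x$, so the two sign flips cancel and the target inequality $dv/dy \geq v \log v/(y \log y)$ reduces algebraically to $u'(x) \geq u \log u/(x \log x)$, which is the hypothesis. The main obstacle across the whole lemma is Case 2: one must recognise that monotonicity of the bound under linear interpolation comes from convexity of $z \log z$, and carefully track that $x \log x > 0$ so that multiplying the inductive hypothesis by it does not reverse the inequality.
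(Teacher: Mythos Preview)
Your argument is essentially the same as the paper's: Case~1 via the convex combination of $u'/u$ and $1/x$ (the paper expands $v'$ directly rather than taking logarithmic derivatives, but the content is identical), Case~2 via convexity of $z\mapsto z\log z$, and Case~3 via the chain-rule reduction with the two sign cancellations. Your explicit remark that $x\log x>0$ is needed in Case~2 is precisely the condition the paper's final inequality in that case also tacitly relies on.
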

\begin{proof}
For the first case:
\begin{align*}
\frac{\mathrm{d} v(x)}{\mathrm{d} x} &= (1-\gamma)\frac{\mathrm{d} u(x)}{\mathrm{d} x} u(x)^{-\gamma}x^\gamma + \gamma u(x)^{1-\gamma}x^{\gamma-1}  \\
&\geq (1-\gamma)\frac{u(x)\log u(x)}{x\log x}u(x)^{-\gamma}x^\gamma + \gamma u(x)^{1-\gamma}x^{\gamma-1}  \\
&= \frac{u(x)^{1-\gamma} x^\gamma [ (1-\gamma)\log u(x) + \gamma \log x  ]}{x\log x} \\
&= \frac{v(x)\log v(x)}{x\log x}.
\end{align*}
For the second case:
\begin{align*}
\frac{\mathrm{d} v(x)}{\mathrm{d} x} &= \theta \frac{\mathrm{d} u(x)}{\mathrm{d} x} + (1-\theta)  \\
&\geq \theta\frac{u(x)\log u(x)}{x\log x} + (1-\theta) \\
&= \frac{\theta u(x)\log u(x) + (1-\theta)x\log x}{x\log x} \\
&\geq \frac{v(x)\log v(x)}{x\log x},
\end{align*}
where the final inequality is the result of convexity of $x\log x$. For the final result, let $f(x)=1/u(x)$ and $g(y)=1/y$ and apply the chain rule:
\begin{align*}
\frac{\mathrm{d} v(y)}{\mathrm{d} y} &= f'(g(y)) \times g'(y) \\
&= -\frac{u'(1/y)}{u(1/y)^2} \times -\frac{1}{y^2} \\
&\geq \frac{1}{u(1/y)^2 y^2} \times \frac{u(1/y)\log u(1/y)}{(1/y)\log (1/y)}  \\
&= \frac{[1/u(1/y)]\log[1/u(1/y)]}{y\log y} = \frac{v(y)\log v(y)}{y\log y}.
\end{align*}

\end{proof}

\section{Proof of sequential modification}
\label{app:SeqMod}
This section proves theorem \ref{th:SeqModification}, i.e., that the solution of the problem in (\ref{eq:ConvexBFE}) is the same as the solution of the modified problem of the same form, changing $\gamma_{s}$ to $\bar{\gamma}_{s}=\gamma_{s}+\Delta\gamma_{s}$, $\beta_{s}$ to $\bar{\beta}_{s}=\beta_{s}+\Delta\beta_{s}$, and $\psi^i_{s}(\boldsymbol{x}^i,a_{s}^i)$ as described in  (\ref{eq:SeqMod}).

\begin{IEEEproof}[Proof of theorem \ref{th:SeqModification}]
Let $F^{\gamma,\beta}_B$ be the original problem (in (\ref{eq:ConvexBFE})) and $\bar{F}^{\bar{\gamma},\bar{\beta}}_B$ be the modified problem. Note that the modifying term in (\ref{eq:SeqMod}) depends only on $a_{s}^i$, so we can equivalently implement the modification by retaining the unmodified $\psi_{s}^i(\boldsymbol{x}^i,a_{s}^i)$ and incorporating an additive term
\begin{equation}
-\mathbb{E}[\phi_{s}^i(a_{s}^i)] = -\sum_{j=0}^{m_{s}}q_{s}^{i,j} \phi_{s}^i(j),
\end{equation}
where
\begin{equation}\label{eq:SeqMod2}
\phi_{s}^i(j) = \Delta\gamma_{s}[1 + \log(1-q_{s}^{i,j})] 
- \Delta\beta_{s}[1+\log q_{s}^{0,j}].
\end{equation}

Consider the KKT conditions relating to $[q_{s}^{i,j}]$, since all modifications relate to these variables. The conditions in the original problem are:
\begin{gather}
\nu_{s}^{i,j} + \rho_{s}^i + \sigma_{s}^j + \gamma_{s}\log(1-q_{s}^{i,j}) + \gamma_{s} = 0, \label{eq:SeqModKKT1a} \\
\nu_{s}^{0,j} + \sigma_{s}^j + \beta_{s}\log q_{s}^{0,j} + \beta_{s} = 0, \label{eq:SeqModKKT1b} 
\end{gather}
where $\nu_{s}^{i,j}$ is the dual variable for the constraint in (\ref{eq:Constr_qtij}), $\rho_{s}^i$ is the dual variable for the constraint $\sum_{j=0}^{m_{s}} q_{s}^{i,j} = 1$, and $\sigma_{s}^j$ is the dual variable for (\ref{eq:Constr_qti_SumToOne}). For the modified problem, the same two KKT conditions are:
\begin{gather}
\bar{\nu}_{s}^{i,j} + \bar{\rho}_{s}^i + \bar{\sigma}_{s}^j + \bar{\gamma}_{s}\log(1-\bar{q}_{s}^{i,j}) + \bar{\gamma_{s}} - \phi_{s}^i(j) = 0, \label{eq:SeqModKKT2a} \\
\bar{\nu}_{s}^{0,j} + \bar{\sigma}_{s}^j + \bar{\beta}_{s}\log\bar{q}_{s}^{0,j} + \bar{\beta}_{s} = 0. \label{eq:SeqModKKT2b} 
\end{gather}
Substituting in (\ref{eq:SeqMod2}) and expanding $\bar{\gamma}_{s}$ and $\bar{\beta}_{s}$, we find:
\begin{multline}
\bar{\nu}_{s}^{i,j} + \bar{\rho}_{s}^i + \bar{\sigma}_{s}^j + [\gamma_{s} + \Delta\gamma_{s}]\log(1-\bar{q}_{s}^{i,j}) + \gamma_{s} + \Delta\gamma_{s}  \\
- \Delta\gamma_{s}[1 + \log(1-q_{s}^{i,j})] 
+ \Delta\beta_{s}[1+\log q_{s}^{0,j}]
= 0 ,
\end{multline}
Subsequently, by setting $\bar{q}_{s}^{i,j}=q_{s}^{i,j}$, $\bar{q}_{s}^{0,j}=q_{s}^{0,j}$, $\bar{\nu}_{s}^{i,j}=\nu_{s}^{i,j}$, $\bar{\rho}_{s}^i=\rho_{s}^i$ and
\begin{equation}
\bar{\sigma}^j_{s} = \sigma^j_{s} - \Delta\beta_{s}[1+\log q_{s}^{0,j}],
\end{equation}
we find a primal-dual solution (with identical primal values $[q_{s}^{i,j}]$) that satisfies the KKT conditions for the modified problem, providing a certificate of optimality. 
\end{IEEEproof}

}{
% No appendices if not extended version
\vspace*{-.8cm}

}

\begin{IEEEbiography}[{\includegraphics[width=1in,height=1.25in,clip,keepaspectratio]{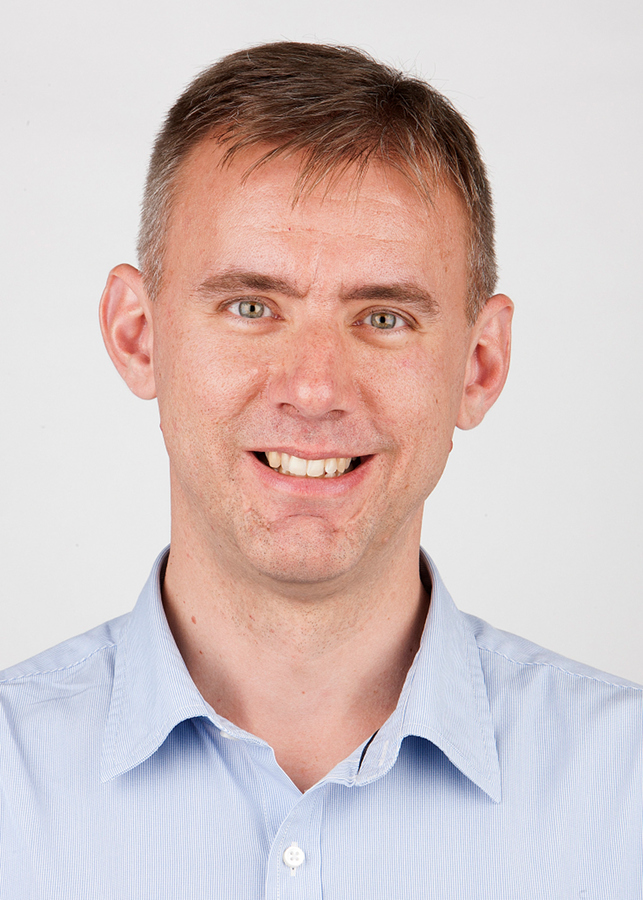}}]{Jason L.\ Williams} (S'01--M'07--SM'16) received degrees of BE(Electronics)/BInfTech from Queensland University of Technology in 1999, MSEE from the United States Air Force Institute of Technology in 2003, and PhD from Massachusetts Institute of Technology in 2007. He worked for several years as an engineering officer in the Royal Australian Air Force, before joining Australia's Defence Science and Technology Group in 2007. He is also an adjunct associate professor at Queensland University of Technology. His research interests include target tracking, sensor resource management, Markov random fields and convex optimisation. 
\end{IEEEbiography}

\vspace{-1cm}

\begin{IEEEbiography}[{\includegraphics[width=1in,height=1.25in,clip,keepaspectratio]{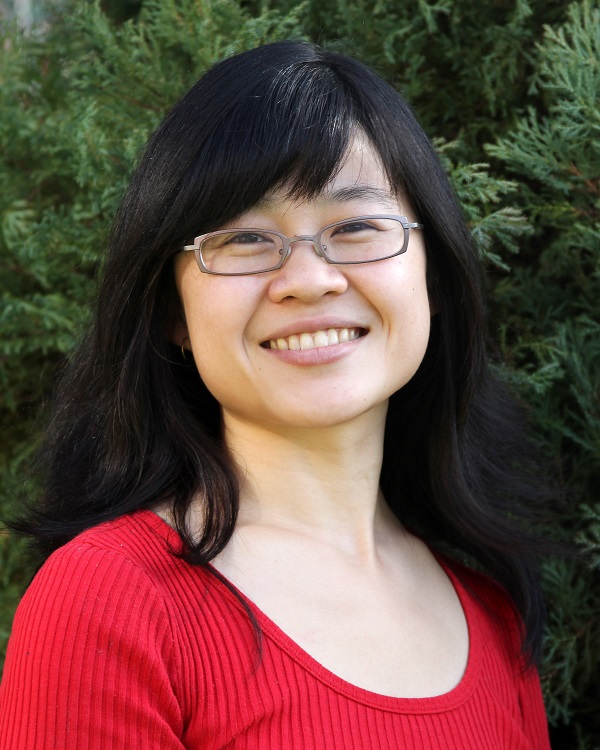}}]{Roslyn A.\ Lau} (S'14) received the degrees of BE (Computer Systems)/BMa\&CS (Statistics) in 2005, and MS (Signal Processing) in 2009, all from the University of Adelaide, Adelaide, Australia.	She is currently a PhD candidate at the Australian National University. She is also a scientist at the Defence Science and Technology Group, Australia. Her research interests include target tracking, probabilistic graphical models, and variational inference.
\end{IEEEbiography}

\end{document}